\def\eqref#1{equation~\ref{#1}}
\def\1{\bm{1}}
\DeclareMathAlphabet{\mathsfit}{\encodingdefault}{\sfdefault}{m}{sl}
\SetMathAlphabet{\mathsfit}{bold}{\encodingdefault}{\sfdefault}{bx}{n}
\theoremstyle{plain}
\newtheorem{theorem}{Theorem}[section]
\newtheorem{proposition}[theorem]{Proposition}
\newtheorem{lemma}[theorem]{Lemma}
\newtheorem{corollary}[theorem]{Corollary}
\theoremstyle{definition}
\newtheorem{definition}[theorem]{Definition}
\theoremstyle{remark}
\newtheorem*{remark}{Remark}
\renewcommand{\le}{\leqslant}
\renewcommand{\ge}{\geqslant}
\renewcommand{\geq}{\geqslant}
\newcommand{\id}{\operatorname{id}}
\newcommand{\ReLU}{\operatorname{ReLU}}
\newcommand{\Tanh}{\operatorname{Tanh}}
\icmltitlerunning{Equivariant Polynomial Functional Networks}
\begin{document}

\twocolumn[
\icmltitle{Equivariant Polynomial Functional Networks}



\icmlsetsymbol{equal}{*}

\begin{icmlauthorlist}
\icmlauthor{Thieu N. Vo}{equal,nus}
\icmlauthor{Viet-Hoang Tran}{equal,nus}
\icmlauthor{Tho Tran Huu}{equal,nus}
\icmlauthor{An Nguyen The}{fpt}
\icmlauthor{Thanh Tran}{vin}
\icmlauthor{Minh-Khoi Nguyen-Nhat}{fpt}
\icmlauthor{Duy-Tung Pham}{fpt}
\icmlauthor{Tan M. Nguyen}{nus}
\end{icmlauthorlist}

\icmlaffiliation{nus}{National University of Singapore}
\icmlaffiliation{fpt}{FPT Software AI Center, Vietnam}
\icmlaffiliation{vin}{VinUniversity, Vietnam}

\icmlcorrespondingauthor{Viet-Hoang Tran}{hoang.tranviet@u.nus.edu }

\icmlkeywords{Machine Learning, ICML}

\vskip 0.3in
]



\printAffiliationsAndNotice{\icmlEqualContribution} 

\begin{abstract}
A neural functional network (NFN) is a specialized type of neural network designed to process and learn from entire neural networks as input data. 
Recent NFNs have been proposed with permutation and scaling equivariance based on either graph-based message-passing mechanisms or parameter-sharing mechanisms. However, the challenge of designing a permutation and scaling equivariant NFN that maintains low memory consumption and running time while preserving expressivity remains unresolved. In this paper, we propose a novel solution with the development of MAGEP-NFN (\textbf{M}onomial m\textbf{A}trix \textbf{G}roup \textbf{E}quivariant \textbf{P}olynomial \textbf{NFN}). Our approach follows the parameter-sharing mechanism but differs from previous works by constructing a nonlinear equivariant layer represented as a polynomial in the input weights. This polynomial formulation enables us to incorporate additional relationships between weights from different input hidden layers, enhancing the model's expressivity while keeping memory consumption and running time low, thereby addressing the aforementioned challenge. We provide empirical evidence demonstrating that MAGEP-NFN achieves competitive performance and efficiency compared to existing baselines.
\end{abstract}

\section{Introduction}


Neural functional networks (NFNs) serve as specialized architectures that operate on fundamental components of deep neural networks, including weights, gradients, and sparsity masks, by treating them as input data \citep{zhou2024permutation}. These networks have been utilized across various domains of machine learning, contributing to applications such as enhancing training efficiency through learnable optimizers \citep{bengio2013optimization,runarsson2000evolution,andrychowicz2016learning,metz2022velo}, capturing features from implicit data representations \citep{stanley2007compositional,mildenhall2021nerf,runarsson2000evolution}, modifying network parameters for targeted adjustments \citep{sinitsin2020editable,de2021editing,mitchell2021fast}, assessing policies in reinforcement learning \citep{harb2020policy}, and facilitating Bayesian inference by interpreting neural networks as sources of evidence \citep{sokota2021fine}.

A fundamental aspect of NFNs is their ability to respect the inherent symmetries present in the weight space of the input neural networks. 
In the case of a multilayer perceptron (MLP), the weight space exhibits two primary forms of symmetry: permutation symmetry and scaling symmetry. 
Permutation symmetries arise from the structure of the network itself, since neurons within a hidden layer have no intrinsic ordering.
On the other hand, scaling symmetries are induced by the activation functions. 
For networks with the ReLU activation function, multiplying a neuron’s bias and all its incoming weights by the same positive scalar scales its output proportionally, leading to a scaling-type symmetry \citep{bui2020functional,neyshabur2015path,badrinarayanan2015understanding}. 
Similarly, for sine and tanh activations, flipping the sign of both the bias and all incoming weights of a neuron inverts the sign of its output, introducing an alternative form of scaling symmetry \citep{chen1993geometry,fefferman1993recovering,kuurkova1994functionally}. 
These structural symmetries define equivalence classes in the weight space, and incorporating them into the design of NFNs ensures that learned representations remain consistent and invariant to these transformations.

Recent methods have focused on creating permutation equivariant NFNs, such as \citep{navon2023equivariant,zhou2024permutation,kofinas2024graph,zhou2024neural}. These methods leverage permutation equivariance to respect symmetries arising from neuron reordering within hidden layers.
NFNs that are equivariant to both permutations and scaling or sign-flipping have been introduced in \citep{kalogeropoulos2024scale} using a graph-based message-passing mechanism and in \citep{tran2024monomial} with a parameter sharing mechanism. However, similar to other graph-based neural functional networks, treating the entire input neural network as a graph and utilizing graph neural networks causes the graph-based equivariant NFNs in \citep{kalogeropoulos2024scale} to have very high memory consumption and running time. In contrast, the NFNs built upon equivariant linear layers using the parameter sharing mechanism in \citep{tran2024monomial} exhibit much lower memory consumption and running time. Nevertheless, the equivariant linear layers introduced in \citep{tran2024monomial} possess weak expressive properties, as the weights of the input hidden layers are updated solely by the corresponding weights of the same input hidden layers. The challenge of designing an equivariant layer based on the parameter-sharing mechanism that maintains both lower memory consumption and running time while preserving expressivity remains unresolved.

\subsection{Contribution} 
This paper aims to develop a novel NFN that is equivariant to both permutations and scaling/sign-flipping symmetries, called MAGEP-NFN (\textbf{M}onomial m\textbf{A}trix \textbf{G}roup \textbf{E}quivariant \textbf{P}olynomial \textbf{NFN}). We follow the parameter-sharing mechanism as described in \citep{tran2024monomial}; however, unlike \citep{tran2024monomial}, we construct a nonlinear equivariant layer, which is represented as a polynomial in the input weights. This polynomial formulation enables us to incorporate additional relationships between weights from different input hidden layers, thereby addressing the challenges posed in \citep{tran2024monomial} and enhancing the expressivity of MAGEP-NFN.

However, determining equivariant and invariant layers among generic polynomials in the input weights is challenging due to two main factors: the difficulty in identifying polynomial orbits under group actions and the high computational cost of working with generic polynomials. To overcome these issues, we introduce a specialized class of polynomials that remain ``stable" under permutations and scaling.
Restricting equivariant and invariant layers to linear combinations of these terms ensures computational efficiency and reduced memory consumption.

In particular, our contribution is as follows:

\begin{enumerate}[leftmargin=25pt]
    \item We introduce a class of polynomials in the input weights, referred to as \emph{stable polynomial terms}, which remain stable under the group action of the weight space. 
    In addition, we conduct a comprehensive study of the linear independence of stable polynomial terms. 

    \item We characterize all equivariant and invariant layers among the linear combination of the stable polynomial terms. These layers are polynomials of degree at most $L+1$ where $L$ is the number of layers of the input neural networks.

    \item Build on top of the equivariant and invariant polynomial layers, we design MAGEP-NFN, a family of monomial matrix equivariant NFNs based on the parameter-sharing mechanism that maintains both lower memory consumption and running time while preserving expressivity. 
\end{enumerate}

We evaluate MAGEP-NFNs on three tasks: predicting CNN generalization from weights using Small CNN Zoo \citep{unterthiner2020predicting}, weight space style editing, and classifying INRs using INRs data \citep{zhou2024permutation}. 
Experimental results show that our model achieves competitive performance and efficiency compared to existing baselines. 


\subsection{Notations}
Let \( n \) be a positive integer.
We denote \( \mathcal{P}_n \) as the set of all permutation matrices, and \( \mathcal{D}_n \) as the set of diagonal matrices in \( \operatorname{GL}_n(\mathbb{R}) \).
We also denote \( \mathcal{M}_n \) as the set of monomial matrices in \( \operatorname{GL}_n(\mathbb{R}) \), where a monomial matrix is a product of a diagonal matrix and a permutation matrix.
These sets are subgroups of \( \operatorname{GL}_n(\mathbb{R}) \).  
In addition, we use \( \mathcal{M}_n^{>0} \) and \( \mathcal{M}_n^{\pm} \) to denote the set of monomial matrices whose nonzero entries are positive numbers and $\pm 1$, respectively.

For any permutation matrix \( P \in \mathcal{P}_n \), there exists a unique permutation \( \pi \in \mathcal{S}_n \) such that \( P \) is obtained by permuting the columns of the identity matrix \( I_n \) according to \( \pi \). We denote this as \( P := P_{\pi} \) and refer to it as the \textit{permutation matrix} associated with \( \pi \), where \( \mathcal{S}_n \) is the symmetric group of all permutations on \( \{1, 2, \dots, n\} \).

\paragraph{Organization.}
We reformulate the definitions of weight spaces for MLPs and CNNs, as well as the action of monomial matrix groups on these weight spaces. In Section~\ref{sec:equivariant_net}, we construct polynomial equivariant and invariant layers, which serve as the main building blocks for our MAGEP-NFNs. Several experiments are conducted in Section~\ref{sec:experimental_results} to verify the applicability and efficiency of our models in comparison with previous ones in the literature. 
Some related works will be recalled in Section~\ref{sec:related_work}.
The paper concludes with a summary in Section~\ref{sec:conclusion}.

\section{Background: Weight Spaces and Their Symmetries}\label{sec:weight_space_action}

Let $\mathcal{U}, \mathcal{V}$ be two sets and assume that a group $G$ acts on them.
A function $f \colon \mathcal{U} \to \mathcal{V}$ is called $G$-equivariant if $f(g \cdot x) = g \cdot f(x)$ for all $x \in \mathcal{U}$ and $g \in G$. 
In case $G$ acts trivially on $\mathcal{Y}$, the function $f$ is called $G$-invariant. 
In the context of this paper, $\mathcal{U}$ and $\mathcal{V}$ are weight spaces of a fixed neural network architecture, while $\mathcal{G}$ is a direct product of the groups of monomial matrices.

From now on, we will fix the activation $\sigma$ to be the rectified linear unit $\sigma = \operatorname{ReLU}$. The case when $\sigma$ is another typical activation, such as semilinear (e.g., $\operatorname{LeakyReLU}$) or odd (e.g., $\operatorname{sin}, \operatorname{tanh}$), can be derived similarly.

Following \citep{tran2024monomial}, we write the weight space of an MLP or CNN with $L$ layers and $n_i$ channels at $i$-th layer in the general form $\mathcal{U} = \mathcal{W} \times \mathcal{B}$, where:

\begin{equation}
    \begin{alignedat}{5}
\label{eq:weight_space_W+B}
    \mathcal{W} &= \mathbb{R}^{w_L \times n_L \times n_{L-1}} &&\times \ldots &&\times \hspace{5pt} \mathbb{R}^{w_2 \times n_2 \times n_1} &&\times \hspace{5pt} &&\mathbb{R}^{w_1 \times n_1 \times n_{0}},\\ 
     \mathcal{B} &= \mathbb{R}^{b_L \times n_L \times 1} &&\times  \ldots &&\times \hspace{5pt} \mathbb{R}^{b_2 \times n_2 \times 1} &&\times \hspace{5pt} &&\mathbb{R}^{b_1 \times n_1 \times 1}. 
\end{alignedat}
\end{equation}

Here, 
    $n_i$ is the number of channels at the $i$-th layer, 
    in particular, $n_0$ and $n_L$ are the number of channels of input and output; 
    $w_i$ is the dimension of weights and $b_i$ is the dimension of the biases in each channel at the $i$-th layer. 

Each element $U$ of $\mathcal{U}$ is written as $U = ([W], [b])$, with the weights
\begin{align}
    [W] = \left( [W]^{(L)}, \ldots, [W]^{(1)} \right) \in \mathcal{W},
\end{align}
and biases
\begin{align}
    [b] = \left( [b]^{(L)}, \ldots, [b]^{(1)} \right) \in \mathcal{B}.
\end{align}
The square brackets will be convenient in the next section when we determine polynomials in the entries of $U$.

The weight space $\mathcal{U}$ exhibits two primary forms of symmetry: permutation symmetry and scaling symmetry. 
Permutation symmetries arise from the structure of the network itself, since neurons within a hidden layer have no intrinsic ordering.
On the other hand, for networks with the ReLU activation function, multiplying a neuron’s bias and all its incoming weights by the same positive scalar scales its output proportionally, leading to a scaling-type symmetry.
Based on the above observation, we define the group $G$ of the form
\begin{align}\label{eq:G_U}
    G := \{I_{n_L}\} \times \mathcal{M}^{>0}_{n_{L-1}} \times \ldots \times \mathcal{M}^{>0}_{n_{1}} \times  \{I_{n_0}\},
\end{align}
where $I_n$ is the identity matrix of size $n \times n$, and $\mathcal{M}_n^{>0}$ is the set of monomial matrices whose nonzero entries are positive numbers.

Each element $g \in G$ has the form
$$g=\left( g^{(L)},\ldots,g^{(0)} \right),$$
where $g^{(i)} = D^{(i)} \cdot P_{\pi_i}$ for some diagonal matrix $D^{(i)} = \operatorname{diag}(d^{(i)}_1,\ldots,d^{(i)}_{n_i})$ in $\mathcal{D}_{n_i}$ and permutation $\pi_i \in \mathcal{S}_{n_i}$.
The action of $G$ on $\mathcal{U}$ is defined formally as $$(g,U) \mapsto gU = ([gW],[gb]),$$ where:
\begin{align}
    \left[gW\right]^{(i)} &\coloneqq \left(g^{(i)} \right) \cdot [W]^{(i)} \cdot \left(g^{(i-1)} \right)^{-1} ~ \nonumber \\
    &\text{ and } ~ \left[gb\right]^{(i)} \coloneqq \left(g^{(i)} \right) \cdot [b]^{(i)},
\end{align}    
or equivalently,
\begin{align}
    [gW]^{(i)}_{jk} &\coloneqq \frac{d^{(i)}_j}{d^{(i-1)}_k} \cdot [W]^{(i)}_{\pi_i^{-1}(j)\pi_{i-1}^{-1}(k)} \nonumber \\
    &\text{ and } ~   [gb]^{(i)}_j \coloneqq d^{(i)}_j \cdot [b]^{(i)}_{\pi_i^{-1}(j)}.
\end{align}


With notation as above, it is well-known that the function $f=f(\cdot \,;\,U,\sigma)$ be an MLP or CNN given in Equation~(\ref{eq:weight_space_W+B}) with the weight space $U \in \mathcal{U}$ and an activation $\sigma = \operatorname{ReLU}$ will be $G$-invariant under the action of $G$, i.e.
        \begin{equation}
            f(\mathbf{x} ~;~ U, \sigma) = f(\mathbf{x} ~; ~ gU, \sigma)
        \end{equation}
    for all $g \in G,\, U \in \mathcal{U}$ and $\mathbf{x} \in \mathbb{R}^{n_0}$.

\section{Equivariant and Invariant Polynomial Functional Networks}\label{sec:equivariant_net}

In this section, we construct our MAGEP-NFNs, a new class of NFNs that exhibit equivariance to the group \( G \) of monomial matrices with positive nonzero entries, as described in the previous section.  
The core components of MAGEP-NFNs are invariant and equivariant polynomial layers, which will be detailed in Subsection~\ref{sec:equivariant_layer}. 
At the heart of these layers are the stable polynomial terms, which play a crucial role in ensuring low memory consumption and computational efficiency of MAGEP-NFNs. These terms will be formally introduced in Subsection~\ref{sec:stable_term}.


\subsection{Stable polynomial terms}\label{sec:stable_term}

We follow the parameter-sharing mechanism used in \citep{tran2024monomial} for constructing equivariant and invariant layers from $\mathcal{U}$ to ensure low memory consumption and computational efficiency in our model. Unlike the linear layers utilized in \citep{tran2024monomial}, we employ polynomial layers. This choice allows us to capture additional relationships between weights from different hidden layers of the input network, thereby enhancing the expressivity of our model. 

However, determining equivariant and invariant layers among generic polynomials in the input weights presents a significant challenge for two key reasons.  
First, identifying the orbits of a generic polynomial under the group action is difficult, as such polynomials lack an inherent structured compatibility with the symmetry group of the weight space. Second, computations involving equivariant and invariant layers constructed from generic polynomials are highly inefficient in both memory usage and computational cost. 
To address these challenges, we introduce a specialized class of polynomials, referred to as \emph{stable polynomial terms}. Intuitively, a \emph{stable polynomial term} is a polynomial in the entries of \( U \in \mathcal{U} \) that remains ``stable" under the action of \( G \) (see Definition~\ref{def:stable_terms}). By restricting equivariant and invariant polynomial layers to linear combinations of these stable polynomial terms, we ensure both computational efficiency and reduced memory consumption.



Consider the case where the weight spaces have the same number of dimensions across all channels, which means $w_i  = b_i = d$ for all $i$. 

\begin{definition}[Stable polynomial terms]\label{def:stable_terms}
    Assume that $U = ([W],[b])$ is an element of the weight space $\mathcal{U}$ with weights $[W] = \left([W]^{(L)}, \ldots, [W]^{(1)}\right)$ and biases 
    $[b] = \left([b]^{(L)}, \ldots, [b]^{(1)}\right)$.
    For each $L \ge s > t \ge 0$, we define the following matrices:
    \begin{equation}
    \renewcommand{\arraystretch}{0.5}
    \begin{alignedat}{2}
    [W]^{(s,t)} &\coloneqq [W]^{(s)} \cdot [W]^{(s-1)} \cdot \ldots \cdot [W]^{(t+1)}, \\
    \left[Wb\right]^{(s,t)(t)} &\coloneqq [W]^{(s,t)} \cdot \left[b\right]^{(t)}.
    \end{alignedat}
    \end{equation}
    In addition, for each indices $s$ and $t$ with $L \geq s, t \geq 0$, and matrices $\Psi^{(s)(L,t)} \in \mathbb{R}^{1 \times n_{L}}$ and $\Psi^{(s,0)(L,t)} \in \mathbb{R}^{n_0 \times n_L}$, we also define the following matrices:
    \begin{equation}
    \begin{alignedat}{2}
    \left[bW\right]^{(s)(L,t)} &\coloneqq [b]^{(s)} \cdot \Psi^{(s)(L,t)} \cdot [W]^{(L,t)}, \\
    \left[WW\right]^{(s,0)(L,t)}  &\coloneqq [W]^{(s,0)} \cdot \Psi^{(s,0)(L,t)} \cdot [W]^{(L,t)}.
    \end{alignedat}
    \end{equation}
    The entries of the matrices $[W]^{(s,t)}$, $\left[Wb\right]^{(s,t)(t)}$, $\left[bW\right]^{(s)(L,t)}$ and $\left[WW\right]^{(s,0)(L,t)}$ defined above are called \emph{stable polynomial terms} of $U$ under the action of $G$.
    Note that we omit the $\Psi^{(-)}$'s in $[bW], [WW]$ for simplicity, they are parameters, and are different for each $s,t$. The denotations are straight-forward and reasonable.
\end{definition}

In the above definition, we use the notation $[W]$ and $[WW]$ to denote products of the weight matrices $[W]^{(i)}$ with the appropriate index $i$. The notation $[Wb]$ indicates that this is a product of several weight matrices $[W]^{(i)}$ and a bias vector $[b]^{(j)}$, with appropriate indices $i$ and $j$. 
For the indices, we use the notation $(s,t)$ to signify that the considered product contains weight matrices with indices ranging from $s$ down to $t+1$. When the index has two components, for example $[Wb]^{(s,t)(t)}$, the first component $(s,t)$ specifies the range of indices for $[W]$, while the second component $(t)$ indicates the index of the bias vector $[b]$. 
Specifically, the last two terms $\left[bW\right]^{(s)(L,t)}$ and $\left[WW\right]^{(s,0)(L,t)}$ contain the matrices $\Psi^{-}$ to multiply two matrices of different sizes from the left and the right.


\begin{proposition}[Stable polynomial terms as generalization of weights and biases]\label{prop:stable_terms-main}
    With notation as above, then for all $L \ge s > t > r \ge 0$, we have
    \begin{align*}
    \renewcommand{\arraystretch}{1.5}
    { \everymath={\displaystyle}
\begin{array}{lll}
        [W]^{(s,s-1)} &= [W]^{(s)}  &\in \mathbb{R}^{d \times n_s \times n_{s-1}},\\
        \left[W\right]^{(s,t)} \cdot [W]^{(t,r)} &= [W]^{(s,r)}  &\in \mathbb{R}^{d \times n_s \times n_{r}}.\\
        \left[bW \right]^{(s)(s,t)} \cdot [W]^{(t,r)} &= [bW]^{(s,r)}  &\in \mathbb{R}^{d \times n_s \times n_{r}},\\
        \left[W \right]^{(s,t)} \cdot [Wb]^{(t,r)} &= [Wb]^{(s,r)}  &\in \mathbb{R}^{d \times n_s \times n_{r}}.
    \end{array}
}
    \end{align*}
\end{proposition}

The above proposition shows that the stable polynomial terms can be viewed as a generalization of the entries of the weight matrices $[W]^{(i)}$ and bias vectors $[b]^{(i)}$.
The stable polynomial terms defined above are actually ``stable" under the action of $G$ in the sense presented in the following theorem.

\begin{theorem}[Stable polynomial terms are ``stable"]\label{thm:stable_terms-main}
    With notation as above, let $gU=([gW],[gb])$ be the element of $\mathcal{U}$ obtained by acting $g=(g^{(L)},\ldots,g^{(0)}) \in G$ on the element $U=([W],[b])$.
    Then we have
\begin{align*}
{ \everymath={\displaystyle}
\begin{array}{ll}
    \left[gW\right]^{(s,t)} &= \left(g^{(s)} \right) \cdot \left[W\right]^{(s,t)} \cdot \left(g^{(t)} \right )^{-1}, \\
    \left[gb\right]^{(s)} &= \left(g^{(s)} \right) \cdot [b]^{(s)}, \\
    \left[gWgb\right]^{(s,t)(t)} &= \left(g^{(s)} \right) \cdot  [Wb]^{(s,t)(t)}, \\
    \left[gbgW\right]^{(s)(L,t)} &= \left(g^{(s)} \right) \cdot [bW]^{(s)(L,t)} \cdot \left(g^{(t)} \right )^{-1}, \\
     \left[gWgW\right]^{(s,0)(L,t)} &= \left(g^{(s)} \right) \cdot [WW]^{(s,0)(L,t)} \cdot \left(g^{(t)} \right )^{-1}.
\end{array}
}
\end{align*}
\end{theorem}

Intuitively, the theorem above asserts that stable polynomials exhibit compatibility with the action of the group \( G \).  
In particular, it provides explicit formulas for efficiently determining the transformation of stable polynomials under group actions.  
This property plays a crucial role in enabling the efficient computation of equivariant and invariant layers, especially when utilizing the weight-sharing mechanism.

Inherited from Proposition~\ref{prop:stable_terms-main} and Theorem~\ref{thm:stable_terms-main}, we define the polynomial map $I \colon \mathcal{U} \rightarrow \mathbb{R}^{d'}$ with maps each element $U \in \mathcal{U}$ to the vector $I(U) \in \mathbb{R}^{d'}$ of the following form:
\begin{align}\label{eq:invariant_form}
    I(U)& \coloneqq \sum_{L \ge s > t \ge 0} \sum_{p=1}^{n_s}\sum_{q=1}^{n_t}\Phi_{(s,t):pq} \cdot [W]^{(s,t)}_{pq} \nonumber\\
    &+ \sum_{L \ge s > 0}\sum_{p=1}^{n_s}\Phi_{(s):p} \cdot [b]^{(s)}_{p} \nonumber\\
    &+ \sum_{L \ge s > t > 0} \sum_{p=1}^{n_s} \Phi_{(s,t)(t):p} \cdot [Wb]^{(s,t)(t)}_{p} \nonumber\\ 
    &+ \sum_{L \ge s > 0}\sum_{L > t \ge 0} \sum_{p=1}^{n_s}\sum_{q=1}^{n_t} \Phi_{(s)(L,t):pq} \cdot [bW]^{(s)(L,t)}_{pq} \nonumber\\
    &+ \sum_{L \ge s > 0}\sum_{L > t \ge 0} \sum_{p=1}^{n_s}\sum_{q=1}^{n_t} \Phi_{(s,0)(L,t):pq} \cdot [WW]^{(s,0)(L,t)}_{pq}\nonumber\\
    &+ \Phi_1.
\end{align}
Intuitively speaking, $I(U)$ is a linear combination of the entries from the input weights $[W]^{(s)}$ (from the first sum) and biases $[b]^{(s)}$ (the second sum), as well as all entries from the stable polynomial terms $[W]^{(s,t)}$ (the first sum), $[Wb]^{(s,t)(t)}$ (the third sum), $[bW]^{(s)(s,t)}$ (the fourth sum), and $[WW]^{(s,L)(0,t)}$ (the last fifth sum) for all appropriate indices $s$ and $t$, with a bias. 
Here, the coefficients $\Phi_-$ and the connection matrix $\Psi^{-}$ (inside $[bW]^{(s)(s,t)}$ and $[WW]^{(s,L)(0,t)}$) are learnable parameters. 

It is important to note that \( I(U) \) is a polynomial of degree at most \( L+1 \) in the input weights, encompassing all linear layers as special cases. 

To identify invariant layers among polynomials of the form given in \( I \), an important step is to determine all relations between elements \( U, U' \in \mathcal{U} \) such that \( I(U) = I(U') \). When the parameters \( \Psi^{-} \) are fixed, \( I \) becomes a linear function in the parameters \( \Phi_{-} \), and the difference satisfies \( I(U) - I(U') = I(U - U') \).  
Thus, the problem of identifying invariant layers within polynomials of the form \( I \) reduces to determining the linear dependencies among stable polynomials. The following theorem characterizes these dependencies in \( I(U) \), with a formal statement provided in Theorem~\ref{appendix:polynomial-ring-main-problem} in the appendix.

\begin{theorem}[Linear dependence of stable polynomials]\label{thm:polynomial-ring-main-problem}
    For a given pair of coefficients matrix $\Phi_-$ and $\Psi^{-}$, if $I(U)$ given in Equation~(\ref{eq:invariant_form}) is equal to zero for all input weights $U \in \mathcal{U}$, then we have
    \begin{align}\label{eq:relation1}
        \sum_{p=1}^{n_L}&\sum_{q=1}^{n_0} \Phi_{(L,0):pq} \cdot [W]^{(L,0)}_{pq} \nonumber\\
        &+ \sum_{L > s > 0} \sum_{p=1}^{n_s}\sum_{q=1}^{n_s} \Phi_{(s,0)(L,s):pq} \cdot [WW]^{(s,0)(L,s)}_{pq} = 0,
    \end{align}
    and
    \begin{align}\label{eq:relation2}
        \sum_{p=1}^{n_L} &\Phi_{(L,t)(t):p} \cdot [Wb]^{(L,t)(t)}_{p} \nonumber\\
        &+ \sum_{p=1}^{n_t}\sum_{q=1}^{n_t} \Phi_{(t)(L,t):pq} \cdot [bW]^{(t)(L,t)}_{pq}=0, \quad L > t > 0,
    \end{align}
    and all entries of $\Phi_-$ and $\Psi^{-}$, except those appear in the above two equations, are equal to zero.
\end{theorem}

Intuitively speaking, almost stable polynomial terms are linearly independent over the reals $\mathbb{R}$, except those in Equations~(\ref{eq:relation1}) and (\ref{eq:relation2}).
This linear dependence property of the stable polynomials is essential in the computation of equivariant and invariant polynomial layers using weight-sharing mechanism.
The proofs of Proposition~\ref{prop:stable_terms-main}, Theorem~\ref{thm:stable_terms-main}, and Theorem~\ref{thm:polynomial-ring-main-problem} can be found in Appendix~\ref{appendix:sec:group_action}.

\begin{table*}[ht!]
    \caption{Classification train and test accuracies (\%) for implicit neural representations of MNIST, FashionMNIST, and CIFAR-10. Uncertainties indicate standard error over 5 runs, baseline results are from~\citep{tran2024monomial}.}
    \medskip
    \label{tab:inr-classification}
    \centering
    \renewcommand*{\arraystretch}{1}
    \begin{adjustbox}{width=0.7\textwidth}
    \begin{tabular}{lccccc}
        \toprule
        & MNIST & CIFAR-10  & FashionMNIST\\
        \midrule
         MLP & $10.62 \pm 0.54$ & $10.48 \pm 0.74$ & $9.95 \pm 0.36$\\
         NP~\citep{zhou2024permutation} & $\underline{69.82 \pm 0.42}$ & $33.74 \pm 0.26$ & $58.21 \pm 0.31$ \\
         HNP~\citep{zhou2024permutation} & $66.02 \pm 0.51$ & $31.61 \pm 0.22$ & $57.43 \pm 0.46$\\
         Monomial-NFN~\citep{tran2024monomial} & $68.43 \pm 0.51$ & $\underline{34.23 \pm 0.33}$ & $\underline{61.15 \pm 0.55}$\\
         \midrule
         MAGEP-NFNs (ours)& $\mathbf{77.55 \pm 0.68}$ & $\mathbf{37.18 \pm 0.30}$ & $\mathbf{62.83 \pm 0.57}$ \\
      \bottomrule
    \end{tabular}
    \end{adjustbox}
    \vspace{-6pt}
\end{table*}

\subsection{Polynomial Invariant and Equivariant Layers}\label{sec:equivariant_layer}

We now proceed to construct $G$-invariant polynomial layers.
The construction of $G$-equivariant polynomial layers is similar and will be derived in detail in Appendix~\ref{appendix:sec:equivariant_map}. 
These polynomial layers serve as the fundamental building blocks for our MAGEP-NFNs.


We define a polynomial map $I \colon \mathcal{U} \rightarrow \mathbb{R}^{d'}$ with maps each element $U \in \mathcal{U}$ to the vector $I(U) \in \mathbb{R}^{d'}$ of the form given in Equation~(\ref{eq:invariant_form}).
To make $I$ to be $G$-invariant, 
the learnable parameters $\Phi_-$ and $\Psi^-$ must satisfy a system of constraints (usually called \emph{parameter sharing}), which are induced from the condition $I(gU) = I(U)$ for all $g \in G$ and $U \in \mathcal{U}$. 
We show in details what are these constraints and how to derive the concrete formula of $I$ in Appendix~\ref{appendix:sec:equivariant_map}. 
The formula of $I$ is then determined by

\begin{align}\label{eq:invariant_layer}
    I(U) =&  \sum_{p=1}^{n_L}\sum_{q=1}^{n_0} \Phi_{(L,0)(L,0):pq} \cdot [WW]^{(L,0)(L,0)}_{pq}\nonumber\\
    &+ \sum_{p=1}^{n_L}\sum_{q=1}^{n_0}\Phi_{(L,0):pq} \cdot [W]^{(L,0)}_{pq} \nonumber\\
    &+ \sum_{L > s > 0} \sum_{p=1}^{n_s} \Phi_{(s,0)(L,s):\bullet \bullet} \cdot [WW]^{(s,0)(L,s)}_{pp} \nonumber\\
    &+  \sum_{p=1}^{n_L}\sum_{q=1}^{n_0} \Phi_{(L)(L,0):pq} \cdot [bW]^{(L)(L,0)}_{pq}  \nonumber\\
    &+  \sum_{L > t > 0} \sum_{p=1}^{n_L} \Phi_{(L,t)(t):p} \cdot [Wb]^{(L,t)(t)}_{p} \nonumber\\ 
    &+ \sum_{L > t > 0} \sum_{p=1}^{n_t} \Phi_{(t)(L,t):\bullet \bullet} \cdot [bW]^{(t)(L,t)}_{pp}  \nonumber\\
    &+ \sum_{p=1}^{n_L}\Phi_{(L):p} \cdot [b]^{(L)}_{p} + \Phi_1.
\end{align}
In the above formula, the bullet symbol \( \bullet \) denotes that the corresponding coefficient is independent of the index at that position. The summation terms involving expressions of the form \( [W] \) (respectively, \( [b], [Wb], [bW], [WW] \)) correspond to those present in the summation in Equation~(\ref{eq:invariant_form}). The omitted terms are those eliminated during solving the parameter-sharing process to ensure that the resulting formula becomes invariant.

To conclude, we obtain the following:

\begin{theorem}\label{thm:equi_layer}
    With the notation given above, the polynomial map $I \colon \mathcal{U} \rightarrow \mathbb{R}^{d'}$ defined by Equation~(\ref{eq:invariant_layer}) is $G$-invariant.
    Moreover, if a map takes the form of Equation~(\ref{eq:invariant_form}) and is $G$-invariant, then it has the form given in Equation~(\ref{eq:invariant_layer}). 
\end{theorem}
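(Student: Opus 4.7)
The plan is to establish both directions by applying Theorem~\ref{thm:stable_terms} term by term to the five families of stable polynomial building blocks appearing in Equation~(\ref{eq:invariant_form}): $[W]^{(s,t)}$, $[b]^{(s)}$, $[Wb]^{(s,t)(t)}$, $[bW]^{(s)(L,t)}$, and $[WW]^{(s,0)(L,t)}$. The key structural observation is that, because $G = \{I_{n_L}\} \times \mathcal{M}^{>0}_{n_{L-1}} \times \cdots \times \mathcal{M}^{>0}_{n_1} \times \{I_{n_0}\}$, every $g \in G$ has $g^{(L)} = I_{n_L}$ and $g^{(0)} = I_{n_0}$, so the boundary indices $s = L$ and $t = 0$ contribute only trivial scaling and trivial permutation on the corresponding side.

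For the sufficiency direction, I will verify that each summand in Equation~(\ref{eq:invariant_layer}) is separately $G$-invariant. The boundary terms such as $[W]^{(L,0)}_{pq}$, $[b]^{(L)}_p$, $[Wb]^{(L,t)(t)}_p$, $[WW]^{(L,0)(L,0)}_{pq}$, and $[bW]^{(L)(L,0)}_{pq}$ are entrywise invariant, since the only group elements that can act on their surviving indices are identities. For the internal trace-like terms $\sum_p [WW]^{(s,0)(L,s)}_{pp}$ and $\sum_p [bW]^{(t)(L,t)}_{pp}$ with $0 < s = t < L$, I will use Theorem~\ref{thm:stable_terms} to observe that the diagonal entries of $g^{(s)} M (g^{(s)})^{-1}$ are $M_{\pi_s^{-1}(p)\pi_s^{-1}(p)}$, because the positive scalings $d^{(s)}_p/d^{(s)}_p$ cancel; hence the sum over $p$ is a reindexing of the original diagonal sum and is invariant. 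The $p$-independent coefficients $\Phi_{\bullet\bullet}$ factor out cleanly, and the constant $\Phi_1$ is trivially invariant.

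For the necessity direction, I will impose $I(gU) = I(U)$ on the general map~(\ref{eq:invariant_form}). Since each family transforms only under its own pair $(g^{(s)}, g^{(t)})$ by Theorem~\ref{thm:stable_terms}, the invariance decouples across families. Within each family I proceed in two stages. First, fixing all permutations to the identity and letting the positive diagonal scalings $d^{(i)}_k$ vary freely, any coefficient $\Phi_{-}$ multiplying a term whose scaling ratio $d^{(s)}_p/d^{(t)}_q$ is nontrivial must vanish; this eliminates every case except $s = L$ paired with $t = 0$ (both scalings trivial) and $s = t$ with $0 < s < L$ (scalings cancel only on the diagonal, so off-diagonal coefficients die). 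Second, varying the permutations $\pi_i$ forces the surviving coefficients to be constant along orbits, collapsing the dependence on the diagonal index $p$ in the $s = t$ case to a single constant, denoted by the $\bullet$ notation. Collecting all surviving terms reproduces exactly Equation~(\ref{eq:invariant_layer}).

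The main obstacle is justifying that the invariance equations can be split coefficient by coefficient across the five families and across distinct index tuples within each family. This requires a linear-independence statement for the stable polynomial terms, viewed as polynomials in the coordinates of $\mathcal{U}$, so that matching both sides of $I(gU) = I(U)$ forces the corresponding scalar constraints on each $\Phi_{-}$ individually. This independence is precisely the content of the paper's preparatory study of stable polynomial terms, and it is the algebraic tool I will invoke to convert the pointwise functional equation $I(gU) = I(U)$ into the explicit parameter-sharing constraints summarized above.
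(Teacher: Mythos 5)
Your proposal is correct and follows essentially the same route as the paper: compute $I(gU)$ via the transformation rules of Theorem~\ref{thm:stable_terms}, invoke the linear-independence analysis of stable polynomial terms (the paper's Theorem~\ref{appendix:polynomial-ring-main-problem} and its corollary) to split the identity $I(gU)=I(U)$ into per-coefficient constraints, and then solve those constraints by first varying the positive scalings and then the permutations, with the trace-like diagonal terms surviving exactly as you describe. The one imprecision is your claim that invariance fully decouples across the five families: $[W]^{(L,0)}$ shares monomial types with the diagonal $[WW]^{(s,0)(L,s)}$ terms and $[Wb]^{(L,t)(t)}$ with the diagonal $[bW]^{(t)(L,t)}$ terms, so the independence corollary only yields coupled relations for these pairs; however, since the first member of each pair has $s=L$ and is therefore itself $G$-invariant, it cancels from both sides and the coupled relations reduce to precisely the constraints you state.
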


\begin{remark}[{Comparison to the invariant/equivariant linear layers in \citep{tran2024monomial}}]
    Equation~(\ref{eq:invariant_layer}) describes the invariant polynomial layer derived from the parameter-sharing mechanism of our MAGEP-NFNs. In contrast, the invariant equivariant layer proposed in \citep{tran2024monomial} is an ad hoc formulation and does not result from a parameter-sharing mechanism. Consequently, there is no direct relationship between our invariant layer and the invariant layer in \citep{tran2024monomial}.

    However, the equivariant polynomial layer in our MAGEP-NFNs and the equivariant linear layer from \citep{tran2024monomial} are related. Specifically, the equivariant layer in \citep{tran2024monomial} is exactly the linear component of our equivariant polynomial layer. Due to the lengthy formulation and construction process, we have provided the details of the derived equivariant polynomial layers in Appendix~\ref{appendix:concrete_formula_final}.
\end{remark}

\section{Experimental Results}\label{sec:experimental_results}

\begin{table*}[t]
    \caption{Performance prediction of CNNs on the $\ReLU$ subset of Small CNN Zoo with varying scale augmentations. We use Kendall's $\tau$ as the evaluation metric. The uncertainty bars indicate the standard deviation across 5 runs.}
    \medskip
    \label{tab:cnn-relu}
    \centering
    \renewcommand*{\arraystretch}{1}
    \begin{adjustbox}{width=1.0\textwidth}
    \begin{tabular}{lcccccc}
        \toprule
        & \multicolumn{5}{c}{Augment settings} \\
         & No augment & $\mathcal{U}[1, \num{e1}]$ & $\mathcal{U} [1, \num{e2}]$ & $\mathcal{U}[1, \num{e3}]$ & $\mathcal{U}[1, \num{e4}]$\\
        \midrule
         STATNet~\citep{unterthiner2020predicting} & $0.915 \pm 0.002$ & $0.894 \pm 0.0001$ & $0.853 \pm 0.007$ & $0.523 \pm 0.02$ & $0.516 \pm 0.001$ \\
        NP~\citep{zhou2024permutation} & $0.920 \pm 0.003$ & $0.900 \pm 0.002$ & $0.898 \pm 0.003$  & $0.884 \pm 0.002$ & $0.884 \pm 0.002$ \\
        HNP~\citep{zhou2024permutation} & $\underline{0.926 \pm 0.003}$ & $0.913 \pm 0.001$ & $0.903 \pm 0.003$ & $0.891 \pm 0.003$ & $0.601 \pm 0.02$ \\
        Monomial-NFN~\citep{tran2024monomial} & $0.922 \pm 0.001$ & $\underline{0.920 \pm 0.001}$ & $\underline{0.919 \pm 0.001}$ & $\underline{0.920 \pm 0.002}$ & $\underline{0.920 \pm 0.001}$ \\
        \midrule
        MAGEP-NFNs (ours) & $\mathbf{0.933 \pm 0.001}$ & $\mathbf{0.933 \pm 0.001}$ & $\mathbf{0.933 \pm 0.001}$ & $\mathbf{0.932 \pm 0.001}$ & $\mathbf{0.932 \pm 0.001}$ \\
      \bottomrule
    \end{tabular}
    \end{adjustbox}
\end{table*}

In this session, we assess the performance of our Monomial Matrix Group Polynomial Equivariant Neural Functionals (MAGEP-NFNs) across a variety of equivariant and invariant tasks. For invariant tasks, we implement our model for classifying Implicit Neural Representations of images and predicting CNN generalization based on weights. The equivariant task focuses on weight space style editing. Our experiments are designed to illustrate that MAGEP-NFNs either outperform or match the performance of other baseline models with a similar number of parameters. We perform five independent runs for each experiment and report the average results. Detailed information on hyperparameter settings, training protocols, memory and runtime analysis can be found in Appendix~\ref{appendix:experiment}. Additionally, we present a supplementary experiment comparing our approach with a GNN-based functional network, which is included in Appendix~\ref{appendix:experiment_compare_gnn}.

\subsection{Classifying Implicit Neural Representations of images}

\paragraph{Experiment setup.} In this experiment, we aim to determine which class each pretrained Implicit Neural Representation (INR) weight was trained on. Following~\citep{tran2024monomial}, we employ three distinct INR weight datasets~\citep{zhou2024permutation}, each was trained on a different image dataset: CIFAR-10~\citep{cifar}, FashionMNIST~\citep{fashionmnist}, and MNIST~\citep{mnist}. Each INR weight is trained to encode a single image from its respective class, capturing the image structure by mapping pixel coordinates $(x, y)$ to the corresponding pixel color values—represented as 3-channel RGB values for CIFAR-10 and 1-channel grayscale values for MNIST and FashionMNIST. The varying complexity and diversity of the datasets provide a robust test for evaluating MAGEP-NFN's performance, demonstrating its effectiveness and benchmarking it against existing models.

\paragraph{Results.} We present the performance of our model alongside several baseline models, including MLP, NP~\citep{zhou2024permutation}, HNP~\citep{zhou2024permutation}, and Monomial-NFN~\citep{tran2024monomial}. As shown in Table~\ref{tab:inr-classification}, our model achieves the highest test accuracies across all INR datasets. Notably, it outperforms the second-best model on the MNIST dataset, by a significant margin of $7.73\%$. For the CIFAR-10 and FashionMNIST datasets, our model also demonstrates substantial improvements, with accuracy gains of $2.95\%$ and $1.68\%$, respectively, over the existing baselines. These results indicate that our model leverages the embedded information from the pretrained INRs more effectively than any of the compared baselines. This consistent superior performance across various INR datasets highlights the effectiveness of MAGEP-NFN. It also suggests that our model generalizes well to INR weights embedded with different image structures and complexities.

\subsection{Predicting CNN generalization from weights}
\label{subsec:cnn}

\textbf{Experiment setup.} For this experiment,  we focus on predicting the generalization performance of pretrained CNNs based solely on their weights, without evaluating them on test data. We utilize the Small CNN Zoo dataset~\citep{unterthiner2020predicting}, which contains various pretrained CNN models trained with different combinations of hyperparameters and activation functions. For our study, we split the Small CNN Zoo into two subsets: one comprising networks using ReLU activations and the other using Tanh activations. These two types of CNNs follow different group actions: $\mathcal{M}_n^{>0}$ for Relu networks (see Equation~(\ref{eq:G_U})) and $\mathcal{M}_{n_i}^{\pm 1}$ for Tanh networks. 

To evaluate the robustness of our model to input transformations under group actions, we augment the ReLU dataset by applying randomly sampled group actions $\mathcal{M}_{n}^{>0}$. Specifically, we randomly sampling the diagonal elements $\mathcal{D}_{n, ii}^{>0}$ of the matrix $\mathcal{D}_{n}^{>0}$, with each element drawn from uniform distributions over different ranges, defined as $\mathcal{U}[1, 10^i]$ for $i = 1, 2, 3, 4$. To further diversify the transformations, we also randomly sample the permutation matrix $\mathcal{P}_n$.

\textbf{Results.} Table~\ref{tab:cnn-relu} illustrates the performance of all models trained on the $\ReLU$ subset, where our MAGEP-NFNs model clearly outperforms all other baselines. Notably, it demonstrates robustness to scale and permutation symmetry, similar to Monomial-NFN, while consistently surpassing its performance across both the original and all augmented dataset settings. This suggests that incorporating polynomial layers allows our model to capture more information from the weights across different hidden layers, compared to Monomial-NFN, thereby enhancing expressivity. On the original dataset, our model achieves a Kendall’s $\tau$ performance gap of $0.007$ over other baselines, and maintaining at least a $0.012$ advantage in all other augmented settings. Similarly, Table~\ref{tab:cnn-tanh} reveals that MAGEP-NFNs achieves the highest Kendall's $\tau$ with $\Tanh$ activation, further reinforcing its superior accuracy across different network configurations.

\begin{table}[t]
    \caption{Performance prediction of CNNs on the $\Tanh$ subset of Small CNN Zoo. We use Kendall's $\tau$ as the evaluation metric. The uncertainty bars indicate the standard deviation across 5 runs.}
    \medskip
    \label{tab:cnn-tanh}
    \centering
    \renewcommand*{\arraystretch}{1}
    \begin{adjustbox}{width=0.45\textwidth}
    \begin{tabular}{lccccc}
        \toprule
        Model & Kendall's $\tau$ \\
        \midrule
         STATNet~\citep{unterthiner2020predicting} & $0.913 \pm 0.0012$\\
         NP~\citep{zhou2024permutation} & $0.925 \pm 0.0013$\\
         HNP~\citep{zhou2024permutation} & $0.933 \pm 0.0019$ &\\
         Monomial-NFN~\citep{tran2024monomial} & $\underline{0.939 \pm 0.0004}$\\
         \midrule
         MAGEP-NFNs (ours) & $\mathbf{0.940 \pm 0.001}$\\
      \bottomrule
    \end{tabular}
    \end{adjustbox}
    \vspace{-12pt}
\end{table}

\subsection{Weight space style editing}

\textbf{Experiment setup.} In this experiment, we focus on modifying the weights of SIREN~\citep{DBLP:conf/nips/SitzmannMBLW20} to modify the image encoded within each model. We utilize the pretrained models from paper~\citep{zhou2024permutation}, which encode images from the CIFAR-10 and MNIST datasets. Specifically, we address two tasks aimed at modifying the embedded information: enhancing the contrast of CIFAR-10 images and dilating MNIST images encoded in the SIREN models. We report the MSE loss between the images encoded in the modified SIREN network and the ground truth contrast-enhanced CIFAR-10 images or dilated MNIST images.

\textbf{Results.} Table~\ref{tab:weight-style-editing} demonstrates that our model achieves performance comparable to other baselines. Specifically, MAGEP-NFNs matches the performance of NP and Monomial-NFN in the contrast-enhancing task on the CIFAR-10 dataset. Additionally, our model outperforms Monomial-NFN in the dilation task on the MNIST dataset, while achieving similar results to NP. Interestingly, NP remains a strong candidate in the weight editing tasks, and our model consistently performs on par with NP across both experiments.

\subsection{Ablation study on the role of higher-order terms}

To evaluate the impact of the newly introduced Inter-Layer terms $([W],[WW],[bW],[Wb])$, we conduct an ablation study focusing on the invariant task of predicting CNN generalization for the ReLU subset, following the same setting outlined in Subsection~\ref{subsec:cnn}. The results presented in Table~\ref{tab:ablation_component} clearly show that the inclusion of Inter-Layer terms enhances the network's performance. Notably, the performance improves from a Kendall's $\tau$ of $0.929$ with only Non Inter-Layer terms to $0.933$ when both terms are combined, highlighting a significant boost in overall performance attributed to the incorporation of Inter-Layer terms.


\begin{table}
    \caption{Test mean squared error (lower is better) between weight-space editing methods and ground-truth image-space transformations. Uncertainties indicate standard error over 5 runs}
    \medskip
    \label{tab:weight-style-editing}
    \centering
    \renewcommand*{\arraystretch}{1}
    \begin{adjustbox}{width=0.5\textwidth}
    \begin{tabular}{lccccc}
        \toprule
        & Contrast (CIFAR-10) & Dilate (MNIST) \\
        \midrule
         MLP & $0.031 \pm 0.001$ & $0.306 \pm 0.001$\\
         NP~\citep{zhou2024permutation} & $\mathbf{0.020 \pm 0.002}$ & $\mathbf{0.068 \pm 0.002}$ \\
         HNP~\citep{zhou2024permutation} & $\underline{0.021 \pm 0.002}$ & $0.071 \pm 0.001$\\
         Monomial-NFN~\citep{tran2024monomial} & $\mathbf{0.020 \pm 0.001}$ & $\underline{0.069 \pm 0.002}$\\
         \midrule
         MAGEP-NFNs (ours)& $\mathbf{0.020 \pm 0.001}$ & $\mathbf{0.068 \pm 0.002}$\\
      \bottomrule
    \end{tabular}
    \end{adjustbox}
    \vspace{-12pt}
\end{table}

\begin{table}[ht]
\centering
\caption{Ablation study assessing the importance of higher-order terms on the task of predicting CNN generalization on the ReLU subset.}
\label{tab:ablation_component}
\begin{adjustbox}{width=0.45\textwidth}
    \begin{tabular}{lc}
    \toprule
     Components & Kendall's $\tau$\\
    \midrule
     Only Non Inter-Layer terms & $0.929$\\
     Only Inter-Layer terms & $\underline{0.932}$ \\
     Non Inter-Layer terms + $[W]$ & $0.930$ \\
     Non Inter-Layer terms + $[WW]$ & $0.930$\\
     Non Inter-Layer terms + $[Wb]$ & $0.931$\\
     Non Inter-Layer terms + $[bW]$ & $0.931$\\
     Non Inter-Layer terms + Inter-Layer terms & $\mathbf{0.933}$\\
    \bottomrule
    \end{tabular}
\end{adjustbox}
\vspace{-12pt}
\end{table}


\section{Related Work}\label{sec:related_work}

\textbf{Functional Equivalence of Neural Networks.} 
Hecht-Nielsen was provided a foundational perspective on the relationship between weight symmetries and network functionality in \citep{hecht1990algebraic}. This work has been extended to different network architectures, such as $\operatorname{ReLU}$ networks \citep{bui2020functional,neyshabur2015path,badrinarayanan2015understanding,albertini1993identifiability}, and $\operatorname{sin}$ or $\operatorname{tanh}$ networks \citep{chen1993geometry,fefferman1993recovering,kuurkova1994functionally}. 
These studies build on earlier insights into convergence, gradient dynamics, and structural properties of neural networks, as explored in \citep{allen2019convergence, du2019gradient, frankle2018lottery, belkin2019reconciling, novak2018sensitivity}.

\textbf{Neural Functional Networks.} 
Early NFNs have been proposed to evaluate their generalization capabilities and uncover insights into neural network dynamics \citep{baker2017accelerating, eilertsen2020classifying, unterthiner2020predicting, schurholt2021self, schurholt2022hyper, schurholt2022model}. 
These approaches typically involve either flattening the network parameters or deriving parameter statistics for further processing using standard multi-layer perceptrons (MLPs) \citep{unterthiner2020predicting,dupont2022data, de2023deep}. 
To involve the symmetric structure of the input neural networks, \citet{schurholt2021self} introduced neuron permutation augmentations to better align model representations with their functional equivalence. Other studies have expanded on these ideas by focusing on encoding and decoding neural network parameters, primarily for reconstruction and generative modeling \citep{peebles2022learning, ashkenazi2022nern, knyazev2021parameter, erkocc2023hyperdiffusion}.

\textbf{Equivariant Neural Functional Networks.} 
To achieve NFNs that are equivariant with respect to permutation symmetries of neural networks weight space, several approaches have been used, such as: weight-sharing mechanisms \citep{navon2023equivariant, zhou2024permutation, kofinas2024graph, zhou2024neural}, set-based \citep{andreis2023set} or graph-based structures \citep{lim2023graph, kofinas2024graph, zhou2024universal}. 

Despite these developments, current approaches often overlook additional symmetries present in neural networks, for instance, weight scaling symmetries in $\ReLU$ networks and weight sign-flipping symmetries in $\sin$ and $\tanh$ networks. 
Recently, NFNs that are equivariant to both permutations and scaling have been introduced in \citep{kalogeropoulos2024scale,tran2024monomial}. 
These networks leverage advanced techniques such as graph-based message-passing mechanisms \citep{kalogeropoulos2024scale} and parameter-sharing frameworks \citep{tran2024monomial} to extend the scope of equivariant modeling and enhance the expressivity of NFNs.
However, the graph-based equivariant NFNs proposed in \citep{kalogeropoulos2024scale} suffer from high memory consumption and significant runtime overhead. While, the Monomial-NFNs constructed using equivariant linear layers and a parameter-sharing mechanism in \citep{tran2024monomial} exhibit limited expressive power. 

In contrast, our MAGEP-NFNs are built upon equivariant polynomial layers, leveraging a parameter-sharing mechanism that achieves both lower memory consumption and reduced runtime while preserving strong expressivity.

\vspace{-0.3cm}
\section{Conclusion}\label{sec:conclusion}

We have developed MAGEP-NFN, a novel NFN that is equivariant to both permutations and scaling symmetries. Our approach follows a parameter-sharing mechanism; however, unlike previous works, we construct an equivariant polynomial layer that incorporates stable polynomial terms. This polynomial formulation enables us to capture relationships between weights from different input hidden layers, thereby enhancing the expressivity of MAGEP-NFN while maintaining low memory consumption and efficient running time.
Experimental results demonstrate that our model achieves competitive performance and efficiency compared to existing baselines.

One limitation of the equivariant polynomial layers proposed in this paper is that they are applied to a specific architecture. However, since our method is based on a parameter-sharing mechanism, it is applicable to other architectures with additional operators (such as layer normalization, softmax, pooling) and other  activation functions, provided that the symmetric group of the weight network is known.

\vspace{-0.3cm}
\section*{Impact Statement}


This paper presents work whose goal is to advance the field of 
Machine Learning. There are many potential societal consequences 
of our work, none which we feel must be specifically highlighted here.

\bibliography{nfnbib}

@inproceedings{lim2023graph,
  author       = {Derek Lim and
                  Haggai Maron and
                  Marc T. Law and
                  Jonathan Lorraine and
                  James Lucas},
  title        = {Graph Metanetworks for Processing Diverse Neural Architectures},
  booktitle    = {The Twelfth International Conference on Learning Representations},
  year         = {2024},
  url          = {https://openreview.net/forum?id=ijK5hyxs0n},
}

@article{andreis2023set,
  title={Set-based neural network encoding},
  author={Andreis, Bruno and Bedionita, Soro and Hwang, Sung Ju},
  journal={arXiv preprint arXiv:2305.16625},
  year={2023}
}

@article{zhou2024universal,
  title={Universal neural functionals},
  author={Zhou, Allan and Finn, Chelsea and Harrison, James},
  journal={Advances in neural information processing systems},
  year={2024}
}

@inproceedings{kalogeropoulos2024scale,
  author       = {Ioannis Kalogeropoulos and
                  Giorgos Bouritsas and
                  Yannis Panagakis},
  title        = {Scale Equivariant Graph Metanetworks},
  booktitle    = {Advances in Neural Information Processing Systems 38},
year = {2024},
}

@inproceedings{allen2019convergence,
  title={A convergence theory for deep learning via over-parameterization},
  author={Allen-Zhu, Zeyuan and Li, Yuanzhi and Song, Zhao},
  booktitle={International conference on machine learning},
  pages={242--252},
  year={2019},
  organization={PMLR}
}

@inproceedings{du2019gradient,
  title={Gradient descent finds global minima of deep neural networks},
  author={Du, Simon and Lee, Jason and Li, Haochuan and Wang, Liwei and Zhai, Xiyu},
  booktitle={International conference on machine learning},
  pages={1675--1685},
  year={2019},
  organization={PMLR}
}

@inproceedings{frankle2018lottery,
  author       = {Jonathan Frankle and
                  Michael Carbin},
  title        = {The Lottery Ticket Hypothesis: Finding Sparse, Trainable Neural Networks},
  booktitle    = {7th International Conference on Learning Representations, {ICLR} 2019,
                  New Orleans, LA, USA, May 6-9, 2019},
  url          = {https://openreview.net/forum?id=rJl-b3RcF7},
    year = {2019},
}

@article{belkin2019reconciling,
  title={Reconciling modern machine-learning practice and the classical bias--variance trade-off},
  author={Belkin, Mikhail and Hsu, Daniel and Ma, Siyuan and Mandal, Soumik},
  journal={Proceedings of the National Academy of Sciences},
  volume={116},
  number={32},
  pages={15849--15854},
  year={2019},
  publisher={National Acad Sciences}
}

@article{novak2018sensitivity,
  title={Sensitivity and generalization in neural networks: an empirical study},
  author={Novak, Roman and Bahri, Yasaman and Abolafia, Daniel A and Pennington, Jeffrey and Sohl-Dickstein, Jascha},
  journal={arXiv preprint arXiv:1802.08760},
  year={2018}
}

@inproceedings{bui2020functional,
  title={Functional vs. parametric equivalence of ReLU networks},
  author={Bui Thi Mai, Phuong and Lampert, Christoph},
  booktitle={8th International Conference on Learning Representations},
  year={2020}
}

@Techreport{cifar,
 author = {Krizhevsky, Alex and Hinton, Geoffrey},
 address = {Toronto, Ontario},
 institution = {University of Toronto},
 number = {0},
 publisher = {Technical report, University of Toronto},
 title = {Learning multiple layers of features from tiny images},
 year = {2009},
 title_with_no_special_chars = {Learning multiple layers of features from tiny images},
 url = {https://www.cs.toronto.edu/~kriz/learning-features-2009-TR.pdf}
}

@inproceedings{mnist,
  title={The mnist database of handwritten digits},
  author={Yann LeCun and Corinna Cortes},
  year={2005},
  url={https://api.semanticscholar.org/CorpusID:60282629}
}

@article{fashionmnist,
  author       = {Han Xiao and
                  Kashif Rasul and
                  Roland Vollgraf},
  title        = {Fashion-MNIST: a Novel Image Dataset for Benchmarking Machine Learning
                  Algorithms},
  journal      = {CoRR},
  volume       = {abs/1708.07747},
  year         = {2017},
  url          = {http://arxiv.org/abs/1708.07747},
  eprinttype    = {arXiv},
  eprint       = {1708.07747},
  timestamp    = {Mon, 13 Aug 2018 16:47:27 +0200},
  biburl       = {https://dblp.org/rec/journals/corr/abs-1708-07747.bib},
  bibsource    = {dblp computer science bibliography, https://dblp.org}
}

@inproceedings{DBLP:conf/nips/SitzmannMBLW20,
  author       = {Vincent Sitzmann and
                  Julien N. P. Martel and
                  Alexander W. Bergman and
                  David B. Lindell and
                  Gordon Wetzstein},
  title        = {Implicit Neural Representations with Periodic Activation Functions},
  booktitle    = {Advances in Neural Information Processing Systems 33: Annual Conference
                  on Neural Information Processing Systems 2020, NeurIPS 2020, December
                  6-12, 2020, virtual},
  year         = {2020},
  timestamp    = {Tue, 19 Jan 2021 15:57:13 +0100},
  biburl       = {https://dblp.org/rec/conf/nips/SitzmannMBLW20.bib},
  bibsource    = {dblp computer science bibliography, https://dblp.org}
}

@inproceedings{navon2023equivariant,
  title={Equivariant architectures for learning in deep weight spaces},
  author={Navon, Aviv and Shamsian, Aviv and Achituve, Idan and Fetaya, Ethan and Chechik, Gal and Maron, Haggai},
  booktitle={International Conference on Machine Learning},
  pages={25790--25816},
  year={2023},
  organization={PMLR}
}

@article{zhou2024neural,
  title={Neural functional transformers},
  author={Zhou, Allan and Yang, Kaien and Jiang, Yiding and Burns, Kaylee and Xu, Winnie and Sokota, Samuel and Kolter, J Zico and Finn, Chelsea},
  journal={Advances in Neural Information Processing Systems},
  volume={36},
  year={2024}
}

@incollection{bengio2013optimization,
  title={On the optimization of a synaptic learning rule},
  author={Bengio, Samy and Bengio, Yoshua and Cloutier, Jocelyn and Gecsei, Jan},
  booktitle={Optimality in Biological and Artificial Networks?},
  pages={265--287},
  year={2013},
  publisher={Routledge}
}

@article{zhou2024permutation,
  title={Permutation equivariant neural functionals},
  author={Zhou, Allan and Yang, Kaien and Burns, Kaylee and Cardace, Adriano and Jiang, Yiding and Sokota, Samuel and Kolter, J Zico and Finn, Chelsea},
  journal={Advances in Neural Information Processing Systems},
  volume={36},
  year={2024}
}

@inproceedings{runarsson2000evolution,
  title={Evolution and design of distributed learning rules},
  author={Runarsson, Thomas Philip and Jonsson, Magnus Thor},
  booktitle={2000 IEEE Symposium on Combinations of Evolutionary Computation and Neural Networks. Proceedings of the First IEEE Symposium on Combinations of Evolutionary Computation and Neural Networks (Cat. No. 00},
  pages={59--63},
  year={2000},
  organization={IEEE}
}

@article{andrychowicz2016learning,
  title={Learning to learn by gradient descent by gradient descent},
  author={Andrychowicz, Marcin and Denil, Misha and Gomez, Sergio and Hoffman, Matthew W and Pfau, David and Schaul, Tom and Shillingford, Brendan and De Freitas, Nando},
  journal={Advances in neural information processing systems},
  volume={29},
  year={2016}
}

@article{metz2022velo,
  title={Velo: Training versatile learned optimizers by scaling up},
  author={Metz, Luke and Harrison, James and Freeman, C Daniel and Merchant, Amil and Beyer, Lucas and Bradbury, James and Agrawal, Naman and Poole, Ben and Mordatch, Igor and Roberts, Adam and others},
  journal={arXiv preprint arXiv:2211.09760},
  year={2022}
}

@article{stanley2007compositional,
  title={Compositional pattern producing networks: A novel abstraction of development},
  author={Stanley, Kenneth O},
  journal={Genetic programming and evolvable machines},
  volume={8},
  pages={131--162},
  year={2007},
  publisher={Springer}
}

@article{mildenhall2021nerf,
  title={Nerf: Representing scenes as neural radiance fields for view synthesis},
  author={Mildenhall, Ben and Srinivasan, Pratul P and Tancik, Matthew and Barron, Jonathan T and Ramamoorthi, Ravi and Ng, Ren},
  journal={Communications of the ACM},
  volume={65},
  number={1},
  pages={99--106},
  year={2021},
  publisher={ACM New York, NY, USA}
}

@inproceedings{sinitsin2020editable,
  author       = {Anton Sinitsin and
                  Vsevolod Plokhotnyuk and
                  Dmitry V. Pyrkin and
                  Sergei Popov and
                  Artem Babenko},
  title        = {Editable Neural Networks},
  booktitle    = {8th International Conference on Learning Representations},
  year         = {2020},
  url          = {https://openreview.net/forum?id=HJedXaEtvS},
}

@inproceedings{de2021editing,
  author       = {Nicola De Cao and
                  Wilker Aziz and
                  Ivan Titov},
  title        = {Editing Factual Knowledge in Language Models},
  booktitle    = {Proceedings of the 2021 Conference on Empirical Methods in Natural
                  Language Processing, {EMNLP} 2021},
  pages        = {6491--6506},
  publisher    = {Association for Computational Linguistics},
  year         = {2021},
}

@inproceedings{mitchell2021fast,
  author       = {Eric Mitchell and
                  Charles Lin and
                  Antoine Bosselut and
                  Chelsea Finn and
                  Christopher D. Manning},
  title        = {Fast Model Editing at Scale},
  booktitle    = {The Tenth International Conference on Learning Representations},
  year         = {2022},
  url          = {https://openreview.net/forum?id=0DcZxeWfOPt},
}

@article{harb2020policy,
  title={Policy evaluation networks},
  author={Harb, Jean and Schaul, Tom and Precup, Doina and Bacon, Pierre-Luc},
  journal={arXiv preprint arXiv:2002.11833},
  year={2020}
}

@inproceedings{sokota2021fine,
  title={A fine-tuning approach to belief state modeling},
  author={Sokota, Samuel and Hu, Hengyuan and Wu, David J and Kolter, J Zico and Foerster, Jakob Nicolaus and Brown, Noam},
  booktitle={International Conference on Learning Representations},
  year={2021}
}

@article{neyshabur2015path,
  title={Path-sgd: Path-normalized optimization in deep neural networks},
  author={Neyshabur, Behnam and Salakhutdinov, Russ R and Srebro, Nati},
  journal={Advances in neural information processing systems},
  volume={28},
  year={2015}
}

@article{badrinarayanan2015understanding,
  title={Understanding symmetries in deep networks},
  author={Badrinarayanan, Vijay and Mishra, Bamdev and Cipolla, Roberto},
  journal={arXiv preprint arXiv:1511.01029},
  year={2015}
}

@incollection{hecht1990algebraic,
  title={On the algebraic structure of feedforward network weight spaces},
  author={Hecht-Nielsen, Robert},
  booktitle={Advanced Neural Computers},
  pages={129--135},
  year={1990},
  publisher={Elsevier}
}

@article{chen1993geometry,
  title={On the geometry of feedforward neural network error surfaces},
  author={Chen, An Mei and Lu, Haw-minn and Hecht-Nielsen, Robert},
  journal={Neural computation},
  volume={5},
  number={6},
  pages={910--927},
  year={1993},
  publisher={MIT Press}
}

@inproceedings{fefferman1993recovering,
  author       = {Charles Fefferman and
                  Scott Markel},
  title        = {Recovering a Feed-Forward Net From Its Output},
  booktitle    = {Advances in Neural Information Processing Systems 6, [7th {NIPS} Conference,
                  Denver, Colorado, USA, 1993]},
  pages        = {335--342},
  publisher    = {Morgan Kaufmann},
  year         = {1993},
}

@article{kuurkova1994functionally,
  title={Functionally equivalent feedforward neural networks},
  author={Kurkova, Vera and Kainen, Paul C},
  journal={Neural Computation},
  volume={6},
  number={3},
  pages={543--558},
  year={1994},
  publisher={MIT Press One Rogers Street, Cambridge, MA 02142-1209, USA journals-info~…}
}

@inproceedings{albertini1993identifiability,
  title={Identifiability of discrete-time neural networks},
  author={Albertini, Francesca and Sontag, Eduardo D},
  booktitle={Proc. European Control Conference},
  pages={460--465},
  year={1993},
  organization={Springer Berlin}
}

@inproceedings{ashkenazi2022nern,
  author       = {Maor Ashkenazi and
                  Zohar Rimon and
                  Ron Vainshtein and
                  Shir Levi and
                  Elad Richardson and
                  Pinchas Mintz and
                  Eran Treister},
  title        = {NeRN: Learning Neural Representations for Neural Networks},
  booktitle    = {The Eleventh International Conference on Learning Representations,
                  {ICLR} 2023, Kigali, Rwanda, May 1-5, 2023},
  publisher    = {OpenReview.net},
  year         = {2023},
  timestamp    = {Wed, 24 Jul 2024 16:50:33 +0200},
  biburl       = {https://dblp.org/rec/conf/iclr/AshkenaziRVLRMT23.bib},
  bibsource    = {dblp computer science bibliography, https://dblp.org}
}

@inproceedings{baker2017accelerating,
  author       = {Bowen Baker and
                  Otkrist Gupta and
                  Ramesh Raskar and
                  Nikhil Naik},
  title        = {Accelerating Neural Architecture Search using Performance Prediction},
  booktitle    = {6th International Conference on Learning Representations, {ICLR} 2018,
                  Vancouver, BC, Canada, April 30 - May 3, 2018, Workshop Track Proceedings},
  publisher    = {OpenReview.net},
  year         = {2018},
  url          = {https://openreview.net/forum?id=HJqk3N1vG},
  timestamp    = {Wed, 27 Nov 2024 20:59:53 +0100},
  biburl       = {https://dblp.org/rec/conf/iclr/BakerGRN18.bib},
  bibsource    = {dblp computer science bibliography, https://dblp.org}
}

@inproceedings{de2023deep,
  author       = {Luca De Luigi and
                  Adriano Cardace and
                  Riccardo Spezialetti and
                  Pierluigi Zama Ramirez and
                  Samuele Salti and
                  Luigi Di Stefano},
  title        = {Deep Learning on Implicit Neural Representations of Shapes},
  booktitle    = {The Eleventh International Conference on Learning Representations,
                  {ICLR} 2023},
  url          = {https://openreview.net/forum?id=OoOIW-3uadi},
    year = {2023},
}

@inproceedings{dupont2022data,
  author       = {Emilien Dupont and
                  Hyunjik Kim and
                  S. M. Ali Eslami and
                  Danilo Jimenez Rezende and
                  Dan Rosenbaum},
  title        = {From data to functa: Your data point is a function and you can treat
                  it like one},
  booktitle    = {International Conference on Machine Learning, {ICML} 2022, 17-23 July
                  2022, Baltimore, Maryland, {USA}},
  series       = {Proceedings of Machine Learning Research},
  volume       = {162},
  pages        = {5694--5725},
  publisher    = {{PMLR}},
  year         = {2022},
  url          = {https://proceedings.mlr.press/v162/dupont22a.html},
  timestamp    = {Tue, 12 Jul 2022 17:36:52 +0200},
  biburl       = {https://dblp.org/rec/conf/icml/DupontKERR22.bib},
  bibsource    = {dblp computer science bibliography, https://dblp.org}
}

@inproceedings{erkocc2023hyperdiffusion,
  title={Hyperdiffusion: Generating implicit neural fields with weight-space diffusion},
  author={Erko{\c{c}}, Ziya and Ma, Fangchang and Shan, Qi and Nie{\ss}ner, Matthias and Dai, Angela},
  booktitle={Proceedings of the IEEE/CVF International Conference on Computer Vision},
  pages={14300--14310},
  year={2023}
}

@article{knyazev2021parameter,
  title={Parameter prediction for unseen deep architectures},
  author={Knyazev, Boris and Drozdzal, Michal and Taylor, Graham W and Romero Soriano, Adriana},
  journal={Advances in Neural Information Processing Systems},
  volume={34},
  pages={29433--29448},
  year={2021}
}

@article{peebles2022learning,
  title={Learning to learn with generative models of neural network checkpoints},
  author={Peebles, William and Radosavovic, Ilija and Brooks, Tim and Efros, Alexei A and Malik, Jitendra},
  journal={arXiv preprint arXiv:2209.12892},
  year={2022}
}

@article{schurholt2021self,
  title={Self-supervised representation learning on neural network weights for model characteristic prediction},
  author={Sch{\"u}rholt, Konstantin and Kostadinov, Dimche and Borth, Damian},
  journal={Advances in Neural Information Processing Systems},
  volume={34},
  pages={16481--16493},
  year={2021}
}

@article{schurholt2022hyper,
  title={Hyper-representations as generative models: Sampling unseen neural network weights},
  author={Sch{\"u}rholt, Konstantin and Knyazev, Boris and Gir{\'o}-i-Nieto, Xavier and Borth, Damian},
  journal={Advances in Neural Information Processing Systems},
  volume={35},
  pages={27906--27920},
  year={2022}
}

@article{schurholt2022model,
  title={Model zoos: A dataset of diverse populations of neural network models},
  author={Sch{\"u}rholt, Konstantin and Taskiran, Diyar and Knyazev, Boris and Gir{\'o}-i-Nieto, Xavier and Borth, Damian},
  journal={Advances in Neural Information Processing Systems},
  volume={35},
  pages={38134--38148},
  year={2022}
}

@article{unterthiner2020predicting,
  title={Predicting neural network accuracy from weights},
  author={Unterthiner, Thomas and Keysers, Daniel and Gelly, Sylvain and Bousquet, Olivier and Tolstikhin, Ilya},
  journal={arXiv preprint arXiv:2002.11448},
  year={2020}
}

@inproceedings{eilertsen2020classifying,
  author       = {Gabriel Eilertsen and
                  Daniel J{\"{o}}nsson and
                  Timo Ropinski and
                  Jonas Unger and
                  Anders Ynnerman},
  title        = {Classifying the Classifier: Dissecting the Weight Space of Neural
                  Networks},
  booktitle    = {{ECAI} 2020 - 24th European Conference on Artificial Intelligence},
  series       = {Frontiers in Artificial Intelligence and Applications},
  volume       = {325},
  pages        = {1119--1126},
  publisher    = {{IOS} Press},
  url          = {https://doi.org/10.3233/FAIA200209},
    year = {2020},
}

@inproceedings{kofinas2024graph,
  author       = {Miltiadis Kofinas and
                  Boris Knyazev and
                  Yan Zhang and
                  Yunlu Chen and
                  Gertjan J. Burghouts and
                  Efstratios Gavves and
                  Cees G. M. Snoek and
                  David W. Zhang},
  title        = {Graph Neural Networks for Learning Equivariant Representations of
                  Neural Networks},
  booktitle    = {The Twelfth International Conference on Learning Representations},
  year         = {2024},
  url          = {https://openreview.net/forum?id=oO6FsMyDBt},
}

@inproceedings{
ruhe2023clifford,
title={Clifford Group Equivariant Neural Networks},
author={Ruhe, David and Brandstetter, Johannes and Forr{\'e}, Patrick},
booktitle={Thirty-seventh Conference on Neural Information Processing Systems},
year={2023},
url={https://openreview.net/forum?id=n84bzMrGUD}
}

@article{tran2025spherical,
  title={Spherical tree-sliced Wasserstein distance},
  author={Tran, Viet-Hoang and Chu, Thanh T and Nguyen, Khoi NM and Pham, Trang and Le, Tam and Nguyen, Tan M},
  journal={arXiv preprint arXiv:2503.11249},
  year={2025}
}

@article{tran2025tree,
  title={Tree-sliced wasserstein distance with nonlinear projection},
  author={Tran, Thanh and Tran, Viet-Hoang and Chu, Thanh and Pham, Trang and Ghaoui, Laurent El and Le, Tam and Nguyen, Tan M},
  journal={arXiv preprint arXiv:2505.00968},
  year={2025}
}

@article{tran2025distance,
  title={Distance-based tree-sliced Wasserstein distance},
  author={Tran, Hoang V and Nguyen, Khoi NM and Pham, Trang and Chu, Thanh T and Le, Tam and Nguyen, Tan M},
  journal={The Thirteenth International Conference on Learning Representations},
  year={2025}
}

@article{tran2024tree,
  title={Tree-sliced wasserstein distance on a system of lines},
  author={Tran, Viet-Hoang and Pham, Trang and Tran, Tho and Le, Tam and Nguyen, Tan M},
  journal={arXiv preprint arXiv:2406.13725},
  year={2024}
}

@article{tran2024equivariant,
  title={Equivariant Neural Functional Networks for Transformers},
  author={Tran, Viet-Hoang and Vo, Thieu N and The, An Nguyen and Huu, Tho Tran and Nguyen-Nhat, Minh-Khoi and Tran, Thanh and Pham, Duy-Tung and Nguyen, Tan Minh},
  journal={arXiv preprint arXiv:2410.04209},
  year={2024}
}

@article{tran2024clifford,
  title={A clifford algebraic approach to e (n)-equivariant high-order graph neural networks},
  author={Tran, Viet-Hoang and Vo, Thieu N and Huu, Tho Tran and Nguyen, Tan Minh},
  journal={arXiv preprint arXiv:2410.04692},
  year={2024}
}

@inproceedings{tran2024monomial,
 author = {Tran, Hoang and Vo, Thieu and Huu, Tho and Nguyen The, An and Nguyen, Tan},
 booktitle = {Advances in Neural Information Processing Systems}, pages = {48628--48665},
 publisher = {Curran Associates, Inc.},
 title = {Monomial Matrix Group Equivariant Neural Functional Networks},
 volume = {37},
 year = {2024}
}
\bibliographystyle{icml2025}

\newpage
\appendix
\onecolumn

\newpage

\begin{center}
{\bf \Large{Supplement to ``Equivariant Polynomial Functional Networks''}}
\end{center}




\section{Preliminaries}
This section contains notations and basic results on matrices and polynomials that will be used throughout the paper. We will mainly focus on matrices with real entries (real matrices) and polynomials with real coefficients (real polynomials). We will omit almost all of the proofs in this section as they are well-known. These results will be use in proofs in the rest of the paper.

\subsection{Entries of matrices}

A real matrix $A$ with $m$ rows and $n$ columns is an element of $\mathbb{R}^{m \times n}$:
\begin{align*}
    A = \left(a_{ij}\right)_{1 \le i \le m, 1 \le j \le n} = 
    \begin{pmatrix}
    a_{11} & a_{12} & \dots  & a_{1n} \\
    a_{21} & a_{22} & \dots  & a_{2n} \\
    \vdots & \vdots & \ddots & \vdots \\
    a_{m1} & a_{m2} & \dots  & a_{mn}
    \end{pmatrix}
     \in \mathbb{R}^{m \times n}.
\end{align*}
The entry in the $i^{\text{th}}$-row and the $j^{\text{th}}$-column of $A$, or the $(i,j)$ entry of $A$, is denoted by $A_{ij} = a_{ij}$. The $i^{\text{th}}$-row of $A$ and $j^{\text{th}}$-column of $A$ are, respectively, denoted by:
\begin{equation*}
\renewcommand{\arraystretch}{1.5}
{ \everymath={\displaystyle}
\begin{array}{lll}
    A_{i \ast} &= \left(a_{i1} ~ , ~ a_{i2} ~ , ~ \ldots ~ , ~ a_{in} \right )   &\in \mathbb{R}^{1 \times n}, \\ 
    A_{\ast j} &= \left(a_{1j} ~ , ~ a_{2j} ~ , ~ \ldots ~ , ~ a_{mj} \right )^{\top}   &\in \mathbb{R}^{m \times 1}. \\
\end{array}
}
\end{equation*}
\begin{remark}
    Sometimes, a comma is added between two subscript indices to make sure there will be no confusion, i.e. $A_{i,j}, a_{i,j}, A_{i,\ast}, A_{\ast,j}$. 
\end{remark}

Let $A^{(L)}, \ldots, A^{(2)}, A^{(1)}$ be $L$ matrices such that the matrix product: 
\begin{equation*}
    A^{(L)} \cdot \ldots \cdot A^{(2)} \cdot A^{(1)},
\end{equation*} 
is well-defined.

\begin{proposition} \label{appendix:prelim-1}
    The $(i,j)$ entry of $A^{(L)} \cdot \ldots \cdot A^{(2)} \cdot A^{(1)}$ is equal to:
    \begin{align*}
        \left(A^{(L)} \cdot \ldots \cdot A^{(2)} \cdot A^{(1)} \right)_{ij} &= A^{(L)}_{i,\ast} \cdot A^{(L-1)} \cdot \ldots \cdot A^{(2)} \cdot A^{(1)}_{\ast,j} \\
        &= \sum_{k_{L-1},\ldots,k_2,k_1} a^{(L)}_{i,k_{L-1}} \cdot a^{(L)}_{k_{L-1},k_{L-2}} \cdot \ldots \cdot a^{(2)}_{k_2,k_1} \cdot a^{(1)}_{k_1,j}.
    \end{align*}
    In the case where $L=1$, the above equation is simply $A^{(1)}_{ij} = a^{(1)}_{ij}$.
\end{proposition}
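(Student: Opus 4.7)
The plan is to prove this by induction on $L$, the number of matrices in the product. This is a completely standard consequence of the definition of matrix multiplication together with its associativity, so the proof is routine; the main task is to set up the induction bookkeeping carefully so the iterated sum unfolds cleanly.

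For the base case $L=1$ there is nothing to prove: the statement reads $A^{(1)}_{ij} = a^{(1)}_{ij}$, which holds by definition. For $L=2$, I would record the definition of matrix multiplication as
\begin{equation*}
    \bigl(A^{(2)} \cdot A^{(1)}\bigr)_{ij} \;=\; \sum_{k_1} a^{(2)}_{i,k_1}\, a^{(1)}_{k_1,j} \;=\; A^{(2)}_{i,\ast} \cdot A^{(1)}_{\ast,j},
\end{equation*}
which serves as the first nontrivial instance of the formula.

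For the inductive step, assume the formula holds for the product of any $L-1$ compatible matrices. Using associativity of matrix multiplication, write
\begin{equation*}
    A^{(L)} \cdot A^{(L-1)} \cdot \ldots \cdot A^{(1)} \;=\; A^{(L)} \cdot B, \qquad \text{where } B \;=\; A^{(L-1)} \cdot \ldots \cdot A^{(1)}.
\end{equation*}
Applying the $L=2$ identity gives $(A^{(L)} \cdot B)_{ij} = \sum_{k_{L-1}} a^{(L)}_{i,k_{L-1}}\, B_{k_{L-1}, j}$, and then the induction hypothesis applied to $B_{k_{L-1},j}$ expands this into the claimed nested sum over $k_{L-1}, k_{L-2}, \ldots, k_1$. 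The factored form $A^{(L)}_{i,\ast} \cdot A^{(L-1)} \cdot \ldots \cdot A^{(1)}_{\ast,j}$ then follows by regrouping the sum, using that left-multiplication by a row vector and right-multiplication by a column vector are just the corresponding linear combinations.

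There is essentially no obstacle here; the only thing to be careful about is keeping the index ordering consistent with the convention $A^{(L)} \cdots A^{(1)}$ (right-to-left composition) used elsewhere in the paper, so that the summation indices $k_{L-1}, \ldots, k_1$ appear in the order displayed in the statement rather than reversed.
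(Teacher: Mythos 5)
Your induction argument is correct and is the standard proof of this fact; the paper itself omits the proof entirely, stating in the preliminaries that these results are well-known. Nothing is missing from your argument, and your care with the right-to-left index ordering matches the convention used in the statement.
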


We set a denotation for matrices that have only one nonzero entry with value $1$. The matrix with the $1$ in the $i^{\text{th}}$-row and the $j^{\text{th}}$-column, and the rest are $0$, is denoted by $E_{ij}$. Matrix $E_{ij}$ can have any shape, but its shape are usually defined by context, and will be omitted without confusion. The product of matrices of this type is presented as below.

\begin{proposition} \label{appendix:prelim-2}
    Let $E_{i_1,j_1}, E_{i_2,j_2}, \ldots, E_{i_L,j_L}$ be $L$ matrix units such that the product: 
    \begin{align*}
        E_{i_1,j_1} \cdot E_{i_2,j_2} \cdot \ldots \cdot E_{i_L,j_L},
    \end{align*} 
    is well-defined. Then:
    \begin{align*}
        E_{i_1,j_1} \cdot E_{i_2,j_2} \cdot \ldots \cdot E_{i_L,j_L} = \left (\delta_{j_1,i_2} \cdot \delta_{j_2,i_3} \cdot \ldots \cdot \delta_{j_{L-1},i_L} \right ) \cdot E_{i_1,j_L},
    \end{align*}
    where $\delta_{ij}$ is the Kronecker delta:
    \begin{equation*}
\delta_{ij} = \begin{cases}
0 &\text{if $i \neq j$,}\\
1 &\text{if $i = j$.}
\end{cases}
\end{equation*}
\end{proposition}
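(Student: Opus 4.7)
The plan is to prove the identity by induction on the number of factors $L$, with the base case $L=2$ handled by direct computation of entries using the formula in Proposition \ref{appendix:prelim-1}. The core observation is that a matrix unit $E_{i,j}$ has entries $(E_{i,j})_{pq} = \delta_{p,i}\,\delta_{q,j}$, so products of such units collapse nicely under the sum-over-index form of matrix multiplication.

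\textbf{Base case ($L=2$).} First I would compute the $(p,q)$ entry of $E_{i_1,j_1} \cdot E_{i_2,j_2}$ using the standard formula:
\begin{align*}
    \bigl(E_{i_1,j_1} \cdot E_{i_2,j_2}\bigr)_{pq}
    &= \sum_{k} (E_{i_1,j_1})_{pk}\,(E_{i_2,j_2})_{kq}
    = \sum_{k} \delta_{p,i_1}\,\delta_{k,j_1}\,\delta_{k,i_2}\,\delta_{q,j_2} \\
    &= \delta_{p,i_1}\,\delta_{q,j_2}\,\delta_{j_1,i_2}
    = \delta_{j_1,i_2}\,(E_{i_1,j_2})_{pq}.
\end{align*}
Since this holds entry-wise, $E_{i_1,j_1} \cdot E_{i_2,j_2} = \delta_{j_1,i_2}\,E_{i_1,j_2}$, which matches the claim for $L=2$.

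\textbf{Inductive step.} Assume the identity holds for $L-1$ factors. Writing
\[
    E_{i_1,j_1} \cdots E_{i_{L-1},j_{L-1}} \cdot E_{i_L,j_L}
    = \Bigl(E_{i_1,j_1} \cdots E_{i_{L-1},j_{L-1}}\Bigr) \cdot E_{i_L,j_L},
\]
the inductive hypothesis replaces the bracketed product by $(\delta_{j_1,i_2}\cdots\delta_{j_{L-2},i_{L-1}})\,E_{i_1,j_{L-1}}$. Multiplying on the right by $E_{i_L,j_L}$ and applying the base case yields the factor $\delta_{j_{L-1},i_L}\,E_{i_1,j_L}$. Collecting the Kronecker deltas gives the stated formula.

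\textbf{Remark on obstacles.} There is essentially no difficulty here: the only thing to be careful about is checking that the matrix shapes are consistent (so each individual product is well-defined and the summation index in matrix multiplication runs over the correct range), and that scalar Kronecker factors can be pulled out of a matrix product. An alternative single-shot proof would apply Proposition \ref{appendix:prelim-1} directly to expand the $L$-fold product as a sum over internal indices $k_1,\ldots,k_{L-1}$, substitute $(E_{i_r,j_r})_{k_{r-1},k_r} = \delta_{k_{r-1},i_r}\,\delta_{k_r,j_r}$, and collapse each sum over $k_r$ to a single term via $\delta_{k_r,j_r}\,\delta_{k_r,i_{r+1}} \mapsto \delta_{j_r,i_{r+1}}$; I would mention this alternative but prefer the induction proof for clarity.
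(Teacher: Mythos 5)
Your proof is correct and complete: the entrywise computation $(E_{i,j})_{pq}=\delta_{p,i}\delta_{q,j}$ for the base case $L=2$, followed by induction on the number of factors, is the standard argument for this identity. The paper itself omits the proof of this proposition (it is stated in the preliminaries as a well-known fact), so there is nothing to diverge from; your argument, or the single-shot expansion via Proposition~\ref{appendix:prelim-1} that you mention as an alternative, fully justifies the statement.
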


We have a direct corollary for $E_{1,1}$'s.

\begin{corollary} \label{appendix:prelim-3}
    We have:
\begin{align*}
    E_{1,1} \cdot E_{1,1} \cdot \ldots \cdot E_{1,1} = E_{1,1}.
\end{align*}
\end{corollary}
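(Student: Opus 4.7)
The plan is to obtain the statement as an immediate specialization of Proposition~\ref{appendix:prelim-2}. Taking $i_k = j_k = 1$ for every $k = 1, \ldots, L$ in the product $E_{i_1,j_1} \cdot E_{i_2,j_2} \cdots E_{i_L,j_L}$, each factor becomes $E_{1,1}$, and the product is automatically well-defined (every matrix unit can be taken to have the same shape, so the pairwise shape-compatibility required by the proposition holds).

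Next, I would compute each Kronecker delta appearing in the formula from Proposition~\ref{appendix:prelim-2}: for $k = 1, \ldots, L-1$, we have $\delta_{j_k, i_{k+1}} = \delta_{1,1} = 1$. Hence the scalar coefficient $\delta_{j_1,i_2} \cdot \delta_{j_2,i_3} \cdots \delta_{j_{L-1},i_L}$ is a product of $1$'s, which equals $1$. Finally, the surviving matrix unit is $E_{i_1, j_L} = E_{1,1}$. Combining, the entire product collapses to $1 \cdot E_{1,1} = E_{1,1}$, as required.

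There is essentially no obstacle here: the proof is a one-line substitution into the preceding proposition. The only thing to be a bit careful about is the degenerate case $L = 1$, where there are no Kronecker delta factors; by the convention that an empty product equals $1$, the formula still yields $E_{1,1} = E_{1,1}$, so no separate argument is needed.
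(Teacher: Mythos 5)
Your proof is correct and matches the paper's approach: the paper states this result as a direct corollary of Proposition~\ref{appendix:prelim-2} without further argument, and your substitution $i_k = j_k = 1$ with all Kronecker deltas equal to $1$ is exactly the intended one-line verification. The remark about the degenerate case $L=1$ is a nice touch but not needed.
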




\subsection{Evaluation of polynomials}

Denote $\mathbb{R}[\mathbf{x}_1,\ldots,\mathbf{x}_n]$ be the ring of all polynomials with real coefficients in $n$ indeterminates $\mathbf{x}_1, \ldots, \mathbf{x}_n$. 

\begin{definition} \label{appendix:prelim-5}
    A \textit{monomial} of $\mathbb{R}[\mathbf{x}_1,\ldots,\mathbf{x}_n]$ is a polynomial of $\mathbb{R}[\mathbf{x}_1,\ldots,\mathbf{x}_n]$ that has one term.
\end{definition}
\begin{remark}
 In some contexts, a monomial is defined as a polynomial that has one term with coefficient $1$. We will use \textit{both} of these definitions simultaneously.  
\end{remark}

\begin{proposition} \label{appendix:prelim-8}
    $\mathbb{R}[\mathbf{x}_1,\ldots,\mathbf{x}_n]$ is \textit{naturally} a vector space over $\mathbb{R}$. It is an infinite-dimensional vector space; moreover, the set of all monomials with coefficient $1$ in $\mathbb{R}[\mathbf{x}_1,\ldots,\mathbf{x}_n]$ is a basis for the vector space. 
\end{proposition}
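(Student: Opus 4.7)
The plan is to verify the three assertions in sequence: (i) that $\mathbb{R}[\mathbf{x}_1,\ldots,\mathbf{x}_n]$ carries a natural $\mathbb{R}$-vector space structure, (ii) that the set $\mathcal{M}$ of all monic monomials spans and is linearly independent, and (iii) that $\mathcal{M}$ is infinite.

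\textbf{Vector space structure.} I would begin by inheriting addition from the ring structure and defining scalar multiplication $\mathbb{R} \times \mathbb{R}[\mathbf{x}_1,\ldots,\mathbf{x}_n] \to \mathbb{R}[\mathbf{x}_1,\ldots,\mathbf{x}_n]$ via the embedding of $\mathbb{R}$ as the constant polynomials, so that $c \cdot p := c \cdot p$ as a product in the ring. The eight vector space axioms (associativity and commutativity of addition, existence of $0$ and of additive inverses, distributivity of scalars over vector sums and of scalars over scalar sums, associativity of scalar multiplication, and the identity $1 \cdot p = p$) then follow directly from the commutative ring axioms of $\mathbb{R}[\mathbf{x}_1,\ldots,\mathbf{x}_n]$, so this step is routine and I would just invoke it.

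\textbf{Spanning.} By the very definition of a polynomial in $\mathbb{R}[\mathbf{x}_1,\ldots,\mathbf{x}_n]$, every element has the form $p = \sum_{\alpha} c_\alpha \mathbf{x}^\alpha$ where the sum runs over finitely many multi-indices $\alpha = (\alpha_1,\ldots,\alpha_n) \in \mathbb{N}^n$ and $c_\alpha \in \mathbb{R}$, with $\mathbf{x}^\alpha := \mathbf{x}_1^{\alpha_1}\cdots\mathbf{x}_n^{\alpha_n}$. By Definition~\ref{appendix:prelim-5}, each $\mathbf{x}^\alpha$ is a monic monomial, so $p$ is a finite $\mathbb{R}$-linear combination of elements of $\mathcal{M}$.

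\textbf{Linear independence.} This is the main substantive step. I would prove that if $\sum_{\alpha \in S} c_\alpha \mathbf{x}^\alpha = 0$ for a finite set $S \subset \mathbb{N}^n$ of distinct multi-indices, then every $c_\alpha = 0$. The cleanest route is induction on $n$. For $n=1$, a nonzero polynomial $\sum_{k=0}^d c_k \mathbf{x}_1^k$ with $c_d \neq 0$ has at most $d$ real roots, hence cannot be identically zero on $\mathbb{R}$. For the inductive step, I would group the sum by powers of $\mathbf{x}_n$, writing $\sum_\alpha c_\alpha \mathbf{x}^\alpha = \sum_{k \ge 0} p_k(\mathbf{x}_1,\ldots,\mathbf{x}_{n-1}) \mathbf{x}_n^k$, fix arbitrary values of $\mathbf{x}_1,\ldots,\mathbf{x}_{n-1}$, and apply the $n=1$ case to conclude each $p_k$ vanishes identically on $\mathbb{R}^{n-1}$; the inductive hypothesis then forces every $c_\alpha = 0$. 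The anticipated obstacle here is purely conceptual: one must distinguish polynomials as formal expressions from polynomials as functions, and the argument above handles both by showing that the only formal expression representing the zero function is the one with all coefficients zero.

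\textbf{Infinite-dimensionality.} Finally, the infinite subset $\{1, \mathbf{x}_1, \mathbf{x}_1^2, \mathbf{x}_1^3, \ldots\} \subset \mathcal{M}$ consists of pairwise distinct monic monomials, so by the linear independence just proved it is an infinite linearly independent set. Hence $\mathbb{R}[\mathbf{x}_1,\ldots,\mathbf{x}_n]$ cannot admit a finite basis, and $\mathcal{M}$ itself is a (necessarily infinite) basis.
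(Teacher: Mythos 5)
Your argument is correct. Note, however, that the paper states at the start of its preliminaries that it ``will omit almost all of the proofs in this section as they are well-known,'' and indeed gives no proof of this proposition, so there is no in-paper argument to compare against. One remark on your linear-independence step: if one takes $\mathbb{R}[\mathbf{x}_1,\ldots,\mathbf{x}_n]$ as the \emph{formal} polynomial ring (finitely supported coefficient functions on multi-indices $\mathbb{N}^n$), then distinct monic monomials are linearly independent essentially by construction, since the zero element is by definition the polynomial all of whose coefficients vanish. What your evaluation-and-induction argument actually proves is the stronger statement that a polynomial vanishing identically as a function on $\mathbb{R}^n$ has all coefficients zero --- which is precisely the content of the paper's Proposition~\ref{appendix:prelim-6}, stated separately. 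So your route is valid and self-contained, and as a bonus it establishes Proposition~\ref{appendix:prelim-6} along the way, but it does more work than the basis claim strictly requires; the only conceptual care needed, which you correctly flag, is keeping the formal-polynomial and polynomial-function viewpoints distinct (this is exactly where the finiteness of the base field would break the argument, as the paper's remark after Proposition~\ref{appendix:prelim-6} notes).
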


\begin{remark}
For $f \in \mathbb{R}[\mathbf{x}_1,\ldots,\mathbf{x}_n]$, by saying monomials in $f$, we refer to all monomials that appeared in the expression of $f$.     
\end{remark}

Polynomial evaluation is computing of the value of a polynomial when the indeterminates are substituted for some values. We have the well-known result.

\begin{proposition} \label{appendix:prelim-6}
    Let $f,g$ be two polynomials of $\mathbb{R}[\mathbf{x}_1,\ldots,\mathbf{x}_n]$. If $f,g$ are equal at every evaluations, i.e.
    \begin{align}
        f(x_1,\ldots,x_n) = g(x_1, \ldots, x_n) ~ , ~ \forall (x_1, \ldots, x_n) \in \mathbb{R}^n,
    \end{align}
    then $f = g$. In other words, the only polynomial of $\mathbb{R}[\mathbf{x}_1,\ldots,\mathbf{x}_n]$, that has $\mathbb{R}^n$ as its zero set, is the polynomial $0 \in \mathbb{R}[\mathbf{x}_1,\ldots,\mathbf{x}_n]$.
\end{proposition}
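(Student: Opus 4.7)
The plan is to prove this classical statement by induction on the number of variables $n$, after first reducing to the problem of showing that the polynomial $h := f - g \in \mathbb{R}[\mathbf{x}_1, \ldots, \mathbf{x}_n]$, which by hypothesis vanishes identically on $\mathbb{R}^n$, must be the zero polynomial. For the base case $n = 1$, I would invoke the elementary fact that a nonzero univariate polynomial of degree $d$ over $\mathbb{R}$ has at most $d$ roots; this is a direct consequence of the factor theorem in $\mathbb{R}[\mathbf{x}_1]$, namely, if $r \in \mathbb{R}$ is a root of such a polynomial, then $(\mathbf{x}_1 - r)$ divides it via Euclidean division in $\mathbb{R}[\mathbf{x}_1]$, the quotient has strictly smaller degree, and an induction on degree finishes the count. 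Since $\mathbb{R}$ is infinite, a univariate polynomial vanishing on all of $\mathbb{R}$ must therefore be identically zero.

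For the inductive step, assume the statement for $n-1$ variables. I would regard $h$ as a polynomial in $\mathbf{x}_n$ whose coefficients lie in $\mathbb{R}[\mathbf{x}_1, \ldots, \mathbf{x}_{n-1}]$, writing
\begin{align*}
h(\mathbf{x}_1, \ldots, \mathbf{x}_n) \;=\; \sum_{k=0}^{d} h_k(\mathbf{x}_1, \ldots, \mathbf{x}_{n-1}) \, \mathbf{x}_n^{k},
\end{align*}
for some $h_k \in \mathbb{R}[\mathbf{x}_1, \ldots, \mathbf{x}_{n-1}]$. For each fixed tuple $(a_1, \ldots, a_{n-1}) \in \mathbb{R}^{n-1}$, the univariate polynomial $h(a_1, \ldots, a_{n-1}, \mathbf{x}_n) \in \mathbb{R}[\mathbf{x}_n]$ vanishes on all of $\mathbb{R}$, so by the base case each of its coefficients vanishes, i.e.\ $h_k(a_1, \ldots, a_{n-1}) = 0$ for $k = 0, 1, \ldots, d$. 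Since $(a_1, \ldots, a_{n-1})$ was arbitrary, each $h_k \in \mathbb{R}[\mathbf{x}_1, \ldots, \mathbf{x}_{n-1}]$ vanishes on $\mathbb{R}^{n-1}$; the inductive hypothesis then forces $h_k = 0$ in $\mathbb{R}[\mathbf{x}_1, \ldots, \mathbf{x}_{n-1}]$ for every $k$. Consequently $h = 0$ in $\mathbb{R}[\mathbf{x}_1, \ldots, \mathbf{x}_n]$, which yields $f = g$ and also establishes the rephrasing that the only polynomial with $\mathbb{R}^n$ in its zero set is the zero polynomial.

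The main obstacle here is minimal, since this is a standard result. The only step that requires genuine care is the base case, where one must ensure the bound on the number of real roots of a nonzero univariate polynomial, which relies on Euclidean division being available in $\mathbb{R}[\mathbf{x}_1]$. It is worth emphasising that the argument uses infiniteness of $\mathbb{R}$ in an essential way: over a finite field the analogous statement fails (e.g.\ $\mathbf{x}^{p} - \mathbf{x}$ vanishes on all of $\mathbb{F}_p$), so the proof is really an inductive lifting of ``infinite field plus finitely many roots'' from one variable up to $n$ variables.
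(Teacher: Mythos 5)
Your proof is correct. The paper itself offers no proof of this proposition (it is stated in the preliminaries with the remark that proofs there are omitted as well known), so there is nothing to compare against; your argument is the standard one, namely induction on the number of variables with the base case resting on the fact that a nonzero univariate real polynomial has at most $\deg$ many roots, and the inductive step obtained by viewing $h$ as a polynomial in $\mathbf{x}_n$ with coefficients in $\mathbb{R}[\mathbf{x}_1,\ldots,\mathbf{x}_{n-1}]$ and specializing the first $n-1$ variables. All steps are complete, and your closing observation that infiniteness of $\mathbb{R}$ is essential matches the remark the paper places immediately after the proposition.
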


\begin{remark}
    The result still holds if $\mathbb{R}$ is replaced by an arbitrary infinite field, but does not hold if $\mathbb{R}$ is replaced by a finite field.
\end{remark}

We have a direct corollary.

\begin{corollary} \label{appendix:prelim-7}
    Let $f$ be a nonzero polynomial of $\mathbb{R}[\mathbf{x}_1,\ldots,\mathbf{x}_n]$. Then there exists $(x_1, \ldots, x_n) \in \mathbb{R}^n$ such that $f(x_1, \ldots, x_n) \neq 0$.
\end{corollary}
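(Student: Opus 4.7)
The plan is to derive the corollary directly from Proposition~\ref{appendix:prelim-6} by a one-step contrapositive argument. The contrapositive of the statement reads: if $f(x_1,\ldots,x_n) = 0$ for every $(x_1,\ldots,x_n) \in \mathbb{R}^n$, then $f$ is the zero polynomial in $\mathbb{R}[\mathbf{x}_1,\ldots,\mathbf{x}_n]$. This is exactly the second sentence of Proposition~\ref{appendix:prelim-6} (``the only polynomial \ldots that has $\mathbb{R}^n$ as its zero set, is the polynomial $0$''), so no new work is required beyond unpacking the logical equivalence.

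Concretely, I would argue by contradiction. Suppose no point of $\mathbb{R}^n$ witnesses a nonzero value of $f$; that is, suppose $f(x_1,\ldots,x_n) = 0$ for all $(x_1,\ldots,x_n) \in \mathbb{R}^n$. Let $g$ denote the zero element of $\mathbb{R}[\mathbf{x}_1,\ldots,\mathbf{x}_n]$, which of course evaluates to $0$ at every point of $\mathbb{R}^n$. Then $f$ and $g$ agree at every evaluation, so the hypothesis of Proposition~\ref{appendix:prelim-6} is met, and we conclude $f = g = 0$ in $\mathbb{R}[\mathbf{x}_1,\ldots,\mathbf{x}_n]$. This contradicts the standing assumption that $f$ is a nonzero polynomial, completing the proof.

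There is no substantive obstacle to this argument, since Proposition~\ref{appendix:prelim-6} is invoked as a black box. The only points worth stating explicitly are (i) that the zero polynomial is a legitimate element of $\mathbb{R}[\mathbf{x}_1,\ldots,\mathbf{x}_n]$ against which to compare $f$, and (ii) that Proposition~\ref{appendix:prelim-6} carries no hidden hypotheses beyond pointwise equality on $\mathbb{R}^n$, so the corollary genuinely reduces to a restatement of that result. Accordingly the write-up should be kept to a few lines.
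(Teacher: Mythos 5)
Your argument is correct and matches the paper's intent exactly: the paper states this as a direct corollary of Proposition~\ref{appendix:prelim-6} without further proof, and your contrapositive/contradiction argument comparing $f$ with the zero polynomial is precisely the intended one-line deduction.
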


\subsection{Entries of tensors}

\begin{proposition} \label{appendix:prelim-9}
    Let $a = (a_i)_{1 \le i \le n}$ and $b = (b_i)_{1 \le i \le n}$ be two vectors in $\mathbb{R}^n$. If:
    \begin{align}\label{eq:proof-solving-equations-1}
        a_i \cdot b_j + a_j \cdot b_i = 0, 
    \end{align}
    for all $1 \le i, j \le n$, then $a = 0$ or $b = 0$.
\end{proposition}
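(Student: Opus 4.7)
The plan is to prove this by contradiction, using only the specialization of the hypothesis to well-chosen indices. First I would substitute $i = j$ into the identity $a_i b_j + a_j b_i = 0$, which immediately yields $2 a_i b_i = 0$, hence $a_i b_i = 0$ for every $i \in \{1,\ldots,n\}$. This tells us that the supports of $a$ and $b$ (i.e.\ the sets of indices where they are nonzero) are disjoint.

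Next, suppose for contradiction that neither $a$ nor $b$ is the zero vector. Then there exist indices $p, q \in \{1,\ldots,n\}$ with $a_p \neq 0$ and $b_q \neq 0$. From the disjoint-support observation above, applied with $i = p$ we conclude $b_p = 0$, and applied with $i = q$ we conclude $a_q = 0$; in particular $p \neq q$.

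Finally, specialize the original hypothesis to the pair $(i,j) = (p,q)$:
\begin{equation*}
    a_p b_q + a_q b_p = 0.
\end{equation*}
Since $a_q = 0$ and $b_p = 0$, the left-hand side collapses to $a_p b_q$, which is nonzero because both factors are nonzero. This contradicts the equation, so the assumption that both $a$ and $b$ are nonzero must fail. Hence $a = 0$ or $b = 0$.

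I do not foresee any genuine obstacle here: the entire argument rests on the two specializations $i=j$ and $(i,j)=(p,q)$, and no induction or auxiliary construction is required. The only subtle point worth mentioning is that one must verify $p \neq q$ before invoking the second specialization, but this is automatic from $a_p \neq 0$, $b_q \neq 0$, and $a_i b_i = 0$.
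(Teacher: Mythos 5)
Your proof is correct and follows essentially the same route as the paper's: specialize to $i=j$ to get $a_ib_i=0$, then assume both vectors nonzero and derive a contradiction from the cross term $a_pb_q+a_qb_p$. The only cosmetic difference is that the paper only needs $b_p=0$ (since $a_qb_p$ then vanishes regardless of $a_q$), whereas you also establish $a_q=0$; this is harmless.
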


\begin{proof}
    Assume that both of $a$ and $b$ are not equal to $0$, then there exists $i,j$ such that $a_i$ and $b_j$ are non-zero. From Equation~(\ref{eq:proof-solving-equations-1}), we have:
    \begin{align}
        a_i\cdot b_i + a_i \cdot b_i = 0,
    \end{align}
    so $a_i \cdot b_i = 0$. Since $a_i$ is non-zero, then $b_i = 0$. It implies that:
    \begin{align}
        a_i \cdot b_j + a_j \cdot b_i = a_i \cdot b_j + 0 = a_i \cdot b_j \neq 0,
    \end{align}
    which contradicts to Equation~(\ref{eq:proof-solving-equations-1}). So at least one of $a$ and $b$ is equal to $0$.
\end{proof}

\begin{proposition} \label{appendix:prelim-10}
    Let $A = (a_{ij})_{1 \le i \le m, 1 \le j \le n}$ and $B = (b_{ij})_{1 \le i \le m, 1 \le j \le n}$ be two matrices in $\mathbb{R}^{m \times n}$. If:
    \begin{align}\label{eq:proof-solving-equations-2}
        a_{ij} \cdot b_{kl} + a_{kj} \cdot b_{il} + a_{il} \cdot b_{kj} + a_{kl} \cdot b_{ij} = 0,
    \end{align}
    for all $1 \le i,k \le m$ and $1 \le j, l \le n$, then $A = 0$ or $B = 0$.
\end{proposition}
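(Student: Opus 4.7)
\medskip

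\noindent\textbf{Proof proposal.}
The plan is to reduce the matrix identity to the vector identity of Proposition~\ref{appendix:prelim-9} by specialization, then derive a contradiction from a single evaluation of the original identity at carefully chosen indices.

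First, I would set $i = k$ in Equation~(\ref{eq:proof-solving-equations-2}). All four summands collapse to two copies of $a_{ij}b_{il} + a_{il}b_{ij}$, giving
\begin{equation*}
    a_{ij} b_{il} + a_{il} b_{ij} = 0 \qquad \text{for all } i, \text{ and all } j, l.
\end{equation*}
For each fixed $i$, this is exactly the hypothesis of Proposition~\ref{appendix:prelim-9} applied to the rows $A_{i\ast}$ and $B_{i\ast}$ viewed as vectors in $\mathbb{R}^n$. Hence, for every row index $i$, at least one of $A_{i\ast}$ and $B_{i\ast}$ is the zero vector.

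Next, I argue by contradiction: suppose $A \neq 0$ and $B \neq 0$. Then there exist indices $i_0, j_0$ with $a_{i_0 j_0} \neq 0$ and indices $i_1, j_1$ with $b_{i_1 j_1} \neq 0$. The row dichotomy above forces $A_{i_1\ast} = 0$ (since $B_{i_1\ast} \neq 0$) and $B_{i_0\ast} = 0$ (since $A_{i_0\ast} \neq 0$). In particular,
\begin{equation*}
    a_{i_1 j_0} = a_{i_1 j_1} = 0 \quad \text{and} \quad b_{i_0 j_0} = b_{i_0 j_1} = 0.
\end{equation*}
Substituting $(i,k,j,l) = (i_0, i_1, j_0, j_1)$ into Equation~(\ref{eq:proof-solving-equations-2}) then kills three of the four terms, leaving $a_{i_0 j_0}\, b_{i_1 j_1} = 0$, which contradicts the choice of $(i_0,j_0)$ and $(i_1,j_1)$. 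Therefore $A = 0$ or $B = 0$.

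The main obstacle is minor: one must verify that setting $i=k$ does reduce cleanly to the vector case (rather than, say, needing to also set $j=l$ and use only diagonal information), and then choose the correct four-index evaluation that kills exactly three terms. Once the row dichotomy from Proposition~\ref{appendix:prelim-9} is in hand, the rest is immediate. Setting $j=l$ would give the analogous column dichotomy, but it is not needed; the row dichotomy alone suffices.
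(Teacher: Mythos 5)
Your reduction via $i=k$ to Proposition~\ref{appendix:prelim-9} is correct and yields the row dichotomy, but the final evaluation does not close the argument. Substituting $(i,k,j,l)=(i_0,i_1,j_0,j_1)$ into Equation~(\ref{eq:proof-solving-equations-2}) gives
\begin{equation*}
a_{i_0 j_0}b_{i_1 j_1} + a_{i_1 j_0}b_{i_0 j_1} + a_{i_0 j_1}b_{i_1 j_0} + a_{i_1 j_1}b_{i_0 j_0} = 0 ,
\end{equation*}
and your facts $a_{i_1 j_0}=a_{i_1 j_1}=0$, $b_{i_0 j_0}=b_{i_0 j_1}=0$ kill only the second and fourth terms. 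The third term $a_{i_0 j_1}b_{i_1 j_0}$ survives: $a_{i_0 j_1}$ sits in row $i_0$ of $A$, which is nonzero, and $b_{i_1 j_0}$ sits in row $i_1$ of $B$, which is nonzero, so neither factor is controlled by the row dichotomy. You are left with $a_{i_0 j_0}b_{i_1 j_1} + a_{i_0 j_1}b_{i_1 j_0}=0$, which is not by itself a contradiction (e.g.\ $A_{i_0\ast}=(1,1)$, $B_{i_1\ast}=(1,-1)$ satisfies it with $a_{i_0 1}\neq 0$ and $b_{i_1 2}\neq 0$). Your closing remark that the column dichotomy ``is not needed'' is precisely where the proof breaks.

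Two quick repairs. The paper's route: also set $j=l$ to obtain the column dichotomy $A_{\ast j}=0$ or $B_{\ast j}=0$ for every $j$; then $b_{i_1 j_1}\neq 0$ forces $A_{\ast j_1}=0$, hence $a_{i_0 j_1}=0$, the third term dies, and the evaluation leaves $a_{i_0 j_0}b_{i_1 j_1}=0$, a contradiction. Alternatively, staying with rows only: fix $i=i_0$, $k=i_1$ and let $j,l$ range; since $A_{i_1\ast}=0$ the identity collapses to $a_{i_0 j}b_{i_1 l}+a_{i_0 l}b_{i_1 j}=0$ for all $j,l$, which is again the hypothesis of Proposition~\ref{appendix:prelim-9} applied to the vectors $A_{i_0\ast}$ and $B_{i_1\ast}$ and forces one of them to vanish, contradicting the choice of $(i_0,j_0)$ and $(i_1,j_1)$. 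Either a second dichotomy or a second application of the lemma is required; a single four-index evaluation after the row dichotomy alone is not enough.
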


\begin{proof}
    Consider Equation~(\ref{eq:proof-solving-equations-2}) when $1 \le j = l \le n$, we have:
    \begin{align}
        0 &= a_{ij} \cdot b_{kj} + a_{kj} \cdot b_{ij} + a_{ij} \cdot b_{kj} + a_{kj} \cdot b_{ij}\\
        &= 2 \cdot \left ( a_{ij} \cdot b_{kj} + a_{kj} \cdot b_{ij} \right ),
    \end{align}
    which means:
    \begin{align}
        a_{ij} \cdot b_{kj} + a_{kj} \cdot b_{ij} = 0.
    \end{align}
    This holds for all $1 \le i, k \le m$. Apply Proposition~\ref{appendix:prelim-9}, we have $a_{ij} = 0$ for all $1 \le i \le m$, or $b_{ij} = 0$ for all $1 \le i \le m$, which means $A_{\ast,j} = 0$ or $B_{\ast,j} = 0$. This holds for all $1 \le j \le n$. Similarly, we have $A_{i,\ast} = 0$ or $B_{i,\ast} = 0$ for $1 \le i \le m$. Now, assume that, both of $A$ and $B$ are not equal to $0$, then there exists $i,j$ and $k,l$ such that $a_{ij}$ and $b_{kl}$ are non-zero. By previous observation, we have $B_{i,\ast} = B_{\ast,j} = A_{k,\ast} = A_{\ast,l} = 0$. It implies that:
    \begin{align}
        a_{ij} \cdot b_{kl} + a_{kj} \cdot b_{il} + a_{il} \cdot b_{kj} + a_{kl} \cdot b_{ij} = a_{ij} \cdot b_{kl} + 0 + 0 + 0 = a_{ij} \cdot b_{kl} \neq 0,
    \end{align}
    which contradicts to Equation~(\ref{eq:proof-solving-equations-2}). So at least one of $A$ and $B$ is equal to $0$.
\end{proof}

Proposition~\ref{appendix:prelim-9} and Proposition~\ref{appendix:prelim-10} are, respectively, one-dimensional and two-dimensional cases.
By using the same arguments, we will obtain the $d$-dimensional version belows.

\begin{proposition} \label{appendix:prelim-11}
    Let $d$ be a positive integer and $n_1, n_2,\ldots,n_d$ be $d$ positive integers. Let:
    \begin{align*}
        \renewcommand{\arraystretch}{1.5}
{ \everymath={\displaystyle}
\begin{array}{lll}
    A &= (A_{i_1,i_2,\ldots,i_d})_{1 \le i_1 \le n_1, 1 \le i_2 \le n_2, \ldots, 1 \le i_d \le n_d} & \in \mathbb{R}^{n_1 \times n_2 \times \ldots \times n_d},\\
    B &= (B_{i_1,i_2,\ldots,i_d})_{1 \le i_1 \le n_1, 1 \le i_2 \le n_2, \ldots, 1 \le i_d \le n_d} & \in \mathbb{R}^{n_1 \times n_2 \times \ldots \times n_d}.
\end{array}
}
    \end{align*}
    If for all $1 \le i_{1}^{0}, i_{1}^{1} \le n_1, 1 \le i_{2}^{0}, i_{2}^{1} \le n_2, \ldots, 1 \le i_{d}^{0}, i_{d}^{1} \le n_d$, we have:
    \begin{align}
        \sum_{(\alpha_1, \ldots, \alpha_d) \in \{0,1\}^d} \left ( A_{i^{\alpha_1}_1,i^{\alpha_2}_2,\ldots,i^{\alpha_d}_d} \right ) \cdot \left ( B_{i^{1-\alpha_1}_1,i^{1-\alpha_2}_2,\ldots,i^{1-\alpha_d}_d} \right ) = 0,
    \end{align}
    then $A = 0$ or $B = 0$.
\end{proposition}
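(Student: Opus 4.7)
The plan is to prove the statement by induction on the tensor dimension $d$. The base case $d = 1$ is exactly Proposition~\ref{appendix:prelim-9}, and Proposition~\ref{appendix:prelim-10} already handles $d = 2$; the two-dimensional proof also serves as a template, since the same ``slice and eliminate'' structure will drive the general argument.

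For the inductive step I assume the result for $d - 1$ and work with the $d$-dimensional identity. The first move is to exploit the $\alpha_d \leftrightarrow 1 - \alpha_d$ symmetry of the hypothesis: setting $i_d^0 = i_d^1 = i_d$ makes each $\alpha_d = 0$ term equal to its $\alpha_d = 1$ partner, so the whole sum collapses to $2$ times a $(d-1)$-dimensional identity applied to the ``last-coordinate slices'' $A^{(i_d)}$ and $B^{(i_d)}$ defined by $A^{(i_d)}_{i_1,\ldots,i_{d-1}} := A_{i_1,\ldots,i_{d-1}, i_d}$ (and similarly for $B$). The inductive hypothesis then yields the dichotomy: for every value of $i_d$, either $A^{(i_d)} = 0$ or $B^{(i_d)} = 0$.

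Next I argue by contradiction. Suppose $A \ne 0$ and $B \ne 0$. Then there exist $j$ with $A^{(j)} \ne 0$, whence $B^{(j)} = 0$, and $k$ with $B^{(k)} \ne 0$, whence $A^{(k)} = 0$; note that necessarily $j \ne k$. Specializing the original identity by taking $i_d^0 = j$ and $i_d^1 = k$ and splitting the sum according to $\alpha_d \in \{0,1\}$, every $\alpha_d = 1$ term carries the factor $B_{\cdot,\ldots,\cdot, j} = B^{(j)} = 0$ and therefore vanishes, leaving
\begin{equation*}
    \sum_{\beta \in \{0,1\}^{d-1}} A^{(j)}_{i_1^{\beta_1}, \ldots, i_{d-1}^{\beta_{d-1}}} \cdot B^{(k)}_{i_1^{1 - \beta_1}, \ldots, i_{d-1}^{1 - \beta_{d-1}}} \;=\; 0
\end{equation*}
for all free choices of $i_1^0, i_1^1, \ldots, i_{d-1}^0, i_{d-1}^1$. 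This is precisely the $(d-1)$-dimensional hypothesis applied to the pair $(A^{(j)}, B^{(k)})$, so by induction $A^{(j)} = 0$ or $B^{(k)} = 0$, contradicting the choice of $j$ and $k$.

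The proof is essentially bookkeeping rather than genuinely difficult: the key conceptual point is that the identity is closed under the two reductions (diagonalizing the $d$-th pair of indices, and $\alpha_d$-splitting with one slice vanishing), both of which produce a $(d-1)$-dimensional instance of the same statement. The only subtlety to watch carefully is the swap of last indices between $A$ and $B$ inside the $\alpha_d = 1$ sub-sum, which is exactly the reason those terms collapse via $B^{(j)} = 0$ rather than via $A^{(j)}$.
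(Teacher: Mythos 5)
Your proof is correct and follows essentially the approach the paper intends: the paper writes out full proofs only for $d=1$ and $d=2$ and asserts the general case follows ``by using the same arguments,'' and your induction on $d$ --- diagonalizing the last index to obtain the slice dichotomy from the $(d-1)$-dimensional case, then specializing $i_d^0, i_d^1$ to two slices of opposite type to force a contradiction --- is a valid formalization of exactly that slice-and-eliminate argument. The only cosmetic difference is that a literal generalization of the paper's $d=2$ proof would apply the slice dichotomy in every coordinate direction so as to kill all $2^d-1$ non-identity terms of the sum at once, whereas you kill half the terms and recurse on the surviving $(d-1)$-dimensional identity; both are sound.
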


\section{Stable Polynomial Terms}\label{appendix:sec:group_action}

Intuitively, a \emph{stable polynomial term} is a polynomial in the entries of $U \in \mathcal{U}$ that is ``stable" under the action of $G$ (see Definition~\ref{def:stable_terms_appendix} below). 
The equivariant polynomial layers we aim to construct are linear combinations of these stable polynomial terms. 
In Subsection~\ref{appendix:sec:stable_polynomial}, we provide a formal definition for stable polynomial terms as well as their properties. 
We will study the linear dependence of stable polynomial terms in the language of polynomial rings with real coefficients in Subsections~\ref{appendix:subsec:indeterminates} and~\ref{appendix:subsec:linear_dependency}. 
These properties play a central role in determining the parameter-sharing computation of equivariant polynomial layers in the next section.

\subsection{Definitions and basic properties}
\label{appendix:sec:stable_polynomial}

Recall the weight space $\mathcal{U}$ given by:
\begin{align*}
\renewcommand{\arraystretch}{1.5}
\begin{array}{lllll}
    \mathcal{U} &= \hspace{10pt} \mathcal{W} \times \mathcal{B}, & \text{ where:}& &\\
    \mathcal{W} &= \hspace{10pt} \mathbb{R}^{w_L \times n_L \times n_{L-1}} &\times \hspace{10pt} \ldots &\times \hspace{10pt} \mathbb{R}^{w_2 \times n_2 \times n_1} &\times \hspace{10pt}   \mathbb{R}^{w_1 \times n_1 \times n_{0}}, \notag \\ 
     \mathcal{B} &= \hspace{10pt}
     \mathbb{R}^{b_L \times n_L \times 1} &\times \hspace{10pt}  \ldots &\times \hspace{10pt} \mathbb{R}^{b_2 \times n_2 \times 1} &\times \hspace{10pt} \mathbb{R}^{b_1 \times n_1 \times 1}. \notag
\end{array}
\end{align*}



Let us consider the case where the weight spaces have the same number of dimensions across all channels, which means $w_i  = b_i = d$ for all $i$.

\begin{definition}[Stable polynomial terms]\label{def:stable_terms_appendix}
    Let $U = ([W],[b])$ be an element of $\mathcal{U}$ with weights $[W] = \left([W]^{(L)}, \ldots, [W]^{(1)}\right)$ and biases 
    $[b] = \left([b]^{(L)}, \ldots, [b]^{(1)}\right)$.
    For each $L \ge s > t \ge 0$, we define:
    \begin{equation}
    \renewcommand{\arraystretch}{1.5}
    \begin{array}{lll}
    [W]^{(s,t)} &\coloneqq [W]^{(s)} \cdot [W]^{(s-1)} \cdot \ldots \cdot [W]^{(t+1)} &\in \mathbb{R}^{d \times n_s \times n_{t}}, \\
    \left[Wb\right]^{(s,t)(t)} &\coloneqq [W]^{(s,t)} \cdot \left[b\right]^{(t)} &\in \mathbb{R}^{d \times n_s \times 1}.
    \end{array}
    \end{equation}
    In addition, for each $L \geq s, t \geq 0$, and matrices $\Psi^{(s)(L,t)} \in \mathbb{R}^{1 \times n_{L}}$ and $\Psi^{(s,0)(L,t)} \in \mathbb{R}^{n_0 \times n_L}$, we also define
    \begin{equation}
    \renewcommand{\arraystretch}{1.5}
    \begin{array}{lll}
    \left[bW\right]^{(s)(L,t)} &\coloneqq [b]^{(s)} \cdot \Psi^{(s)(L,t)} \cdot [W]^{(L,t)} &\in \mathbb{R}^{d \times n_s \times n_t}, \\
    \left[WW\right]^{(s,0)(L,t)}  &\coloneqq [W]^{(s,0)} \cdot \Psi^{(s,0)(L,t)} \cdot [W]^{(L,t)} &\in \mathbb{R}^{d \times n_s \times n_t}.
    \end{array}
    \end{equation}
    The entries of the matrices $[W]^{(s,t)}$, $\left[Wb\right]^{(s,t)(t)}$, $\left[bW\right]^{(s)(L,t)}$ and $\left[WW\right]^{(s,0)(L,t)}$ defined above are called \emph{stable polynomial terms} of $U$ under the action of $G$.
\end{definition}


The following observations are direct implications from the definition.

\begin{itemize}
    \item For all $L \ge s > t > r \ge 0$: 
    \begin{align}
        [W]^{(s,s-1)} = [W]^{(s)}  \in \mathbb{R}^{d \times n_s \times n_{s-1}},
    \end{align} 
    and 
    \begin{align}
        [W]^{(s,t)} \cdot [W]^{(t,r)} = [W]^{(s,r)}  \in \mathbb{R}^{d \times n_s \times n_{r}},
    \end{align} 
    by definition. For $g = \left(g^{(L)},\ldots,g^{(0)}\right) \in \mathcal{G}_\mathcal{U}$:
\begin{equation}
    [gW]^{(s,t)} = \left ( g^{(s)} \right ) \cdot [W]^{(s,t)} \cdot \left( g^{(t)}\right )^{-1} \in \mathbb{R}^{d \times n_s \times n_{t}}.
\end{equation}
    \item If $g \in G$, then:
\begin{align}
    [gW]^{(L,t)} &= [W]^{(L,t)} \cdot \left( g^{(t)}\right )^{-1} \in \mathbb{R}^{d \times n_L \times n_{t}}  \\
    [gW]^{(s,0)} &= \left ( g^{(s)} \right ) \cdot [W]^{(s,0)} \in \mathbb{R}^{d \times n_s \times n_{0}}.
\end{align}
    \item For all $L \ge s > t > 0$, we have
\begin{align}
    [gW]^{(s,t)} \cdot [gb]^{(t)} = \left(g^{(s)} \right) \cdot [W]^{(s,t)} \cdot [b]^{(t)} \in \mathbb{R}^{d \times n_s \times 1}
\end{align}
    \item For all $L \ge s > 0$, $L > t \ge 0$ and $\Psi^{(s,0)(L,t)} \in \mathbb{R}^{d \times n_0 \times n_L}$, we have:
\begin{align}
    &[gW]^{(s,0)} \cdot \Psi^{(s,0)(L,t)} \cdot [gW]^{(L,t)}\nonumber\\
    &\qquad= \left ( g^{(s)} \right ) \cdot [W]^{(s,0)} \cdot \Psi^{(s,0)(L,t)} \cdot [W]^{(L,t)} \cdot \left( g^{(t)}\right )^{-1} \in \mathbb{R}^{d \times n_s \times n_{t}}.
\end{align}

In particular, if $t = s-1$, we have:
\begin{align}
    &[gW]^{(s,0)} \cdot \Psi^{(s,0)(L,s-1)}  \cdot [gW]^{(L,s-1)} \nonumber\\
    &\qquad = \left ( g^{(s)} \right ) \cdot [W]^{(s,0)} \cdot \Psi^{(s,0)(L,s-1)}  \cdot [W]^{(L,s-1)} \cdot \left( g^{(s-1)}\right )^{-1} \in \mathbb{R}^{d \times n_s \times n_{s-1}}.
\end{align}

    \item For all $L \ge s > 0$ and $\Psi^{(s)(L,t)} \in \mathbb{R}^{d \times 1 \times n_L}$, we have:
\begin{align}
    &[gb]^{(s)} \cdot \Psi^{(s)(L,t)} \cdot [gW]^{(L,t)} \nonumber\\
    &\qquad = \left ( g^{(s)} \right ) \cdot [b]^{(s)} \cdot \Psi^{(s)(L,t)} \cdot [W]^{(L,t)} \cdot \left( g^{(t)}\right )^{-1} \in \mathbb{R}^{d \times n_s \times n_{t}}.
\end{align}
\end{itemize}


Based on the above observations, we can determine the image of the stable polynomial terms under the action of an element $g \in \mathcal{G}_{\mathcal{U}}$ as follows:
\begin{align*}
\renewcommand{\arraystretch}{1.5}
{ \everymath={\displaystyle}
\begin{array}{ll}
    \left[gW\right]^{(s,t)} &= \left(g^{(s)} \right) \cdot \left[W\right]^{(s,t)} \cdot \left(g^{(t)} \right )^{-1}, \\
    \left[gb\right]^{(s)} &= \left(g^{(s)} \right) \cdot [b]^{(s)}, \\
    \left[gWgb\right]^{(s,t)(t)} &= \left(g^{(s)} \right) \cdot  [Wb]^{(s,t)(t)}, \\
    \left[gbgW\right]^{(s)(L,t)} &= \left(g^{(s)} \right) \cdot [bW]^{(s)(L,t)} \cdot \left(g^{(t)} \right )^{-1}, \\
     \left[gWgW\right]^{(s,0)(L,t)} &= \left(g^{(s)} \right) \cdot [WW]^{(s,0)(L,t)} \cdot \left(g^{(t)} \right )^{-1}.
\end{array}
}
\end{align*}

In concrete, we have:
\begin{align*}
\renewcommand{\arraystretch}{1.5}
{ \everymath={\displaystyle}
\begin{array}{ll}
    [gW]^{(s,t)}_{pq} &= \frac{d^{(s)}_p}{d^{(t)}_q} \cdot [W]^{(s,t)}_{\pi_s^{-1}(p), \pi_t^{-1}(q)} , \\
    \left[gb\right]^{(s)}_p &= d^{(s)}_p \cdot [b]^{(s)}_{\pi^{-1}_s(p)}, \\
    \left[gWgb\right]^{(s,t)(t)}_p &= d^{(s)}_p \cdot [Wb]^{(s,t)(t)}_{\pi^{-1}_s(p)}, \\
    \left[gbgW\right]^{(s)(L,t)}_{pq} &= \frac{d^{(s)}_p}{d^{(t)}_q} \cdot [bW]^{(s)(L,t)}_{\pi_s^{-1}(p), \pi_t^{-1}(q)}, \\
     \left[gWgW\right]^{(s,0)(L,t)} &= \frac{d^{(s)}_p}{d^{(t)}_q} \cdot [WW]^{(s,0)(L,t)}_{\pi_s^{-1}(p), \pi_t^{-1}(q)}.
\end{array}
}
\end{align*}

\subsection{Input weights as indeterminates}\label{appendix:subsec:indeterminates}

To simplify the technical difficulties, we consider the weight space \( \mathcal{U} \) in the case where \( d = 1 \), i.e.,
\begin{align*}
\renewcommand{\arraystretch}{1.5}
\begin{array}{lllll}
    \mathcal{U} &= \hspace{10pt}  \mathcal{W} \times \mathcal{B}, & \text{ where:}& &\\
    \mathcal{W} &=  \hspace{10pt} \mathbb{R}^{n_L \times n_{L-1}} &\times \hspace{10pt} \ldots &\times \hspace{10pt} \mathbb{R}^{n_2 \times n_1} &\times \hspace{10pt}   \mathbb{R}^{n_1 \times n_{0}}, \notag \\ 
     \mathcal{B} &= \hspace{10pt}
     \mathbb{R}^{n_L \times 1} &\times \hspace{10pt}  \ldots &\times \hspace{10pt} \mathbb{R}^{n_2 \times 1} &\times \hspace{10pt} \mathbb{R}^{n_1 \times 1}. \notag
\end{array}
\end{align*}
We introduce the set $I$ consists of indeterminates defined by: \begin{equation*}
    I \coloneqq \{\mathbf{x}^{(i)}_{jk} ~ \colon ~ 1 \le i \le L, 1 \le j \le n_i, 1 \le k \le n_{i-1}\} \cup \{\mathbf{y}^{(i)}_{j} ~ \colon ~ 1 \le i \le L, 1\le j \le n_i\}.
\end{equation*}

We have $|I| = \operatorname{dim} \mathcal{U}$. Denote $R = \mathbb{R}[I]$, which is the ring of all polynomials with indeterminates are all elements of $I$. For $1 \le i \le L$, we define: 
\begin{equation*}
\renewcommand{\arraystretch}{1.5}
{ \everymath={\displaystyle}
\begin{array}{lll}
    \left[\mathbf{W}\right]^{(i)} &\coloneqq \left(\mathbf{x}^{(i)}_{jk} \right)_{1 \le j \le n_i, 1 \le k \le n_{i-1}}  &\in R^{n_i \times n_{i-1}}, \\ 
    \left[\mathbf{b}\right]^{(i)} &\coloneqq \left(\mathbf{y}^{(i)}_{j} \right)_{1 \le j \le n_i}  &\in R^{n_i \times 1},
\end{array}
}
\end{equation*}
and
\begin{equation*}
\renewcommand{\arraystretch}{1.5}
{ \everymath={\displaystyle}
\begin{array}{lll}
    \left[\mathbf{W}\right]^{(s,t)} &\coloneqq [\mathbf{W}]^{(s)} \cdot [\mathbf{W}]^{(s-1)} \cdot \ldots \cdot [\mathbf{W}]^{(t+1)} &\in R^{n_s \times n_{t}}, \\
    \left[\mathbf{Wb}\right]^{(s,t)(t)} &\coloneqq [\mathbf{W}]^{(s,t)} \cdot [\mathbf{b}]^{(t)} &\in R^{n_s \times 1}, \\
    \left[\mathbf{bW}\right]^{(s)(L,t)} &\coloneqq [\mathbf{b}]^{(s)} \cdot \Psi^{(s)(L,t)} \cdot [\mathbf{W}]^{(L,t)} &\in R^{n_s \times n_t}, \\
    \left[\mathbf{WW}\right]^{(s,0)(L,t)}  &\coloneqq [\mathbf{W}]^{(s,0)} \cdot \Psi^{(s,0)(L,t)} \cdot [\mathbf{W}]^{(L,t)} &\in R^{n_s \times n_t}.
\end{array}
}
\end{equation*}
with feasible indices $(s,t)$. The coefficients $\Psi^{(-)}$'s are fixed real matrices and they are omitted from the notations.

Note that the entries of these matrices are stable polynomial terms in which the entries of \( U \) are now viewed as indeterminates of the polynomial ring \( R \).

\subsection{Linear dependence of stable polynomial terms}\label{appendix:subsec:linear_dependency}

In this subsection, we derive a necessary condition for the coefficients \( \Phi_- \) and \( \Psi^- \) such that the following linear combination of stable polynomial terms are identically zero:
\begin{align}\label{eq:alpha}
    \alpha(\Phi, \Psi) \coloneqq &\sum_{L \ge s > t \ge 0} \sum_{p=1}^{n_s}\sum_{q=1}^{n_t}\Phi_{(s,t):pq} \cdot [\mathbf{W}]^{(s,t)}_{pq} + \sum_{L \ge s > 0}\sum_{p=1}^{n_s}\Phi_{(s):p} \cdot [\mathbf{b}]^{(s)}_{p} \nonumber\\
    &+ \sum_{L \ge s > t > 0} \sum_{p=1}^{n_s} \Phi_{(s,t)(t):p} \cdot [\mathbf{Wb}]^{(s,t)(t)}_{p} \nonumber\\
    &+ \sum_{L \ge s > 0}\sum_{L > t \ge 0} \sum_{p=1}^{n_s}\sum_{q=1}^{n_t} \Phi_{(s)(L,t):pq} \cdot [\mathbf{bW}]^{(s)(L,t)}_{pq} \nonumber\\
    &+ \sum_{L \ge s > 0}\sum_{L > t \ge 0} \sum_{p=1}^{n_s}\sum_{q=1}^{n_t} \Phi_{(s,0)(L,t):pq} \cdot [\mathbf{WW}]^{(s,0)(L,t)}_{pq} + \Phi_1 \cdot 1.
\end{align}

Here, $\alpha$ is parameterized by $\Phi$ and $\Psi$, where $\Phi$ is a collection of real scalars $\Phi_{-}$'s appeared in the linear combination and $\Psi$ is a collection of real matrices $\Psi^{-}$'s that be used to define $[\mathbf{bW}]^{(-)}$'s and $[\mathbf{WW}]^{(-)}$'s. The index of each scalar $\Phi_{-}$ naturally presents its corresponding polynomial in $\alpha(\Phi, \Psi)$.
This necessary and sufficient condition enables us to determine the equivariant polynomial map via parameter sharing later.

    We first take a look at entries of $[\mathbf{W}]^{(-)}$'s, $[\mathbf{b}]^{(-)}$'s, $[\mathbf{Wb}]^{(-)}$'s, $[\mathbf{bW}]^{(-)}$'s, $[\mathbf{WW}]^{(-)}$'s. It is clear that for one of these matrices, its entries are \textit{homogeneous polynomials with the same degree}. For example:
\begin{itemize}
    \item $[\mathbf{W}]^{(-)}$: The polynomial $[\mathbf{W}]^{(s,t)}_{pq}$ has degree $s-t$. All of its monomial terms consist of one $\mathbf{x}^{(i)}_{-}$ for each $s \ge i > t$.
    \item $[\mathbf{b}]^{(-)}$: The polynomial $[\mathbf{b}]^{(s)}_{p}$ has degree $1$. All of its monomial terms consist of one $\mathbf{y}^{s}_{-}$.
    \item $[\mathbf{Wb}]^{(-)}$: The polynomial $[\mathbf{Wb}]^{(s,t)(t)}_{p}$ has degree $s-t+1$. All of its monomial terms consist of one $\mathbf{x}^{(i)}_{-}$ for each $s \ge i > t$ and one $\mathbf{y}^{(t)}_{-}$.
    \item $[\mathbf{bW}]^{(-)}$: The polynomial $[\mathbf{bW}]^{(s)(L,t)}_{p}$ has degree $L-t+1$. All of its monomial terms consist of one $\mathbf{y}^{(s)}_{-}$ and one $\mathbf{x}^{(i)}_{-}$ for each $L \ge i > t$.
    \item $[\mathbf{WW}]^{(-)}$: The polynomial $[\mathbf{WW}]^{(s,0)(L,t)}_{pq}$ has degree $L+s-t$. All of its monomial terms consist of one $\mathbf{x}^{(i)}_{-}$ for each $s \ge i > 0$ and one $\mathbf{x}^{(i)}_{-}$ for each $L \ge i > t$.
    \item $1$: The polynomial $1 \in R$.
\end{itemize}

By these above observations, we have: 

\begin{itemize}
    \item $[\mathbf{W}]^{(-)}$, $[\mathbf{WW}]^{(-)}$ : Each of the polynomials $[\mathbf{W}]^{(-)}_{-}$'s and $[\mathbf{WW}]^{(-)}_{-}$'s is $0$ or a non-constant element in $R$, and it is a real polynomial of at least one indeterminate from $\mathbf{x}^{(-)}_{-}$'s.
    \item $[\mathbf{b}]^{(-)}$:  Each of the polynomials $[\mathbf{b}]^{(-)}_{-}$'s is a non-constant element in $R$, and it is a real polynomial of one indeterminate from  $\mathbf{y}^{(-)}_{-}$'s.
    \item $[\mathbf{Wb}]^{(-)}$, $[\mathbf{bW}]^{(-)}$: Each of the polynomials $[\mathbf{Wb}]^{(-)}_{-}$'s and $[\mathbf{bW}]^{(-)}_{-}$'s is $0$ or a non-constant element in $R$, and it is a real polynomial of at least one indeterminate from $\mathbf{x}^{(-)}_{-}$'s and one indeterminate from  $\mathbf{y}^{(-)}_{-}$'s.
\end{itemize}

Therefore, if $\alpha(\Phi,\Psi) = 0$, we must have

\begin{align}
    0 &= \sum_{L \ge s > t \ge 0} \sum_{p=1}^{n_s}\sum_{q=1}^{n_t}\Phi_{(s,t):pq} \cdot [\mathbf{W}]^{(s,t)}_{pq} + \sum_{L \ge s > 0}\sum_{L > t \ge 0} \sum_{p=1}^{n_s}\sum_{q=1}^{n_t} \Phi_{(s,0)(L,t):pq} \cdot [\mathbf{WW}]^{(s,0)(L,t)}_{pq}, \label{eq:proof-linear-independency-of-alpha-phi-W}\\
    0 &= \sum_{L \ge s > 0}\sum_{p=1}^{n_s}\Phi_{(s):p} \cdot [\mathbf{b}]^{(s)}_{p}, \label{eq:proof-linear-independency-of-alpha-phi-b} \\
    0 &= \sum_{L \ge s > t > 0} \sum_{p=1}^{n_s} \Phi_{(s,t)(t):p} \cdot [\mathbf{Wb}]^{(s,t)(t)}_{p} + \sum_{L \ge s > 0}\sum_{L > t \ge 0} \sum_{p=1}^{n_s}\sum_{q=1}^{n_t} \Phi_{(s)(L,t):pq} \cdot [\mathbf{bW}]^{(s)(L,t)}_{pq}, \label{eq:proof-linear-independency-of-alpha-phi-Wb-bW} \\
    0 &= ~ \Phi_1 \cdot 1 \label{eq:proof-linear-independency-of-alpha-phi-1}.
\end{align}

We induce the constraints on $\Phi$ and $\Psi$ in these above equations by using the fact that a set of distinct monomials of $R$ is a linear independent set (see Proposition~\ref{appendix:prelim-8}).

\begin{itemize}
\item Equation~(\ref{eq:proof-linear-independency-of-alpha-phi-W}): Observe that
\begin{itemize}
\item If $L \ge s > t \ge 0$ and $(s,t) \neq (L,0)$, then the monomials $\mathbf{x}^{(s)}_{-} \cdot \ldots \cdot \mathbf{x}^{(t+1)}_{-}$'s only appear in the polynomials  $[\mathbf{W}]^{(s,t)}_{-}$'s. They do not appear in  the polynomials $[\mathbf{W}]^{(s',t')}_{-}$'s for all pairs $(s',t') \neq (s,t)$, and do not appear in the polynomials  $[\mathbf{WW}]^{(s',0)(L,t')}_{-}$'s for all pairs $(s',t')$. 
\item If $ L \ge s  > 0$, $L > t \ge 0$, and $s \neq t$, then the monomials $\left ( \mathbf{x}^{(s)}_{-} \cdot \ldots \cdot \mathbf{x}^{(1)}_{-} \right ) \cdot \left ( \mathbf{x}^{(L)}_{-} \cdot \ldots \cdot \mathbf{x}^{(t+1)}_{-} \right ) $'s only appear in the polynomials  $[\mathbf{WW}]^{(s,0)(L,t)}_{-}$'s. They do not appear in  the polynomials $[\mathbf{W}]^{(s',t')}_{-}$'s for all pairs $(s',t')$, and do not appear in the polynomials  $[\mathbf{WW}]^{(s',0)(L,t')}_{-}$'s for all pairs $(s',t') \neq (s,t)$.
\end{itemize}

So from Equation~(\ref{eq:proof-linear-independency-of-alpha-phi-W}), it implies that
\begin{align} \label{eq:appendix-main-theorem-1}
0 = \sum_{p=1}^{n_s}\sum_{q=1}^{n_t}\Phi_{(s,t):pq} \cdot [\mathbf{W}]^{(s,t)}_{pq}, 
\end{align}
for all $(s,t) \neq (L,0)$, and
\begin{align} \label{eq:appendix-main-theorem-2}
0 = \sum_{p=1}^{n_s}\sum_{q=1}^{n_t} \Phi_{(s,0)(L,t):pq} \cdot [\mathbf{WW}]^{(s,0)(L,t)}_{pq},
\end{align}
for all $(s,t)$ that $s \neq t$. The rest of the terms in Equation~(\ref{eq:proof-linear-independency-of-alpha-phi-W}) is

\begin{align} \label{eq:appendix-main-theorem-3}
0 = \sum_{p=1}^{n_L}\sum_{q=1}^{n_0}\Phi_{(L,0):pq} \cdot [\mathbf{W}]^{(L,0)}_{pq} + \sum_{L > s > 0} \sum_{p=1}^{n_s}\sum_{q=1}^{n_s} \Phi_{(s,0)(L,s):pq} \cdot [\mathbf{WW}]^{(s,0)(L,s)}_{pq}.
\end{align}

\item Equation~(\ref{eq:proof-linear-independency-of-alpha-phi-b}): Since the set of all $[\mathbf{b}]^{(-)}_{-}$'s, which is the set of all monomials $\mathbf{y}^{(-)}_{-}$'s, is a linear independent set in $R$, it implies that
	\begin{align} \label{eq:appendix-main-theorem-4}
		0 = \Phi_{(s):p},
	\end{align}
 for all $L \ge s > 0$ and $1 \le p \le n_s$.

\item Equation~(\ref{eq:proof-linear-independency-of-alpha-phi-Wb-bW}):  Observe that 
\begin{itemize}
\item If $L > s > t >0$, then the monomials $\left ( \mathbf{x}^{(s)}_{-} \cdot \ldots \cdot \mathbf{x}^{(t+1)}_{-} \right ) \cdot \mathbf{y}^{(t)}_{-}$'s only appear in the polynomials $[\mathbf{Wb}]^{(s,t)(t)}_{-}$'s. They do not appear in  the polynomials $[\mathbf{Wb}]^{(s',t')(t')}_{-}$'s for all $ L \ge s' > t' > 0$ that $(s',t') \neq (s,t)$, and do not appear in polynomials $[\mathbf{bW}]^{(s')(L,t')}_{-}$'s for all $L \ge s' >0$ and $L > t' \ge 0$.
\item If $ L \ge s  > 0$, $L > t \ge 0$, and $s \neq t$, then the monomials $\mathbf{y}^{(s)}_{-} \cdot \left ( \mathbf{x}^{(L)}_{-} \cdot \ldots \cdot \mathbf{x}^{(t+1)}_{-} \right )$'s only appear in the polynomials  $[\mathbf{bW}]^{(s)(L,t)}_{-}$'s. They do not appear in  the polynomials $[\mathbf{Wb}]^{(L,t')(t')}_{-}$'s for all $ L > t' > 0$, and do not appear in the polynomials  $[\mathbf{bW}]^{(s')(L,t')}_{-}$'s for all pairs $(s',t') \neq (s,t)$.
\item If $L > t > 0$, then the monomials $\mathbf{y}^{(t)}_{-} \cdot \left ( \mathbf{x}^{(L)}_{-} \cdot \ldots \cdot \mathbf{x}^{(t+1)}_{-} \right )$'s only appear in the polynomials $[\mathbf{Wb}]^{(L,t)(t)}_{-}$'s and appear in the polynomials $[\mathbf{bW}]^{(t)(L,t)}_{-}$'s. They do not appear in  the polynomials $[\mathbf{Wb}]^{(L,t')(t')}_{-}$'s for all $ L > t' > 0$ that $t' \neq t$, and do not appear in polynomials $[\mathbf{bW}]^{(s')(L,t')}_{-}$'s for all pair $(s',t')$ that $t' \neq t$. 
\end{itemize}
 
So from Equation~(\ref{eq:proof-linear-independency-of-alpha-phi-Wb-bW}), it implies that
\begin{align} \label{eq:appendix-main-theorem-5.5}
    0 = \sum_{p=1}^{n_s} \Phi_{(s,t)(t):p} \cdot [\mathbf{Wb}]^{(s,t)(t)}_{p},
\end{align}
for all $L >s>t>0$, and
\begin{align} \label{eq:appendix-main-theorem-5}
	0 = \sum_{p=1}^{n_s}\sum_{q=1}^{n_t} \Phi_{(s)(L,t):pq} \cdot [\mathbf{bW}]^{(s)(L,t)}_{pq}
\end{align}
for all $(s,t)$ that $s \neq t$, and
\begin{align} \label{eq:appendix-main-theorem-6}
	0 = \sum_{p=1}^{n_L} \Phi_{(L,t)(t):p} \cdot [\mathbf{Wb}]^{(L,t)(t)}_{p} + \sum_{p=1}^{n_t}\sum_{q=1}^{n_t} \Phi_{(t)(L,t):pq} \cdot [\mathbf{bW}]^{(t)(L,t)}_{pq} 
\end{align}
for all $L > t > 0$.

\item Equation~(\ref{eq:proof-linear-independency-of-alpha-phi-1}):  Clearly, it implies that
	\begin{align} \label{eq:appendix-main-theorem-7}
		 0 = \Phi_1.
	\end{align}
\end{itemize}

There are $8$ equations, Equations~(\ref{eq:appendix-main-theorem-1})-(\ref{eq:appendix-main-theorem-7}), that are derived. In Equations~(\ref{eq:appendix-main-theorem-4}) and~(\ref{eq:appendix-main-theorem-7}), the corresponding $\Phi_{-}$'s are directly characterized, and in Equations~(\ref{eq:appendix-main-theorem-1}),~(\ref{eq:appendix-main-theorem-2}),~(\ref{eq:appendix-main-theorem-3}),~(\ref{eq:appendix-main-theorem-5.5}),~(\ref{eq:appendix-main-theorem-5}),~(\ref{eq:appendix-main-theorem-6}), the corresponding $\Phi_{-}$'s are not. We will characterize the $\Phi_{-}$'s and $\Psi_{-}$'s in Equations~(\ref{eq:appendix-main-theorem-1}),~(\ref{eq:appendix-main-theorem-2}),~(\ref{eq:appendix-main-theorem-5.5}) and~(\ref{eq:appendix-main-theorem-5}) below by Lemma \ref{appendix:polynomial-ring-problem-lemma-2}, Lemma \ref{appendix:polynomial-ring-problem-lemma-3} and Lemma 
\ref{appendix:polynomial-ring-problem-lemma-4}, respectively. 
These lemma are stated as below (their proofs will be postponed to Section~\ref{subsec:proofs_of_lemma}).

\begin{lemma} \label{appendix:polynomial-ring-problem-lemma-2}
    For a pair $(s,t)$ such that $L \ge s > t \ge 0$ and $(s,t) \neq (L,0)$, if
    \begin{align}
        0 = \sum_{p=1}^{n_s}\sum_{q=1}^{n_t} \Phi_{(s,t):pq} \cdot [\mathbf{W}]^{(s,t)}_{pq},
    \end{align}
    then $\Phi_{(s,t):pq} = 0$ for all $p,q$.
\end{lemma}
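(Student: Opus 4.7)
The plan is to establish that the polynomials $\{[\mathbf{W}]^{(s,t)}_{pq}\}_{1 \le p \le n_s,\, 1 \le q \le n_t}$ are linearly independent in $R$. This immediately forces every coefficient $\Phi_{(s,t):pq}$ in the hypothesized vanishing combination to be zero. The core observation is that these polynomials have pairwise disjoint monomial supports, so no cancellation across distinct pairs $(p,q)$ can occur.

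First, I would apply Proposition~\ref{appendix:prelim-1} to expand
$$[\mathbf{W}]^{(s,t)}_{pq} \;=\; \sum_{k_{s-1}, \ldots, k_{t+1}} \mathbf{x}^{(s)}_{p, k_{s-1}} \cdot \mathbf{x}^{(s-1)}_{k_{s-1}, k_{s-2}} \cdots \mathbf{x}^{(t+1)}_{k_{t+1}, q},$$
a sum (with all coefficients equal to $1$) of distinct degree-$(s-t)$ monomials, each involving exactly one indeterminate from each factor $[\mathbf{W}]^{(i)}$ for $s \ge i > t$. Next, I would observe that from any such monomial one can read off $p$ uniquely as the first subscript of its $\mathbf{x}^{(s)}$-factor and $q$ uniquely as the second subscript of its $\mathbf{x}^{(t+1)}$-factor. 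Consequently, for distinct index pairs $(p,q) \neq (p',q')$, the monomial supports of $[\mathbf{W}]^{(s,t)}_{pq}$ and $[\mathbf{W}]^{(s,t)}_{p'q'}$ are disjoint, and each $[\mathbf{W}]^{(s,t)}_{pq}$ is itself nonzero (it contains, for example, the monomial obtained by setting all intermediate indices $k_{s-1}, \ldots, k_{t+1}$ equal to $1$). Since the set of all unit-coefficient monomials is a basis of $R$ by Proposition~\ref{appendix:prelim-8}, the hypothesized vanishing combination forces the coefficient of each monomial to vanish, yielding $\Phi_{(s,t):pq} = 0$ for every $(p,q)$.

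The only delicate point is verifying that the monomial supports across different pairs really are disjoint, and this rests entirely on the unique recovery of $(p,q)$ from any monomial of $[\mathbf{W}]^{(s,t)}_{pq}$, which is immediate from the specific shape of the expansion above. I note that the hypothesis $(s,t) \neq (L,0)$ is not actually used within this linear-independence argument itself; rather, it reflects the surrounding context of Equation~(\ref{eq:appendix-main-theorem-3}), where the case $(s,t) = (L,0)$ must be handled jointly with the $[\mathbf{WW}]^{(s,0)(L,s)}$-terms whose entries share the same total degree $L$ and therefore cannot be separated by monomial support alone. That overlap is precisely what the subsequent lemma (Lemma~\ref{appendix:polynomial-ring-problem-lemma-3}) is designed to handle.
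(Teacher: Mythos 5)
Your proof is correct and follows exactly the route the paper intends: the paper omits the proof of this lemma, stating only that it follows ``directly from the coefficient comparison of two equal polynomials,'' and your argument---expanding each $[\mathbf{W}]^{(s,t)}_{pq}$ into monomials via Proposition~\ref{appendix:prelim-1}, recovering $(p,q)$ from the outer indices of the $\mathbf{x}^{(s)}$- and $\mathbf{x}^{(t+1)}$-factors to get disjoint supports, and invoking the linear independence of monomials from Proposition~\ref{appendix:prelim-8}---is precisely that coefficient comparison made explicit. Your side remark that the hypothesis $(s,t)\neq(L,0)$ is not needed for the independence argument itself, but only for the surrounding separation from the $[\mathbf{WW}]$-terms of equal degree, is also accurate.
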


\begin{lemma} \label{appendix:polynomial-ring-problem-lemma-3}
    For a pair $(s,t)$ such that $L \ge s > 0$, $L > t \ge 0$, if
    \begin{align}
        0 = \sum_{p=1}^{n_s}\sum_{q=1}^{n_t} \Phi_{(s,0)(L,t):pq} \cdot [\mathbf{WW}]^{(s,0)(L,t)}_{pq},
    \end{align}
    then $\Phi_{(s,0)(L,t):pq} = 0$ for all $p,q$ or $\Psi^{(s,0)(L,t)} = 0$.
\end{lemma}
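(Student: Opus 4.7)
The plan is to treat the hypothesis as a polynomial identity in $R$ and extract constraints on the entries of $\Phi$ and $\Psi$ by reading off the coefficients of carefully chosen monomials. Expanding
\[
[\mathbf{WW}]^{(s,0)(L,t)}_{pq} = \sum_{a,b}[\mathbf{W}]^{(s,0)}_{pa}\,\Psi_{ab}\,[\mathbf{W}]^{(L,t)}_{bq},
\]
the assumption becomes
\[
\sum_{p,q,a,b,\vec{k},\vec{l}}\Phi_{pq}\,\Psi_{ab}\cdot\bigl(\mathbf{x}^{(s)}_{p k_{s-1}}\cdots\mathbf{x}^{(1)}_{k_1 a}\bigr)\cdot\bigl(\mathbf{x}^{(L)}_{b l_{L-1}}\cdots \mathbf{x}^{(t+1)}_{l_{t+1} q}\bigr)=0 \quad \text{in } R.
\]
The goal is to show that the coefficient of a well-chosen monomial vanishes only when the tensor $\Phi\otimes\Psi$ is zero.

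For the easier case $s\le t$, the two sets of indeterminates $\{\mathbf{x}^{(1)},\dots,\mathbf{x}^{(s)}\}$ and $\{\mathbf{x}^{(t+1)},\dots,\mathbf{x}^{(L)}\}$ are disjoint. Each product monomial in the expansion therefore uniquely determines the outer indices $(p,a,b,q)$ as well as the interior path indices (the left factor pins down $p,a$ and $\vec{k}$; the right factor independently pins down $b,q$ and $\vec{l}$). Reading off the coefficient of every such monomial forces $\Phi_{pq}\Psi_{ab}=0$ for all $(p,q,a,b)$. A one-line tensor-factor argument (if $\Phi_{p_0q_0}\neq 0$, then $\Psi_{ab}=0$ for all $a,b$, in the spirit of Proposition~\ref{appendix:prelim-9}) then yields $\Phi=0$ or $\Psi=0$.

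The genuine work is the overlapping case $s>t$, where the indeterminates $\mathbf{x}^{(t+1)},\dots,\mathbf{x}^{(s)}$ appear in both factors, so by commutativity one monomial may arise from several tuples. The plan is to fix $(p,q,a,b)$ and choose \emph{generic} path indices (all distinct from each other and from $p,q,a,b$, which is possible whenever the relevant $n_i$ are at least $2$; the $n_i=1$ situation is trivial). Using the ``tail'' indeterminates $\mathbf{x}^{(1)},\dots,\mathbf{x}^{(t)}$ (which occur only in the left factor) together with $\mathbf{x}^{(s+1)},\dots,\mathbf{x}^{(L)}$ (which occur only in the right factor, when $s<L$), one can trace each chain of factors and reconstruct $(p,q,a,b)$ from the monomial, giving coefficient $\Phi_{pq}\Psi_{ab}$ and concluding as in the disjoint case. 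The obstacle is the degenerate sub-case $s=L,\,t=0$, where both factors equal $[\mathbf{W}]^{(L,0)}$ and no ``tail'' exists: the swap $(p,q,a,b,\vec{k},\vec{l})\mapsto(b,a,q,p,\vec{l},\vec{k})$ identifies two tuples producing the same generic monomial, so the coefficient becomes the two-term expression
\[
\Phi_{pq}\Psi_{ab} + \Phi_{ba}\Psi_{qp} = 0 \qquad \forall (p,q,a,b).
\]
From this, setting $p=b,\,q=a$ yields $\Phi_{ba}\Psi_{ab}=0$ for all $a,b$; if $\Phi\not\equiv 0$, pick $\Phi_{p_0 q_0}\neq 0$, whence $\Psi_{q_0 p_0}=0$, and substituting $(p,q)=(p_0,q_0)$ back into the two-term relation forces $\Phi_{p_0 q_0}\Psi_{ab}=0$, hence $\Psi=0$. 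The hard part of the whole argument is the combinatorial bookkeeping of which tuples collapse onto the same monomial in the overlapping regime, culminating in this two-term constraint for the $s=L,\,t=0$ sub-case.
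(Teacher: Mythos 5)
Your route is genuinely different from the paper's. The paper first proves the stronger single-entry statement (Lemma~\ref{appendix:polynomial-ring-problem-lemma-1}: $[\mathbf{WW}]^{(s,0)(L,t)}_{pq}=0$ for one pair $(p,q)$ already forces $\Psi^{(s,0)(L,t)}=0$) by \emph{substituting} values for blocks of indeterminates --- collapsing the middle product $[\mathbf{W}]^{(L-1)}\cdots[\mathbf{W}]^{(2)}$ to a matrix unit $E_{11}$ in the $(L,0)$ case, and evaluating the disjoint-variable middle block $[\mathbf{W}]^{(t)}\cdots[\mathbf{W}]^{(1)}\Psi[\mathbf{W}]^{(L)}\cdots[\mathbf{W}]^{(s+1)}$ at a real point where it is a nonzero matrix (via Corollary~\ref{appendix:prelim-7}) in the $s>t$ case --- and then deduces the lemma in question by comparing coefficients of the resulting sum. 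You instead keep everything symbolic and read off monomial coefficients of the full bilinear sum, landing on $\Phi_{pq}\Psi_{ab}=0$ (or a two-term relation) and finishing with a rank-one tensor argument. Your handling of the disjoint-variable case and of the $(s,t)=(L,0)$ swap relation $\Phi_{pq}\Psi_{ab}+\Phi_{ba}\Psi_{qp}=0$ is correct and is arguably more self-contained than the paper's, since it never needs the auxiliary evaluation corollary.

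There is, however, a concrete gap in the overlapping case $s>t$: your unique-reconstruction argument relies on choosing path indices that are pairwise distinct at every layer, and your claim that ``the $n_i=1$ situation is trivial'' is only true when \emph{all} widths are $1$. If a single intermediate layer $i_0$ with $t<i_0<s$ has $n_{i_0}=1$ (a bottleneck), both chains are forced through the same node, the two layer-$(i_0+1)$ factors share the same second index, and the tail-tracing no longer disambiguates: partial swaps of the two chains above the bottleneck produce additional tuples collapsing onto the same monomial. The coefficient is then not $\Phi_{pq}\Psi_{ab}$ but a sum such as $\Phi_{pq}\Psi_{ab}+\Phi_{bq}\Psi_{ap}$ (when $s=L$, $t\ge 1$), or longer sums with several bottlenecks, and the two-term analysis you carried out only for $(L,0)$ does not cover these; moreover resolving $\Phi_{pq}\Psi_{ab}+\Phi_{bq}\Psi_{ap}=0$ via Proposition~\ref{appendix:prelim-9} only yields ``$\Phi_{\ast q}=0$ or $\Psi_{a\ast}=0$'' for each fixed $(a,q)$, which requires a further argument to reach the dichotomy in the statement. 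The cleanest patch is to borrow the paper's substitution idea: evaluate the indeterminates of layers $1,\dots,t$ and $s+1,\dots,L$ at a point where the middle block is a nonzero real matrix, reducing uniformly (for all widths) to a product of the form $A\,\overline{\Psi}\,A$ with $A=[\mathbf{W}]^{(s,t)}$, which your $(L,0)$-style swap analysis then handles.
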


\begin{lemma} \label{appendix:polynomial-ring-problem-lemma-4.5}
    For a pair $(s,t)$ such that $L \ge s >t >0$, if
    \begin{align}
         0 = \sum_{p=1}^{n_s} \Phi_{(s,t)(t):p} \cdot [\mathbf{Wb}]^{(s,t)(t)}_{p},
    \end{align}
    then $\Phi_{(s,t)(t):p}=0$ for all $p$.
\end{lemma}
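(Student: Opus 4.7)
The plan is to reduce this statement to Lemma~\ref{appendix:polynomial-ring-problem-lemma-2} by peeling off the bias indeterminates. First I would unfold the definition of the stable term, using $[\mathbf{b}]^{(t)}_q = \mathbf{y}^{(t)}_q$:
\begin{equation*}
[\mathbf{Wb}]^{(s,t)(t)}_p \;=\; \sum_{q=1}^{n_t} [\mathbf{W}]^{(s,t)}_{pq} \cdot \mathbf{y}^{(t)}_q,
\end{equation*}
so that the assumed identity rearranges into
\begin{equation*}
0 \;=\; \sum_{q=1}^{n_t} \mathbf{y}^{(t)}_q \cdot \Bigl( \sum_{p=1}^{n_s} \Phi_{(s,t)(t):p} \cdot [\mathbf{W}]^{(s,t)}_{pq} \Bigr).
\end{equation*}

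Next I would exploit the disjointness of indeterminate types. The inner polynomial $\sum_p \Phi_{(s,t)(t):p} \cdot [\mathbf{W}]^{(s,t)}_{pq}$ involves only the $\mathbf{x}$-indeterminates $\mathbf{x}^{(i)}_{-}$ with $s \ge i > t$, so none of the $\mathbf{y}^{(t)}_q$'s appears in it. Viewing the right-hand side as a polynomial of degree one in the distinct indeterminates $\mathbf{y}^{(t)}_1, \ldots, \mathbf{y}^{(t)}_{n_t}$ with coefficients in the subring $\mathbb{R}[\mathbf{x}^{(-)}_{-}]$, linear independence of distinct monomials (Proposition~\ref{appendix:prelim-8}) forces each $\mathbf{y}^{(t)}_q$-coefficient to vanish:
\begin{equation*}
\sum_{p=1}^{n_s} \Phi_{(s,t)(t):p} \cdot [\mathbf{W}]^{(s,t)}_{pq} \;=\; 0 \qquad \text{for every } q=1,\ldots,n_t.
\end{equation*}

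Finally, I would invoke Lemma~\ref{appendix:polynomial-ring-problem-lemma-2} for the pair $(s,t)$, which is admissible because the hypothesis $t>0$ rules out the excluded case $(s,t)=(L,0)$. Fixing any one index $q_0 \in \{1,\ldots,n_t\}$ and instantiating Lemma~\ref{appendix:polynomial-ring-problem-lemma-2} with the coefficient array $\Phi_{(s,t):pq} := \Phi_{(s,t)(t):p} \cdot \delta_{q,q_0}$ immediately yields $\Phi_{(s,t)(t):p}=0$ for all $p$, as desired. There is no substantive obstacle here: the bulk of the polynomial linear-independence reasoning has already been carried out in Lemma~\ref{appendix:polynomial-ring-problem-lemma-2}, and the only thing to check carefully is that the $\mathbf{x}$-block and $\mathbf{y}$-block of indeterminates are disjoint, which is immediate from the construction of $R=\mathbb{R}[I]$.
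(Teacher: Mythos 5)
Your proof is correct. Note that the paper never actually writes out a proof of this lemma: it states only that Lemmas~\ref{appendix:polynomial-ring-problem-lemma-2} through \ref{appendix:polynomial-ring-problem-lemma-4} follow ``directly from the coefficient comparison of two equal polynomials'' together with Lemma~\ref{appendix:polynomial-ring-problem-lemma-1}, and then proves only Lemma~\ref{appendix:polynomial-ring-problem-lemma-1}. Your argument fills that gap by a slightly different organization than the remark suggests: rather than comparing coefficients of the monomials $\mathbf{x}^{(s)}_{p,k_{s-1}}\cdots\mathbf{x}^{(t+1)}_{k_{t+1},q}\cdot\mathbf{y}^{(t)}_{q}$ appearing in $[\mathbf{Wb}]^{(s,t)(t)}_p$ in a single pass, you first strip off the bias indeterminates --- using that the $\mathbf{y}^{(t)}_q$ are distinct indeterminates disjoint from the $\mathbf{x}$-block, so by Proposition~\ref{appendix:prelim-8} each $\mathbf{y}^{(t)}_q$-coefficient, which lies in $\mathbb{R}[\mathbf{x}^{(-)}_{-}]$, must vanish --- and then delegate the resulting identity $\sum_{p}\Phi_{(s,t)(t):p}\cdot[\mathbf{W}]^{(s,t)}_{pq_0}=0$ to Lemma~\ref{appendix:polynomial-ring-problem-lemma-2}, which is indeed applicable since $t>0$ rules out the excluded pair $(L,0)$. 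Both routes ultimately rest on the same fact (linear independence of distinct monomials in $R$); your reduction buys reuse of an already-stated lemma and avoids re-analyzing which monomials of $[\mathbf{W}]^{(s,t)}_{pq}$ could coincide across different $p$, at the mild cost of the artificial coefficient array $\Phi_{(s,t):pq}=\Phi_{(s,t)(t):p}\cdot\delta_{q,q_0}$, which is harmless because Lemma~\ref{appendix:polynomial-ring-problem-lemma-2} holds for arbitrary real coefficients.
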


\begin{lemma} \label{appendix:polynomial-ring-problem-lemma-4}
    For a pair $(s,t)$ such that $L \ge s > 0$, $L > t \ge 0$, if
    \begin{align}
        0 = \sum_{p=1}^{n_s}\sum_{q=1}^{n_t} \Phi_{(s)(L,t):pq} \cdot [\mathbf{bW}]^{(s)(L,t)}_{pq},
    \end{align}
    then $\Phi_{(s)(L,t):pq} = 0$ for all $p,q$ or $\Psi^{(s)(L,t)} = 0$.
\end{lemma}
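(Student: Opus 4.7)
The plan is to expand $[\mathbf{bW}]^{(s)(L,t)}_{pq}$ as an explicit sum of monomials in $R$ and then invoke the linear independence of distinct monomials from Proposition~\ref{appendix:prelim-8}. Write $\Psi^{(s)(L,t)} = (\psi_1, \ldots, \psi_{n_L})$. Since $[\mathbf{b}]^{(s)} \in R^{n_s \times 1}$, $\Psi^{(s)(L,t)} \in \mathbb{R}^{1 \times n_L}$, and $[\mathbf{W}]^{(L,t)} \in R^{n_L \times n_t}$, the definition gives
\begin{align*}
[\mathbf{bW}]^{(s)(L,t)}_{pq} = \mathbf{y}^{(s)}_p \cdot \sum_{r=1}^{n_L} \psi_r \cdot [\mathbf{W}]^{(L,t)}_{r,q},
\end{align*}
and expanding the matrix product $[\mathbf{W}]^{(L,t)}_{r,q}$ via Proposition~\ref{appendix:prelim-1} produces a sum of monomials indexed by internal indices $(k_{L-1}, \ldots, k_{t+1})$.

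Substituting into the hypothesis $\sum_{p,q}\Phi_{(s)(L,t):pq} \cdot [\mathbf{bW}]^{(s)(L,t)}_{pq}=0$ and collecting terms yields
\begin{align*}
0 = \sum_{p,q,r,k_{L-1},\ldots,k_{t+1}} \Phi_{(s)(L,t):pq}\,\psi_r \cdot \mathbf{y}^{(s)}_p \cdot \mathbf{x}^{(L)}_{r,k_{L-1}} \cdot \mathbf{x}^{(L-1)}_{k_{L-1},k_{L-2}} \cdots \mathbf{x}^{(t+1)}_{k_{t+1},q}.
\end{align*}
The next step is to verify that as the tuple $(p, q, r, k_{L-1}, \ldots, k_{t+1})$ ranges over all admissible values, the resulting monomials are pairwise distinct. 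This is straightforward: the single $\mathbf{y}$-factor identifies $p$; the unique $\mathbf{x}^{(L)}$-factor identifies $(r, k_{L-1})$; each intermediate factor $\mathbf{x}^{(i)}_{k_i, k_{i-1}}$ pins down a pair of adjacent internal indices; and finally $\mathbf{x}^{(t+1)}_{k_{t+1}, q}$ identifies $(k_{t+1}, q)$. By Proposition~\ref{appendix:prelim-8}, the coefficient of each monomial must vanish, giving $\Phi_{(s)(L,t):pq} \cdot \psi_r = 0$ for every admissible $p, q, r$.

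To conclude, I would perform a simple case analysis: if every $\psi_r = 0$, then $\Psi^{(s)(L,t)} = 0$; otherwise some $\psi_{r_0}\neq 0$, which forces $\Phi_{(s)(L,t):pq} = 0$ for all $p, q$. This yields the desired dichotomy. The only real subtlety is the bookkeeping that ensures monomial distinctness across the large index set; this is entirely analogous to what is needed for Lemma~\ref{appendix:polynomial-ring-problem-lemma-3} on $[\mathbf{WW}]$, except that here the lone $\mathbf{y}^{(s)}_p$ factor cleanly separates contributions indexed by $p$, so no additional combinatorial argument (such as Proposition~\ref{appendix:prelim-10}) is required.
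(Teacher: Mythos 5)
Your proof is correct. The paper actually omits the proof of this lemma, stating only that Lemmas~\ref{appendix:polynomial-ring-problem-lemma-2}--\ref{appendix:polynomial-ring-problem-lemma-4} follow from ``coefficient comparison of two equal polynomials'' together with Lemma~\ref{appendix:polynomial-ring-problem-lemma-1}; your direct monomial expansion is exactly that coefficient comparison, carried out explicitly. Your key observations are sound: each monomial $\mathbf{y}^{(s)}_p \cdot \mathbf{x}^{(L)}_{r,k_{L-1}}\cdots\mathbf{x}^{(t+1)}_{k_{t+1},q}$ contains exactly one factor per level, so the full index tuple is recoverable from the monomial and distinct tuples give distinct monomials; linear independence of monomials then forces $\Phi_{(s)(L,t):pq}\cdot\psi_r=0$ for all $p,q,r$, and the dichotomy follows. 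You are also right that the lone $\mathbf{y}^{(s)}_p$ factor (an indeterminate disjoint from all the $\mathbf{x}$'s, regardless of how $s$ relates to the range $(t,L]$) is what makes this case simpler than the $[\mathbf{WW}]$ case, where the two weight products can share levels and the substitution/case analysis of Lemma~\ref{appendix:polynomial-ring-problem-lemma-1} is needed.
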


\begin{remark}
    The reason that we skip the characterizations of $\Phi_{-}$'s and $\Psi_{-}$'s in Equations~(\ref{eq:appendix-main-theorem-3}) and~(\ref{eq:appendix-main-theorem-6}) is they can be concretely characterized. For instance, consider the case when $n_L = \ldots = n_2 = n_1 = n_0 = 1$. From Equation~(\ref{eq:appendix-main-theorem-3}), we have
    \begin{align*}
        0 &= \sum_{p=1}^{1}\sum_{q=1}^{1}\Phi_{(L,0):pq} \cdot [\mathbf{W}]^{(L,0)}_{pq} + \sum_{L > s > 0} \sum_{p=1}^{1}\sum_{q=1}^{1} \Phi_{(s,0)(L,s):pq} \cdot [\mathbf{WW}]^{(s,0)(L,s)}_{pq} \\
        &= \Phi_{(L,0):1,1} \cdot [\mathbf{W}]^{(L,0)}_{1,1} + \sum_{L > s > 0} \Phi_{(s,0)(L,s):1,1} \cdot [\mathbf{WW}]^{(s,0)(L,s)}_{1,1} \\
        &= \Phi_{(L,0):1,1} \cdot \left ( \mathbf{x}^{(L)}_{1,1} \cdot \ldots \cdot \mathbf{x}^{(2)}_{1,1} \cdot \mathbf{x}^{(1)}_{1,1} \right ) \\
        &\hspace{20pt}+ \sum_{L > s > 0} \Phi_{(s,0)(L,s):1,1} \cdot \left(\mathbf{x}^{(s)}_{1,1} \cdot \ldots \cdot \mathbf{x}^{(2)}_{1,1} \cdot \mathbf{x}^{(1)}_{1,1} \right) \cdot \Psi^{(s,0)(L,s)} \cdot \left ( \mathbf{x}^{(L)}_{1,1} \cdot \ldots \cdot \mathbf{x}^{(s+2)}_{1,1} \cdot \mathbf{x}^{(s+1)}_{1,1}\right) \\
        &= \Phi_{(L,0):1,1} \cdot \left ( \mathbf{x}^{(L)}_{1,1} \cdot \ldots \cdot \mathbf{x}^{(2)}_{1,1} \cdot \mathbf{x}^{(1)}_{1,1} \right ) \\
        &\hspace{20pt}+ \sum_{L > s > 0} \Phi_{(s,0)(L,s):1,1} \cdot \Psi^{(s,0)(L,s)} \cdot \left(\mathbf{x}^{(s)}_{1,1} \cdot \ldots \cdot \mathbf{x}^{(2)}_{1,1} \cdot \mathbf{x}^{(1)}_{1,1} \right) \cdot \left ( \mathbf{x}^{(L)}_{1,1} \cdot \ldots \cdot \mathbf{x}^{(s+2)}_{1,1} \cdot \mathbf{x}^{(s+1)}_{1,1}\right) \\
        &= \Phi_{(L,0):1,1} \cdot \left ( \mathbf{x}^{(L)}_{1,1} \cdot \ldots \cdot \mathbf{x}^{(2)}_{1,1} \cdot \mathbf{x}^{(1)}_{1,1} \right ) \\
        &\hspace{20pt}+ \sum_{L > s > 0} \Phi_{(s,0)(L,s):1,1} \cdot \Psi^{(s,0)(L,s)} \cdot \left ( \mathbf{x}^{(L)}_{1,1} \cdot \ldots \cdot \mathbf{x}^{(2)}_{1,1} \cdot \mathbf{x}^{(1)}_{1,1} \right ) \\
        &= \left (\Phi_{(L,0):1,1} + \sum_{L > s > 0} \Phi_{(s,0)(L,s):1,1} \cdot \Psi^{(s,0)(L,s)} \right ) \cdot \left ( \mathbf{x}^{(L)}_{1,1} \cdot \ldots \cdot \mathbf{x}^{(2)}_{1,1} \cdot \mathbf{x}^{(1)}_{1,1} \right ). 
    \end{align*}
    
    It implies that
    \begin{align} \label{eq:appendix-main-theorem-8}
        0 = \Phi_{(L,0):1,1} + \sum_{L > s > 0} \Phi_{(s,0)(L,s):1,1} \cdot \Psi^{(s,0)(L,s)}.
    \end{align}
    From Equation~(\ref{eq:appendix-main-theorem-8}), we can not derive a more concrete relation on the $\Phi_{-}$'s and $\Psi_{-}$'s. Similarly, from Equation~(\ref{eq:appendix-main-theorem-6}), we have
    
    \begin{align*}
        0 &= \sum_{p=1}^{1} \Phi_{(L,t)(t):p} \cdot [\mathbf{Wb}]^{(L,t)(t)}_{p} + \sum_{p=1}^{1}\sum_{q=1}^{1} \Phi_{(t)(L,t):pq} \cdot [\mathbf{bW}]^{(t)(L,t)}_{pq} \\
        &=  \Phi_{(L,t)(t):1} \cdot [\mathbf{Wb}]^{(L,t)(t)}_{1} + \Phi_{(t)(L,t):1,1} \cdot [\mathbf{bW}]^{(t)(L,t)}_{1,1} \\
        &= \Phi_{(L,t)(t):1} \cdot \left( \mathbf{x}^{(L)}_{1,1} \cdot \ldots \cdot \mathbf{x}^{(t+2)}_{1,1} \cdot \mathbf{x}^{(t+1)}_{1,1} \right) \cdot \left(\mathbf{y}^{(t+1)}_{1,1} \right) \\
        & \hspace{60pt} +  \Phi_{(t)(L,t):1,1} \cdot \left(\mathbf{y}^{(t+1)}_{1,1} \right)  \cdot \Psi^{(t)(L,t)} \cdot \left( \mathbf{x}^{(L)}_{1,1} \cdot \ldots \cdot \mathbf{x}^{(t+2)}_{1,1} \cdot \mathbf{x}^{(t+1)}_{1,1} \right) \\
        &= \Phi_{(L,t)(t):1} \cdot \left( \mathbf{x}^{(L)}_{1,1} \cdot \ldots \cdot \mathbf{x}^{(t+2)}_{1,1} \cdot \mathbf{x}^{(t+1)}_{1,1} \right) \cdot \left(\mathbf{y}^{(t+1)}_{1,1} \right) \\
        & \hspace{60pt} +  \Phi_{(t)(L,t):1,1}  \cdot \Psi^{(t)(L,t)} \cdot \left(\mathbf{y}^{(t+1)}_{1,1} \right) \cdot \left( \mathbf{x}^{(L)}_{1,1} \cdot \ldots \cdot \mathbf{x}^{(t+2)}_{1,1} \cdot \mathbf{x}^{(t+1)}_{1,1} \right) \\
        &= \Phi_{(L,t)(t):1} \cdot \left( \mathbf{x}^{(L)}_{1,1} \cdot \ldots \cdot \mathbf{x}^{(t+2)}_{1,1} \cdot \mathbf{x}^{(t+1)}_{1,1} \right) \cdot \left(\mathbf{y}^{(t+1)}_{1,1} \right) \\
        & \hspace{60pt} +  \Phi_{(t)(L,t):1,1}  \cdot \Psi^{(t)(L,t)} \cdot \left( \mathbf{x}^{(L)}_{1,1} \cdot \ldots \cdot \mathbf{x}^{(t+2)}_{1,1} \cdot \mathbf{x}^{(t+1)}_{1,1} \right) \cdot \left(\mathbf{y}^{(t+1)}_{1,1} \right) \\
        &= \left(\Phi_{(L,t)(t):1}+  \Phi_{(t)(L,t):1,1}   \cdot \Psi^{(t)(L,t)} \right) \cdot \left( \mathbf{x}^{(L)}_{1,1} \cdot \ldots \cdot \mathbf{x}^{(t+2)}_{1,1} \cdot \mathbf{x}^{(t+1)}_{1,1} \right) \cdot \left(\mathbf{y}^{(t+1)}_{1,1} \right)
    \end{align*}
    It implies that
    \begin{align} \label{eq:appendix-main-theorem-9}
        0 = \Phi_{(L,t)(t):1}+  \Phi_{(t)(L,t):1,1}   \cdot \Psi^{(t)(L,t)}.
    \end{align}
    From Equation~(\ref{eq:appendix-main-theorem-9}), we can not derive a more concrete relation on the $\Phi_{-}$'s and $\Psi_{-}$'s.
\end{remark}

Combining the discussions above, we obtain the following necessary for the coefficients \( \Phi \) and \( \Psi \) such that \( \alpha(\Phi, \Psi) = 0 \).

\begin{theorem} \label{appendix:polynomial-ring-main-problem}
    Let $\alpha(\Phi, \Psi)$ be a polynomial given in~\eqref{eq:alpha}. 
If $\alpha(\Phi,\Psi) = 0$, then the following condition holds:
\begin{enumerate}
    \item For all $L \ge s > t \ge 0$ with $(s,t) \neq (L,0)$, and for all $p,q$, we have 
    \begin{align}
        \Phi_{(s,t):pq} = 0.
    \end{align}
    
    \item For all  $L \ge s > 0$, $L > t \ge 0$ with $s \neq t$, we have
    \begin{align}
        \Phi_{(s,0)(L,t):pq} = 0,
    \end{align}
    for all $p,q$, or
    \begin{align}
        \Psi^{(s,0)(L,t)} = 0.
    \end{align}

    \item We have
    \begin{align}
        0 = \sum_{p=1}^{n_L}\sum_{q=1}^{n_0}\Phi_{(L,0):pq} \cdot [\mathbf{W}]^{(L,0)}_{pq} + \sum_{L > s > 0} \sum_{p=1}^{n_s}\sum_{q=1}^{n_s} \Phi_{(s,0)(L,s):pq} \cdot [\mathbf{WW}]^{(s,0)(L,s)}_{pq}.
\end{align}

    \item For all $L > s > t > 0$, and for all $p$, we have
    \begin{align}
        \Phi_{(s,t)(t):p}=0.
    \end{align}

    \item For all  $L \ge s > 0$, $L > t \ge 0$ with $s \neq t$, we have
    \begin{align}
        \Phi_{(s)(L,t):pq} = 0,
    \end{align}
    for all $p,q$, or
    \begin{align}
        \Psi^{(s)(L,t)} = 0.
    \end{align}

    \item For all $L > t > 0$, we have
    \begin{align}
        0 = \sum_{p=1}^{n_L} \Phi_{(L,t)(t):p} \cdot [\mathbf{Wb}]^{(L,t)(t)}_{p} + \sum_{p=1}^{n_t}\sum_{q=1}^{n_t} \Phi_{(t)(L,t):pq} \cdot [\mathbf{bW}]^{(t)(L,t)}_{pq}.
    \end{align}

    \item For all $L \ge s >0$ and for all $p$, we have \begin{align}
        \Phi_{(s):p} = 0.
    \end{align}
    
    \item We have
    \begin{align}
        \Phi_1 = 0.
    \end{align}
    
\end{enumerate}
\end{theorem}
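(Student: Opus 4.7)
The plan is to exploit the fact that the monomial basis of the polynomial ring $R$ is linearly independent, and then match monomials in $\alpha(\Phi,\Psi)$ by the specific ``signature'' of indeterminates they contain. The key structural observation is that each of the building blocks $[\mathbf{W}]^{(s,t)}_{pq}$, $[\mathbf{b}]^{(s)}_p$, $[\mathbf{Wb}]^{(s,t)(t)}_p$, $[\mathbf{bW}]^{(s)(L,t)}_{pq}$, $[\mathbf{WW}]^{(s,0)(L,t)}_{pq}$ is a homogeneous polynomial whose monomials use a completely determined pattern of layer-indices on the $\mathbf{x}^{(\cdot)}_{\cdot\cdot}$ and $\mathbf{y}^{(\cdot)}_{\cdot}$ variables. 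In particular, $[\mathbf{W}]^{(s,t)}_{pq}$ uses exactly one $\mathbf{x}^{(i)}_{-}$ for each $t<i\le s$ and no $\mathbf{y}$'s; $[\mathbf{b}]^{(s)}_p$ uses exactly one $\mathbf{y}^{(s)}_{-}$; $[\mathbf{Wb}]^{(s,t)(t)}_p$ uses one $\mathbf{x}^{(i)}_{-}$ per $t<i\le s$ and one $\mathbf{y}^{(t)}_{-}$; and so on.

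First, I would split $\alpha(\Phi,\Psi)=0$ into four equations by grouping monomials according to whether they involve only $\mathbf{x}$'s, only $\mathbf{y}$'s, both types, or are constant; this yields Equations~(\ref{eq:proof-linear-independency-of-alpha-phi-W})--(\ref{eq:proof-linear-independency-of-alpha-phi-1}). The constant equation immediately gives $\Phi_1=0$, and the pure-$\mathbf{y}$ equation, being a linear combination of distinct monomials in the variables $\mathbf{y}^{(s)}_p$, directly forces each $\Phi_{(s):p}=0$ by the linear independence of monomials (Proposition~\ref{appendix:prelim-8}).

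Next, I would refine Equation~(\ref{eq:proof-linear-independency-of-alpha-phi-W}) by tracking the set of layer indices on the $\mathbf{x}$'s: the terms from $[\mathbf{W}]^{(s,t)}$ with $(s,t)\neq(L,0)$ use a contiguous block of layers $\{t+1,\ldots,s\}$ which cannot coincide with any block arising from $[\mathbf{WW}]^{(s',0)(L,t')}$ (the latter always uses layers $\{1,\ldots,s'\}\cup\{t'+1,\ldots,L\}$). This lets me isolate one equation per pair $(s,t)$, plus a single leftover equation coupling $[\mathbf{W}]^{(L,0)}$ with the $[\mathbf{WW}]^{(s,0)(L,s)}$ terms (the only terms whose signatures overlap that of $[\mathbf{W}]^{(L,0)}$). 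An analogous decomposition applies to the mixed equation~(\ref{eq:proof-linear-independency-of-alpha-phi-Wb-bW}), where $[\mathbf{Wb}]^{(L,t)(t)}$ and $[\mathbf{bW}]^{(t)(L,t)}$ are exactly the pair whose signatures coincide. To each isolated single-family equation I then apply Lemmas~\ref{appendix:polynomial-ring-problem-lemma-2}--\ref{appendix:polynomial-ring-problem-lemma-4}, obtaining the vanishing conclusions stated in items (1), (2), (4), (5) of the theorem, while the unresolved couplings become items (3) and (6).

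The main technical obstacle is the monomial-signature bookkeeping: I must verify carefully that for every pair of distinct index tuples across the families $\{[\mathbf{W}]^{(s,t)}\}$ and $\{[\mathbf{WW}]^{(s,0)(L,t)}\}$ (and similarly for the mixed families), the sets of layer-indices appearing in their monomials are disjoint except in the precise overlapping cases that produce items (3) and (6). Once this case analysis is done, the conclusion follows by applying linear independence of monomials within each isolated group and quoting the four lemmas. No further calculation is needed, since the lemmas already handle the delicate case where a single family like $[\mathbf{WW}]^{(s,0)(L,t)}$ or $[\mathbf{bW}]^{(s)(L,t)}$ involves the auxiliary connection matrix $\Psi^{-}$.
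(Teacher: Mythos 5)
Your proposal follows essentially the same route as the paper's proof: the same four-way split of $\alpha(\Phi,\Psi)=0$ by monomial type (pure $\mathbf{x}$, pure $\mathbf{y}$, mixed, constant), the same layer-index signature bookkeeping to isolate one equation per index pair while correctly identifying the two irreducible couplings — $[\mathbf{W}]^{(L,0)}$ with $[\mathbf{WW}]^{(s,0)(L,s)}$ for item (3) and $[\mathbf{Wb}]^{(L,t)(t)}$ with $[\mathbf{bW}]^{(t)(L,t)}$ for item (6) — and the same appeal to Lemmas~\ref{appendix:polynomial-ring-problem-lemma-2}--\ref{appendix:polynomial-ring-problem-lemma-4} for the isolated families. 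The plan is correct and matches the paper's argument.
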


\begin{proof}
By the previous observations, $\alpha(\Phi,\Psi) = 0$ implies four Equations~(\ref{eq:proof-linear-independency-of-alpha-phi-W})-(\ref{eq:proof-linear-independency-of-alpha-phi-1}). These equations imply Equations~(\ref{eq:appendix-main-theorem-1})-(\ref{eq:appendix-main-theorem-7}). The proof of all parts in Theorem \ref{appendix:polynomial-ring-main-problem} are as follows.

\begin{enumerate}
    \item It comes from Equation~(\ref{eq:appendix-main-theorem-1}) and Lemma \ref{appendix:polynomial-ring-problem-lemma-2}.
    \item It comes from Equation~(\ref{eq:appendix-main-theorem-2}) and Lemma \ref{appendix:polynomial-ring-problem-lemma-3}.
    \item It comes from Equation~(\ref{eq:appendix-main-theorem-3}).
    \item It comes from Equation~(\ref{eq:appendix-main-theorem-5.5}) and Lemma \ref{appendix:polynomial-ring-problem-lemma-4.5}
    \item It comes from Equation~(\ref{eq:appendix-main-theorem-5}) and Lemma \ref{appendix:polynomial-ring-problem-lemma-4}.
    \item It comes from Equation~(\ref{eq:appendix-main-theorem-6}).
    \item It comes from Equation~(\ref{eq:appendix-main-theorem-4}).
    \item It comes from Equation~(\ref{eq:appendix-main-theorem-7}).
\end{enumerate}
The proof is finsihed.
\end{proof}

The following corollary is a direct consequence of Theorem \ref{appendix:polynomial-ring-main-problem}.

\begin{corollary} \label{appendix:polynomial-ring-main-problem-corollary}
    If for some $\Phi, \Phi'$, and $\Psi$, we have  $\alpha(\Phi,\Psi) = \alpha(\Phi',\Psi) $, then:
\begin{enumerate}
    \item For all $L \ge s > t \ge 0$ with $(s,t) \neq (L,0)$, and for all $p,q$, we have 
    \begin{align}
        \Phi_{(s,t):pq} = \Phi'_{(s,t):pq}.
    \end{align}
    
    \item For all  $L \ge s > 0$, $L > t \ge 0$ with $s \neq t$, we have
    \begin{align}
        \Phi_{(s,0)(L,t):pq} = \Phi'_{(s,0)(L,t):pq},
    \end{align}
    for all $p,q$, or
    \begin{align}
        \Psi^{(s,0)(L,t)} = 0.
    \end{align}

    \item We have
    \begin{align}
         & \sum_{p=1}^{n_L}\sum_{q=1}^{n_0}\Phi_{(L,0):pq} \cdot [\mathbf{W}]^{(L,0)}_{pq} + \sum_{L > s > 0} \sum_{p=1}^{n_s}\sum_{q=1}^{n_s} \Phi_{(s,0)(L,s):pq} \cdot [\mathbf{WW}]^{(s,0)(L,s)}_{pq}\\
         = ~ & \sum_{p=1}^{n_L}\sum_{q=1}^{n_0}\Phi'_{(L,0):pq} \cdot [\mathbf{W}]^{(L,0)}_{pq} + \sum_{L > s > 0} \sum_{p=1}^{n_s}\sum_{q=1}^{n_s} \Phi'_{(s,0)(L,s):pq} \cdot [\mathbf{WW}]^{(s,0)(L,s)}_{pq}.
\end{align}

    \item For all $L > s > t > 0$, and for all $p$, we have
    \begin{align}
        \Phi_{(s,t)(t):p}=\Phi'_{(s,t)(t):p}.
    \end{align}
    
    \item For all  $L \ge s > 0$, $L > t \ge 0$ with $s \neq t$, we have
    \begin{align}
        \Phi_{(s)(L,t):pq} = \Phi'_{(s)(L,t):pq},
    \end{align}
    for all $p,q$, or
    \begin{align}
        \Psi^{(s)(L,t)} = 0.
    \end{align}

    \item For all $L > t > 0$, we have
    \begin{align}
         & \sum_{p=1}^{n_L} \Phi_{(L,t)(t):p} \cdot [\mathbf{Wb}]^{(L,t)(t)}_{p} + \sum_{p=1}^{n_t}\sum_{q=1}^{n_t} \Phi_{(t)(L,t):pq} \cdot [\mathbf{bW}]^{(t)(L,t)}_{pq} \\
         = ~ & \sum_{p=1}^{n_L} \Phi'_{(L,t)(t):p} \cdot [\mathbf{Wb}]^{(L,t)(t)}_{p} + \sum_{p=1}^{n_t}\sum_{q=1}^{n_t} \Phi'_{(t)(L,t):pq} \cdot [\mathbf{bW}]^{(t)(L,t)}_{pq}.
    \end{align}

    \item For all $L \ge s >0$ and for all $p$, we have \begin{align}
        \Phi_{(s):p} = \Phi'_{(s):p}.
    \end{align}
    
    \item We have
    \begin{align}
        \Phi_1 = \Phi'_1.
    \end{align}
    
\end{enumerate}

\end{corollary}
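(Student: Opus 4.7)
The plan is to reduce the statement to Theorem~\ref{appendix:polynomial-ring-main-problem} by exploiting the fact that, for a fixed collection $\Psi$ of matrices, the polynomial $\alpha(\Phi,\Psi)$ defined in~\eqref{eq:alpha} is \emph{linear} in $\Phi$. Indeed, inspecting the five sums plus the constant term in~\eqref{eq:alpha}, each stable polynomial term $[\mathbf{W}]^{(s,t)}_{pq}$, $[\mathbf{b}]^{(s)}_{p}$, $[\mathbf{Wb}]^{(s,t)(t)}_{p}$, $[\mathbf{bW}]^{(s)(L,t)}_{pq}$, and $[\mathbf{WW}]^{(s,0)(L,t)}_{pq}$ depends only on the indeterminates (and, for the last two, on $\Psi$); the parameters $\Phi_{-}$ enter only as scalar multipliers. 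Therefore, with the same $\Psi$ on both sides,
\begin{equation*}
    \alpha(\Phi,\Psi) - \alpha(\Phi',\Psi) \;=\; \alpha(\Phi - \Phi',\Psi),
\end{equation*}
where $\Phi - \Phi'$ denotes componentwise subtraction across all scalar coefficients $\Phi_{(s,t):pq},\ \Phi_{(s):p},\ \Phi_{(s,t)(t):p},\ \Phi_{(s)(L,t):pq},\ \Phi_{(s,0)(L,t):pq},\ \Phi_1$.

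The next step is to set $\Delta := \Phi - \Phi'$ and observe that the hypothesis $\alpha(\Phi,\Psi) = \alpha(\Phi',\Psi)$ gives $\alpha(\Delta,\Psi) = 0$. I would then apply Theorem~\ref{appendix:polynomial-ring-main-problem} directly to the pair $(\Delta,\Psi)$ and translate each of its eight conclusions into the corresponding conclusion of the corollary. For items (1), (4), (7) and (8), the theorem yields $\Delta_{-} = 0$, which is precisely $\Phi_{-} = \Phi'_{-}$. For items (2) and (5), the theorem gives the dichotomy ``$\Delta_{-} = 0$ for all $p,q$, or $\Psi^{-} = 0$'', which becomes ``$\Phi_{-} = \Phi'_{-}$ for all $p,q$, or $\Psi^{-} = 0$''. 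For items (3) and (6), the theorem gives an identity of the form $\sum_{-}\Delta_{-}\cdot[\,\cdot\,]_{-} = 0$; moving the $\Phi'$-part to the other side using linearity turns this into the asserted equality between the corresponding sums in $\Phi$ and $\Phi'$.

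There is no genuine obstacle: the only substantive step is the linearity observation above, which is immediate from the form of~\eqref{eq:alpha}. I would simply state the reduction, then enumerate the eight conclusions in the order matching Theorem~\ref{appendix:polynomial-ring-main-problem}, remarking on each line how $\Delta_{-} = 0$ translates to $\Phi_{-} = \Phi'_{-}$ and how the two dichotomies carry over unchanged. This yields a short proof whose length is dominated by bookkeeping rather than new argumentation.
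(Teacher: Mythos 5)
Your proposal is correct and is essentially identical to the paper's own proof: both exploit linearity of $\alpha(\cdot,\Psi)$ in $\Phi$ for fixed $\Psi$ to write $0=\alpha(\Phi,\Psi)-\alpha(\Phi',\Psi)=\alpha(\Phi-\Phi',\Psi)$ and then apply Theorem~\ref{appendix:polynomial-ring-main-problem} to $\Phi-\Phi'$. The paper simply states this reduction in one line, while you additionally spell out the item-by-item translation, which is routine bookkeeping.
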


\begin{proof}
    Since 
    \begin{align*}
        0 = \alpha(\Phi,\Psi) - \alpha(\Phi',\Psi) = \alpha(\Phi-\Phi', \Psi),
    \end{align*}
    so the results come directly from Theorem \ref{appendix:polynomial-ring-main-problem}.
\end{proof}

\subsection{Proofs of Lemmas~\ref{appendix:polynomial-ring-problem-lemma-2}-\ref{appendix:polynomial-ring-problem-lemma-4}}
\label{subsec:proofs_of_lemma}

The proofs of Lemmas~\ref{appendix:polynomial-ring-problem-lemma-2} through \ref{appendix:polynomial-ring-problem-lemma-4} are directly from the coefficient comparison of two equal polynomials and the following lemma. We omit the proofs of Lemmas~\ref{appendix:polynomial-ring-problem-lemma-2} through \ref{appendix:polynomial-ring-problem-lemma-4} and show only the proof for Lemma~\ref{appendix:polynomial-ring-problem-lemma-1}.

\begin{lemma} \label{appendix:polynomial-ring-problem-lemma-1}
For a feasible tuple $(s,t,p,q)$, if
    \begin{align}
        [\mathbf{WW}]^{(s,0)(L,t)}_{pq} = \left ([\mathbf{W}]^{(s,0)} \cdot \Psi^{(s,0)(L,t)} \cdot [\mathbf{W}]^{(L,t)} \right ) _{pq} = 0 \in R. \label{eq:proof-0}
    \end{align}
Then $\Psi^{(s,0)(L,t)} = 0 \in \mathbb{R}^{n_0 \times n_L}$.
\end{lemma}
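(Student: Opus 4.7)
The plan is to prove the lemma by specializing the polynomial identity via a rank-one substitution, which collapses the matrix products into scalars and reduces the hypothesis to the vanishing of an easily analyzed bilinear form. Concretely, I would introduce fresh indeterminates $y_a^{(k)}$ and $z_b^{(k)}$ for every layer $k = 1, \ldots, L$ and every admissible $a, b$, and substitute $\mathbf{x}^{(k)}_{a,b} \mapsto y_a^{(k)} z_b^{(k)}$. This replaces every symbolic weight matrix by the rank-one matrix $y^{(k)}(z^{(k)})^\top$ and extends to a ring homomorphism $R \to \mathbb{R}[y, z]$; in particular, the assumption $[\mathbf{WW}]^{(s,0)(L,t)}_{pq} = 0$ in $R$ is pushed forward to the vanishing of its image in $\mathbb{R}[y, z]$.

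Under this substitution, a short iterated computation using $(z^{(k+1)})^\top y^{(k)} = \langle z^{(k+1)}, y^{(k)}\rangle := \sum_c z_c^{(k+1)} y_c^{(k)}$ gives
\[ (W^{(s)}\cdots W^{(1)})_{p,i} = y_p^{(s)} z_i^{(1)} \prod_{k=1}^{s-1}\langle z^{(k+1)}, y^{(k)}\rangle, \]
and an analogous formula for $(W^{(L)}\cdots W^{(t+1)})_{j,q}$. Plugging these into the expansion of $[\mathbf{WW}]^{(s,0)(L,t)}_{pq}$ and pulling out all factors that do not depend on $(i, j)$, the image of the hypothesis becomes
\[ y_p^{(s)}\, z_q^{(t+1)} \cdot \Bigl(\prod_{k=1}^{s-1}\langle z^{(k+1)}, y^{(k)}\rangle\Bigr)\Bigl(\prod_{k=t+1}^{L-1}\langle z^{(k+1)}, y^{(k)}\rangle\Bigr) \cdot \sum_{i=1}^{n_0}\sum_{j=1}^{n_L} \Psi^{(s,0)(L,t)}_{i,j}\, z_i^{(1)} y_j^{(L)} = 0 \]
in $\mathbb{R}[y, z]$. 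Because $\mathbb{R}[y, z]$ is an integral domain and each of $y_p^{(s)}$, $z_q^{(t+1)}$, and every inner-product factor is a nonzero polynomial, the entire prefactor is nonzero. Hence the bilinear form $\sum_{i,j}\Psi^{(s,0)(L,t)}_{i,j}\, z_i^{(1)} y_j^{(L)}$ must vanish, and since the monomials $z_i^{(1)} y_j^{(L)}$ are pairwise distinct across $(i, j)$, every coefficient $\Psi^{(s,0)(L,t)}_{i,j}$ equals $0$, giving $\Psi^{(s,0)(L,t)} = 0$.

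The step that requires the most care is confirming that the prefactor really is nonzero in the degenerate configurations where the various index ranges coincide: for example $t = 0$ (so $z_q^{(t+1)} = z_q^{(1)}$ shares indeterminates with the bilinear factor), $s = L$ (so $y_p^{(s)} = y_p^{(L)}$ does), or $s \ge t+2$ (so the two inner-product products share factors and produce squared terms $\langle z^{(k+1)}, y^{(k)}\rangle^2$). In each situation the prefactor is still a product of nonzero elements of the integral domain $\mathbb{R}[y, z]$ and hence nonzero, while the distinct-monomial argument separating the coefficients $\Psi^{(s,0)(L,t)}_{i,j}$ remains valid because the indeterminates $z_i^{(1)}$ and $y_j^{(L)}$ are formally independent for distinct $(i, j)$. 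This handles all feasible tuples $(s, t, p, q)$ uniformly.
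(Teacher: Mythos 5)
Your proof is correct, and it takes a genuinely different route from the paper's. The paper argues by a three-way case analysis: for $s=L,\,t=0$ it substitutes the middle factors $[\mathbf{W}]^{(L-1)},\ldots,[\mathbf{W}]^{(2)}$ by matrix units $E_{11}$ to expose a bilinear form in the $\mathbf{x}^{(1)}_{1,i}\mathbf{x}^{(L)}_{j,1}$; for $s\le t$ it uses the fact that the two products $[\mathbf{W}]^{(s,0)}$ and $[\mathbf{W}]^{(L,t)}$ involve disjoint sets of indeterminates, so the products of their entries are linearly independent; and for $s>t$ it evaluates the overlapping ``middle'' portion at a point where it becomes a nonzero constant matrix $\overline{\Psi}$ and then falls back on the first case. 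Your single rank-one specialization $\mathbf{x}^{(k)}_{a,b}\mapsto y^{(k)}_a z^{(k)}_b$ replaces all of this: it is a ring homomorphism, so the hypothesis pushes forward, and the telescoping identity $W^{(k+1)}W^{(k)}\mapsto \langle z^{(k+1)},y^{(k)}\rangle\, y^{(k+1)}(z^{(k)})^{\top}$ collapses the entire expression into a manifestly nonzero prefactor in the integral domain $\mathbb{R}[y,z]$ times the bilinear form $\sum_{i,j}\Psi^{(s,0)(L,t)}_{ij}z^{(1)}_i y^{(L)}_j$, whose monomials are pairwise distinct regardless of how the index ranges overlap. What your approach buys is uniformity (no case split on $s$ versus $t$, no special handling of $L\in\{1,2\}$) and a cleaner separation between the ``nonvanishing of the specialization'' step and the ``linear independence of monomials'' step; what the paper's approach buys is that it stays entirely inside the original ring $R$ and reuses the same matrix-unit substitution machinery it set up in the preliminaries. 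Both are specialization arguments at heart, and yours correctly flags and handles the only delicate points, namely the shared inner-product factors when $s\ge t+2$ and the coincidences at $t=0$ or $s=L$, none of which threaten the nonvanishing of the prefactor in an integral domain.
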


\begin{proof}
We first consider the case where $s = L$ and $t= 0$, then the case $s \le t$, and finally the case $s > t$. Note that, the proof for the last case will be by combining the arguments of the first two cases.

\paragraph{Case $s= L$ and $t = 0$.} From Equation~(\ref{eq:proof-0}), we have
\begin{align}
    [\mathbf{WW}]^{(L,0)(L,0)}_{pq} = \left ([\mathbf{W}]^{(L,0)} \cdot \Psi^{(L,0)(L,0)} \cdot [\mathbf{W}]^{(L,0)} \right ) _{pq} = 0,
\end{align} 
which means
\begin{align}
    \left ( [\mathbf{W}]^{(L)} \cdot \ldots \cdot [\mathbf{W}]^{(1)} \cdot \Psi^{(L,0)(L,0)} \cdot[\mathbf{W}]^{(L)} \cdot \ldots \cdot [\mathbf{W}]^{(1)} \right )_{pq} = 0. \label{eq:proof-1}
\end{align}

Here, we consider three cases, where $L=1$, $L=2$ and $L \ge 3$.
\begin{itemize}
\item Case $L = 1$. Equation~(\ref{eq:proof-1}) becomes
\begin{align}
    \left ( [\mathbf{W}]^{(1)} \cdot \Psi^{(1,0)(1,0)} \cdot [\mathbf{W}]^{(1)} \right )_{pq} = 0.
\end{align}
By Proposition~\ref{appendix:prelim-1}, this is equivalent to
\begin{align}
    [\mathbf{W}]^{(1)}_{p,\ast} \cdot \Psi^{(1,0)(1,0)} \cdot [\mathbf{W}]^{(1)}_{\ast,q} = 0,
\end{align}
which is:
\begin{align}
\sum_{i=1}^{n_0} \sum_{j=0}^{n_1} \Psi^{(1,0)(1,0)}_{ij} \cdot \mathbf{x}^{(1)}_{p,i} \cdot \mathbf{x}^{(1)}_{j,q} = 0. \label{eq:proof-5}
\end{align}
Since the LHS of Equation~(\ref{eq:proof-5}) is a linear combination between distinct monomials
\begin{align}
\mathbf{x}^{(1)}_{p,i} \cdot \mathbf{x}^{(1)}_{j,q},    
\end{align}
for $1 \le i \le n_0$ and $1 \le j \le n_1$, so it implies that
\begin{align}
\Psi^{(1,0)(1,0)}_{ij} = 0,    
\end{align}
for all $1 \le i \le n_0$ and $1 \le j \le n_1$, which means
\begin{align}
    \Psi^{(1,0)(1,0)} = 0.
\end{align}

    \item Case $L = 2$. Equation~(\ref{eq:proof-1}) becomes
\begin{align}
    \left ( [\mathbf{W}]^{(2)} \cdot [\mathbf{W}]^{(1)} \cdot \Psi^{(2,0)(2,0)} \cdot[\mathbf{W}]^{(2)} \cdot [\mathbf{W}]^{(1)} \right )_{pq} = 0. 
\end{align}

By Proposition~\ref{appendix:prelim-1}, this is equivalent to
\begin{align}
    [\mathbf{W}]^{(2)}_{p,\ast} \cdot [\mathbf{W}]^{(1)} \cdot \Psi^{(2,0)(2,0)} \cdot[\mathbf{W}]^{(2)} \cdot [\mathbf{W}]^{(1)}_{\ast,q} = 0,
\end{align}
which is
\begin{align}
        &\left ([\mathbf{W}]^{(2)}_{p,\ast} \cdot [\mathbf{W}]^{(1)}_{\ast,1} ~ , ~ [\mathbf{W}]^{(2)}_{p,\ast} \cdot [\mathbf{W}]^{(1)}_{\ast,2} ~ , ~  \ldots ~ , ~  [\mathbf{W}]^{(2)}_{p,\ast} \cdot [\mathbf{W}]^{(1)}_{\ast,n_0} \right) \cdot \Psi^{(2,0)(2,0)} \\ & \hspace{50pt} \cdot\left ([\mathbf{W}]^{(2)}_{1,\ast} \cdot [\mathbf{W}]^{(1)}_{\ast,q} ~ , ~ [\mathbf{W}]^{(2)}_{2,\ast} \cdot [\mathbf{W}]^{(1)}_{\ast,q} ~ , ~  \ldots ~ , ~  [\mathbf{W}]^{(2)}_{n_2,\ast} \cdot [\mathbf{W}]^{(1)}_{\ast,q} \right) ^{\top}  = 0,
\end{align}
which is
\begin{align}
    \sum_{i=1}^{n_0} \sum_{j=0}^{n_2} \Psi^{(2,0)(2,0)}_{ij} \cdot \left([\mathbf{W}]^{(2)}_{p,\ast} \cdot [\mathbf{W}]^{(1)}_{\ast,i} \right)\cdot \left([\mathbf{W}]^{(2)}_{j,\ast} \cdot [\mathbf{W}]^{(1)}_{\ast,q} \right)= 0. \label{eq:proof-6}
\end{align}

Since the LHS of Equation~(\ref{eq:proof-6}) is a linear combination between elements of a linear independent collection of $R$, which are: 
\begin{align}
    \left([\mathbf{W}]^{(2)}_{p,\ast} \cdot [\mathbf{W}]^{(1)}_{\ast,i} \right)\cdot \left([\mathbf{W}]^{(2)}_{j,\ast} \cdot [\mathbf{W}]^{(1)}_{\ast,q} \right),
\end{align}
for $1 \le i \le n_0$ and $1 \le j \le n_2$, so it implies that
\begin{align}
\Psi^{(2,0)(2,0)}_{ij} = 0,
\end{align}
for all $1 \le i \le n_0$ and $1 \le j \le n_2$, which means
\begin{align}
    \Psi^{(2,0)(2,0)} = 0.
\end{align}

    \item Case $L \ge 3$. Equation~(\ref{eq:proof-1}) becomes
\begin{align}
    \left ( [\mathbf{W}]^{(L)} \cdot \ldots \cdot [\mathbf{W}]^{(1)} \cdot \Psi^{(L,0)(L,0)} \cdot[\mathbf{W}]^{(L)} \cdot \ldots \cdot [\mathbf{W}]^{(1)} \right )_{pq} = 0. \label{eq:proof-4}
\end{align}

It is noted that the matrices $[\mathbf{W}]^{(L-1)}, \ldots, [\mathbf{W}]^{(2)}$ can be substituted for some matrix units at $1^{\text{st}}$-row and  $1^{\text{st}}$-column such that the product
\begin{align}
    [\mathbf{W}]^{(L-1)} \cdot \ldots \cdot [\mathbf{W}]^{(2)}
\end{align}
becomes a matrix unit at $1^{\text{st}}$-row and  $1^{\text{st}}$-column. Substitute this in Equation~(\ref{eq:proof-4}), we have
\begin{align}
    \left ( [\mathbf{W}]^{(L)} \cdot E_{11} \cdot [\mathbf{W}]^{(1)} \cdot \Psi^{(L,0)(L,0)} \cdot[\mathbf{W}]^{(L)} \cdot E_{11} \cdot [\mathbf{W}]^{(1)} \right )_{pq} = 0.
\end{align}
By Proposition~\ref{appendix:prelim-1}, this is equivalent to
\begin{align}
     [\mathbf{W}]^{(L)}_{p,\ast} \cdot E_{11} \cdot [\mathbf{W}]^{(1)} \cdot \Psi^{(L,0)(L,0)} \cdot[\mathbf{W}]^{(L)} \cdot E_{11} \cdot [\mathbf{W}]^{(1)}_{\ast,q} = 0.
\end{align}
which can be rewritten as
\begin{align}
     [\mathbf{W}]^{(L)}_{p,1} \cdot [\mathbf{W}]^{(1)}_{1,\ast} \cdot \Psi^{(L,0)(L,0)} \cdot [\mathbf{W}]^{(L)}_{\ast,1} \cdot  [\mathbf{W}]^{(1)}_{1,q} = 0,
\end{align}
or
\begin{align}
     \mathbf{x}^{(L)}_{p,1} \cdot [\mathbf{W}]^{(1)}_{1,\ast} \cdot \Psi^{(L,0)(L,0)} \cdot [\mathbf{W}]^{(L)}_{\ast,1} \cdot  \mathbf{x}^{(1)}_{1,q} = 0. \label{eq:proof-2}
\end{align}

Since the LHS of Equation~(\ref{eq:proof-2}) is a polynomial in $R$, we have
\begin{align}
    [\mathbf{W}]^{(1)}_{1,\ast} \cdot \Psi^{(L,0)(L,0)} \cdot [\mathbf{W}]^{(L)}_{\ast,1} = 0,
\end{align}
which is
\begin{align}
    \sum_{i=1}^{n_0} \sum_{j=0}^{n_L} \Psi^{(L,0)(L,0)}_{ij} \cdot \mathbf{x}^{(1)}_{1,i} \cdot \mathbf{x}^{(L)}_{j,1} = 0. \label{eq:proof-3}
\end{align}
Since the LHS of Equation~(\ref{eq:proof-3}) is a linear combination between distinct monomials
\begin{align}
    \mathbf{x}^{(1)}_{1,i} \cdot \mathbf{x}^{(L)}_{j,1},
\end{align}
for $1 \le i \le n_0$ and $1 \le j \le n_L$, so it implies that
\begin{align}
    \Psi^{(L,0)(L,0)}_{ij} = 0,
\end{align}
for all $1 \le i \le n_0$ and $1 \le j \le n_L$, which means
\begin{align}
    \Psi^{(L,0)(L,0)} = 0.
\end{align}
\end{itemize}

In conclusion, for all cases of $L$, we have
\begin{align}
    \Psi^{(L,0)(L,0)} = 0.
\end{align}
We finish the proof of the case where $s = L$ and $t = 0$.

\paragraph{Case $s \le t$.} From Equation~(\ref{eq:proof-0}), we have
\begin{align}
    [\mathbf{WW}]^{(s,0)(L,t)}_{pq} = \left ([\mathbf{W}]^{(s,0)} \cdot \Psi^{(s,0)(L,t)} \cdot [\mathbf{W}]^{(L,t)} \right ) _{pq} = 0,
\end{align} 
which means
\begin{align}
    \left ( [\mathbf{W}]^{(s)} \cdot \ldots \cdot [\mathbf{W}]^{(1)} \cdot \Psi^{(s,0)(L,t)} \cdot[\mathbf{W}]^{(L)} \cdot \ldots \cdot [\mathbf{W}]^{(t+1)} \right )_{pq} = 0. 
    \label{eq:proof-7}
\end{align}

Since $s \le t$, the $[\mathbf{W}]^{-}$'s that are in the product $[\mathbf{W}]^{(s)} \cdot \ldots \cdot [\mathbf{W}]^{(1)}$, and the $[\mathbf{W}]^{-}$'s that are in the product $[\mathbf{W}]^{(L)} \cdot \ldots \cdot [\mathbf{W}]^{(t+1)}$, are distinct. By directly multiplying $[\mathbf{W}]^{(s)} \cdot \ldots \cdot [\mathbf{W}]^{(1)}$ and $[\mathbf{W}]^{(L)} \cdot \ldots \cdot [\mathbf{W}]^{(t+1)}$, we can write these two products in the forms

\begin{equation}
\renewcommand{\arraystretch}{1.5}
{ \everymath={\displaystyle}
\begin{array}{lll}
    [\mathbf{W}]^{(s)} \cdot \ldots \cdot [\mathbf{W}]^{(1)} &= \left(\mathbf{f}^{(s)}_{ij} \right)_{1 \le i \le n_s, 1 \le j \le n_{0}}  &\in R^{n_s \times n_{0}}, \\ 
    \left[\mathbf{W}\right]^{(L)} \cdot \ldots \cdot [\mathbf{W}]^{(t+1)} &= \left(\mathbf{g}^{(t)}_{ij} \right)_{1 \le i \le n_L, 1 \le j \le n_t}  &\in R^{n_L \times n_t},
\end{array}
}
\end{equation}
where all $\mathbf{f}^{(s)}_{-}$'s are nonzero and all $\mathbf{g}^{(t)}_{-}$'s are nonzero. Moreover, $\mathbf{f}^{(s)}_{-}$'s are real polynomials of indeterminates $\mathbf{x}^{(1)}_{-}$'s, $\mathbf{x}^{(2)}_{-}$'s, $\ldots$, $\mathbf{x}^{(s)}_{-}$'s. Similarly, $\mathbf{g}^{(t)}_{-}$'s are real polynomials of indeterminates $\mathbf{x}^{(L)}_{-}$'s, $\mathbf{x}^{(L-1)}_{-}$'s, $\ldots$, $\mathbf{x}^{(t+1)}_{-}$'s. Now, Equation~(\ref{eq:proof-7}) is equal to
\begin{align}
    \left (\left(\mathbf{f}^{(s)}_{ij} \right)_{1 \le i \le n_s, 1 \le j \le n_{0}} \cdot \Psi^{(s,0)(L,t)} \cdot \left(\mathbf{g}^{(t)}_{ij} \right)_{1 \le i \le n_L, 1 \le j \le n_t} \right )_{pq} = 0. 
\end{align}
By Proposition~\ref{appendix:prelim-1}, this is equivalent to
\begin{align}
    \left (\mathbf{f}^{(s)}_{p,1} ~ , ~ \mathbf{f}^{(s)}_{p,2} ~ , ~ \ldots ~ , ~ \mathbf{f}^{(s)}_{p,n_0} \right ) \cdot \Psi^{(s,0)(L,t)} \cdot \left (\mathbf{g}^{(t)}_{1,q} ~ , ~ \mathbf{g}^{(t)}_{2,q} ~ , ~ \ldots ~ , ~ \mathbf{f}^{(t)}_{n_L,q} \right ) ^{\top} = 0,
\end{align}
which is
\begin{align}
    \sum_{i=1}^{n_0} \sum_{j=0}^{n_L} \Psi^{(s,0)(L,t)}_{ij} \cdot \mathbf{f}^{(s)}_{p,i} \cdot \mathbf{g}^{(t)}_{j,q} = 0. \label{eq:proof-8}
\end{align}
Since the LHS of Equation~(\ref{eq:proof-8}) is a linear combination between elements of a linear independent collection of $R$, which are
\begin{align}
     \mathbf{f}^{(s)}_{p,i} \cdot \mathbf{g}^{(t)}_{j,q},
\end{align}
for $1 \le i \le n_0$ and $1 \le j \le n_L$. The linear dependency comes from the distinction between indeterminates of $\mathbf{f}^{(-)}_{-}$ and $\mathbf{g}^{(-)}_{-}$. It implies that
\begin{align}
    \Psi^{(s,0)(L,t)}_{ij} = 0,
\end{align}
for all $1 \le i \le n_0$ and $1 \le j \le n_L$, which means
\begin{align}
    \Psi^{(s,0)(L,t)} = 0.
\end{align}
We finish the proof of the case where $s \le t$.

\paragraph{Case $s > t$.} From Equation~(\ref{eq:proof-0}), we have: 
\begin{align}
    [\mathbf{WW}]^{(s,0)(L,t)}_{pq} = \left ([\mathbf{W}]^{(s,0)} \cdot \Psi^{(s,0)(L,t)} \cdot [\mathbf{W}]^{(L,t)} \right ) _{pq} = 0,
\end{align} 
which means
\begin{align}
    \left ( [\mathbf{W}]^{(s)} \cdot \ldots \cdot [\mathbf{W}]^{(1)} \cdot \Psi^{(s,0)(L,t)} \cdot[\mathbf{W}]^{(L)} \cdot \ldots \cdot [\mathbf{W}]^{(t+1)} \right )_{pq} = 0. \label{eq:proof-12}
\end{align}

Assume that $\Psi^{(s,0)(L,t)} \neq 0 \in \mathbb{R}^{n_0 \times n_L}$. Observe that
\begin{align}
    & ~ [\mathbf{W}]^{(s)} \cdot \ldots \cdot [\mathbf{W}]^{(1)} \cdot \Psi^{(s,0)(L,t)} \cdot[\mathbf{W}]^{(L)} \cdot \ldots \cdot [\mathbf{W}]^{(t+1)} \label{eq:proof-11} \\
    = & \left ( [\mathbf{W}]^{(s)} \cdot \ldots \cdot [\mathbf{W}]^{(t+1)} \cdot [\mathbf{W}]^{(t)} \cdot \ldots \cdot [\mathbf{W}]^{(1)} \right ) \\
    & \cdot \Psi^{(s,0)(L,t)} \cdot \notag \\
    & \left ([\mathbf{W}]^{(L)} \cdot \ldots \cdot [\mathbf{W}]^{(s+1)} \cdot [\mathbf{W}]^{(s)} \cdot \ldots \cdot [\mathbf{W}]^{(t+1)} \right ) \notag \\
    = & \left ( [\mathbf{W}]^{(s)} \cdot \ldots \cdot [\mathbf{W}]^{(t+1)} \right ) \label{eq:proof-9} \\
    & \cdot \left ( [\mathbf{W}]^{(t)} \cdot \ldots \cdot [\mathbf{W}]^{(1)} \cdot \Psi^{(s,0)(L,t)} \cdot [\mathbf{W}]^{(L)} \cdot \ldots \cdot [\mathbf{W}]^{(s+1)} \right ) \cdot \notag \\
    & \left ( [\mathbf{W}]^{(s)} \cdot \ldots \cdot [\mathbf{W}]^{(t+1)} \right ). \notag
\end{align}

In the second term of Equation~(\ref{eq:proof-9}),
\begin{align}
     [\mathbf{W}]^{(t)} \cdot \ldots \cdot [\mathbf{W}]^{(1)} \cdot \Psi^{(s,0)(L,t)} \cdot [\mathbf{W}]^{(L)} \cdot \ldots \cdot [\mathbf{W}]^{(s+1)}. \label{eq:proof-10}
\end{align}
Since $s > t$, all $[\mathbf{W}]^{(-)}$'s in Equation~(\ref{eq:proof-10}) are distinct. And since $\Psi^{(s,0)(L,t)} \neq 0 \in \mathbb{R}^{n_0 \times n_L}$, with the same argument as in \textbf{Case $s \le t$}, we have
\begin{align}
    [\mathbf{W}]^{(t)} \cdot \ldots \cdot [\mathbf{W}]^{(1)} \cdot \Psi^{(s,0)(L,t)} \cdot [\mathbf{W}]^{(L)} \cdot \ldots \cdot [\mathbf{W}]^{(s+1)} \neq 0 \in R^{n_t \times n_s}.
\end{align}
Moreover, it can be written in the form
\begin{align}
     [\mathbf{W}]^{(t)} \cdot \ldots \cdot [\mathbf{W}]^{(1)} \cdot \Psi^{(s,0)(L,t)} \cdot [\mathbf{W}]^{(L)} \cdot & \ldots  \cdot [\mathbf{W}]^{(s+1)} \\
     &= \left ( \mathbf{c}^{(st)}_{ij}\right )_{1 \le i \le n_t, 1 \le j \le n_s} \in R^{n_t \times n_s},
\end{align}
where all $\mathbf{c}^{(st)}_{-}$'s are real polynomials of indeterminates $\mathbf{x}^{(1)}_{-}$'s, $\mathbf{x}^{(2)}_{-}$'s, $\ldots$, $\mathbf{x}^{(t)}_{-}$'s and $\mathbf{x}^{(L)}_{-}$'s, $\mathbf{x}^{(L-1)}_{-}$'s, $\ldots$, $\mathbf{x}^{(s+1)}_{-}$'s, and at least one of them is a nonzero element in $R$. By Corollary~\ref{appendix:prelim-7}, indeterminates $\mathbf{x}^{(1)}_{-}$'s, $\mathbf{x}^{(2)}_{-}$'s, $\ldots$, $\mathbf{x}^{(t)}_{-}$'s and $\mathbf{x}^{(L)}_{-}$'s, $\mathbf{x}^{(L-1)}_{-}$'s, $\ldots$, $\mathbf{x}^{(s+1)}_{-}$'s can be substituted for some real values to make
\begin{align}
    [\mathbf{W}]^{(t)} \cdot \ldots \cdot [\mathbf{W}]^{(1)} \cdot \Psi^{(s,0)(L,t)} \cdot [\mathbf{W}]^{(L)} \cdot \ldots \cdot [\mathbf{W}]^{(s+1)},
\end{align}
become a nonzero matrix of $\mathbb{R}^{n_t \times n_s}$. We denote this nonzero matrix by $\overline{\Psi}^{(s,0)(L,t)} \in \mathbb{R}^{n_t \times n_s}$. Note that, since $s > t$, the substitution only applies for indeterminates in $[\mathbf{W}]^{(1)}, [\mathbf{W}]^{(2)}, \ldots, [\mathbf{W}]^{(t)}$ and $[\mathbf{W}]^{(L)}, [\mathbf{W}]^{(L-1)}, \ldots, [\mathbf{W}]^{(s+1)}$. In other words, it does not apply for $[\mathbf{W}]^{(s)}, \ldots, [\mathbf{W}]^{(t+1)}$. So, with the above substitution, Equation~(\ref{eq:proof-11}) becomes
\begin{align}
    & ~ [\mathbf{W}]^{(s)} \cdot \ldots \cdot [\mathbf{W}]^{(1)} \cdot \Psi^{(s,0)(L,t)} \cdot[\mathbf{W}]^{(L)} \cdot \ldots \cdot [\mathbf{W}]^{(t+1)} \notag \\
    = & ~ [\mathbf{W}]^{(s)} \cdot \ldots \cdot [\mathbf{W}]^{(t+1)} \cdot \overline{\Psi}^{(s,0)(L,t)} \cdot  [\mathbf{W}]^{(s)} \cdot \ldots \cdot [\mathbf{W}]^{(t+1)}.
\end{align}
Note that, since $s > t$, there is at least one $[\mathbf{W}]^{(-)}$ in the product $[\mathbf{W}]^{(s)} \cdot \ldots \cdot [\mathbf{W}]^{(t+1)}$. Combine with $\overline{\Psi}^{(s,0)(L,t)} \neq 0$, by applying the argument of \textbf{Case $s=L$ and $t=0$}, we have
\begin{align}
    & ~ \left ([\mathbf{W}]^{(s)} \cdot \ldots \cdot [\mathbf{W}]^{(1)} \cdot \Psi^{(s,0)(L,t)} \cdot[\mathbf{W}]^{(L)} \cdot \ldots \cdot [\mathbf{W}]^{(t+1)} \right )_{pq} \notag \\
    = & ~ \left([\mathbf{W}]^{(s)} \cdot \ldots \cdot [\mathbf{W}]^{(t+1)} \cdot \overline{\Psi}^{(s,0)(L,t)} \cdot  [\mathbf{W}]^{(s)} \cdot \ldots \cdot [\mathbf{W}]^{(t+1)} \right)_{pq} \neq 0,
\end{align}
which contradicts to Equation~(\ref{eq:proof-12}). In conclusion
\begin{align}
    \Psi^{(s,0)(L,t)} = 0.
\end{align}
We finish the proof of the case where $s > t$.

In summary, we did consider all possible cases. The proof is finished.
\end{proof}


    



    



    

\section{Equivariant Polynomial Layers}\label{appendix:sec:equivariant_map}


We now proceed to construct a \( G \)-equivariant polynomial layer, denoted as \( E \). These layers serve as the fundamental building blocks for our MAGEP-NFNs. Our strategy is as follows: we first express \( E \) as a polynomial layer that is a linear combination of stable polynomial terms (Subsection~\ref{appendix:sec:equivariant_map_as_poly}). We then find the equivariant maps among these polynomial layers using the parameter sharing mechanism. Equivariance in Machine Learning is used in various context, such as Euclidean equivariance~\cite{tran2024clifford, ruhe2023clifford}, equivariant metanetwork~\cite{tran2024equivariant, zhou2024neural,zhou2024permutation},  Optimal Transport~\cite{tran2024tree,tran2025distance,tran2025spherical, tran2025tree}, etc.

\subsection{Equivariant layer as a linear combination of stable polynomial terms}
\label{appendix:sec:equivariant_map_as_poly}

For two weight spaces $\mathcal{U}$ and $\mathcal{U}'$ with the same number of layers $L$ as well as the same number of channels at
$i^{\text{th}}$ layer $n_i$,
\begin{align*}
\renewcommand{\arraystretch}{1.5}
\begin{array}{lllll}
    \mathcal{U} &= \hspace{10pt} \mathcal{W} \times \mathcal{B}, & \text{ where:}& &\\
    \mathcal{W} &= \hspace{10pt} \mathbb{R}^{w_L \times n_L \times n_{L-1}} &\times \hspace{10pt} \ldots &\times \hspace{10pt} \mathbb{R}^{w_2 \times n_2 \times n_1} &\times \hspace{10pt}   \mathbb{R}^{w_1 \times n_1 \times n_{0}}, \notag \\ 
     \mathcal{B} &= \hspace{10pt}
     \mathbb{R}^{b_L \times n_L \times 1} &\times \hspace{10pt}  \ldots &\times \hspace{10pt} \mathbb{R}^{b_2 \times n_2 \times 1} &\times \hspace{10pt} \mathbb{R}^{b_1 \times n_1 \times 1}, \notag
\end{array}
\end{align*}
and
\begin{align*}
\renewcommand{\arraystretch}{1.5}
\begin{array}{lllll}
    \mathcal{U}' &= \hspace{10pt} \mathcal{W}' \times \mathcal{B}', & \text{ where:}& &\\
    \mathcal{W}' &= \hspace{10pt} \mathbb{R}^{w'_L \times n_L \times n_{L-1}} &\times \hspace{10pt} \ldots &\times \hspace{10pt} \mathbb{R}^{w'_2 \times n_2 \times n_1} &\times \hspace{10pt}   \mathbb{R}^{w'_1 \times n_1 \times n_{0}}, \notag \\ 
     \mathcal{B}' &= \hspace{10pt}
     \mathbb{R}^{b'_L \times n_L \times 1} &\times \hspace{10pt}  \ldots &\times \hspace{10pt} \mathbb{R}^{b'_2 \times n_2 \times 1} &\times \hspace{10pt} \mathbb{R}^{b'_1 \times n_1 \times 1}. \notag
\end{array}
\end{align*}

We want to build a map $E \colon \mathcal{U} \rightarrow \mathcal{U}'$ such that $E$ is $G$-equivariant, where: 
\begin{align}
    G=\{\id_{\mathcal{G}_{n_L}}\} \times \mathcal{G}^{>0}_{n_{L-1}} \times \ldots \times \mathcal{G}^{>0}_{n_{1}} \times  \{\id_{\mathcal{G}_{n_0}}\}.
\end{align}



Let us consider a polynomial map $E \colon \mathcal{U} \rightarrow \mathcal{U}'$ such that, for input $U = ([W],[b])$, each entry of the output $E(U) = ([E(W)],[E(b)])$ is a linear combinations of stable polynomial terms, i.e the entries of $[W]^{(s,t)}$, $[b]^{(s)}$, $[Wb]^{(s,t)(t)}$, $[bW]^{(s)(L,t)}$, $[WW]^{(s,0)(L,t)}$, together with a bias. 
In concrete:
\begin{align*}
    [E(W)]^{(i)}_{jk} \coloneqq& \sum_{L \ge s > t \ge 0} \sum_{p=1}^{n_s}\sum_{q=1}^{n_t}\Phi^{(i):jk}_{(s,t):pq} \cdot [W]^{(s,t)}_{pq} + \sum_{L \ge s > 0}\sum_{p=1}^{n_s}\Phi^{(i):jk}_{(s):p} \cdot [b]^{(s)}_{p} \\
    &+ \sum_{L \ge s > t > 0} \sum_{p=1}^{n_s} \Phi^{(i):jk}_{(s,t)(t):p} \cdot [Wb]^{(s,t)(t)}_{p}\\
    &+ \sum_{L \ge s > 0}\sum_{L > t \ge 0} \sum_{p=1}^{n_s}\sum_{q=1}^{n_t} \Phi^{(i):jk}_{(s)(L,t):pq} \cdot [bW]^{(s)(L,t)}_{pq} \\
    &+ \sum_{L \ge s > 0}\sum_{L > t \ge 0} \sum_{p=1}^{n_s}\sum_{q=1}^{n_t} \Phi^{(i):jk}_{(s,0)(L,t):pq} \cdot [WW]^{(s,0)(L,t)}_{pq} + \Phi^{(i):jk}_1,\\
    [E(b)]^{(i)}_j \coloneqq& \sum_{L \ge s > t \ge 0} \sum_{p=1}^{n_s}\sum_{q=1}^{n_t}\Phi^{(i):j}_{(s,t):pq} \cdot [W]^{(s,t)}_{pq} + \sum_{L \ge s > 0}\sum_{p=1}^{n_s}\Phi^{(i):j}_{(s):p} \cdot [b]^{(s)}_{p} \\
    &+ \sum_{L \ge s > t > 0} \sum_{p=1}^{n_s} \Phi^{(i):j}_{(s,t)(t):p} \cdot [Wb]^{(s,t)(t)}_{p}\\
    &+ \sum_{L \ge s > 0}\sum_{L > t \ge 0} \sum_{p=1}^{n_s}\sum_{q=1}^{n_t} \Phi^{(i):j}_{(s)(L,t):pq} \cdot [bW]^{(s)(L,t)}_{pq} \\
    &+ \sum_{L \ge s > 0}\sum_{L > t \ge 0} \sum_{p=1}^{n_s}\sum_{q=1}^{n_t} \Phi^{(i):j}_{(s,0)(L,t):pq} \cdot [WW]^{(s,0)(L,t)}_{pq} + \Phi^{(i):j}_1.
\end{align*}

All $\Phi$'s are in $\mathbb{R}^{d' \times d}$, except the biases $\Phi^{-}_1$'s are in $\mathbb{R}^{d' \times 1}$. In summary, $E$ is parameterized by $\Phi_{-}$'s and $\Psi_{-}$'s. 

In order to be \( G \)-equivariant, the polynomial map \( E \) must satisfy the condition \( E(gU) = gE(U) \) for all \( g \in \mathcal{G}_{\mathcal{U}} \) and \( U \in \mathcal{U} \). In the following subsections, we derive the computations of \( E(gU) \) and \( gE(U) \), and compare them in order to obtain all possible \( G \)-equivariant polynomial maps among those considered.

\subsection{Compute $E(gU)$}
We have

\allowdisplaybreaks
\begin{align*}
    [E(gW)]^{(i)}_{jk} =& \sum_{L \ge s > t \ge 0} \sum_{p=1}^{n_s}\sum_{q=1}^{n_t}\Phi^{(i):jk}_{(s,t):pq} \cdot [gW]^{(s,t)}_{pq} \\
    &+ \sum_{L \ge s > 0}\sum_{p=1}^{n_s}\Phi^{(i):jk}_{(s):p} \cdot [gb]^{(s)}_{p} \\
    &+ \sum_{L \ge s > t > 0} \sum_{p=1}^{n_s
    } \Phi^{(i):jk}_{(s,t)(t):p} \cdot [gWgb]^{(s,t)(t)}_{p} \\
    &+ \sum_{L \ge s > 0}\sum_{L > t \ge 0} \sum_{p=1}^{n_s}\sum_{q=1}^{n_t} \Phi^{(i):jk}_{(s)(L,t):pq} \cdot [gbgW]^{(s)(L,t)}_{pq} \\
    &+ \sum_{L \ge s > 0}\sum_{L > t \ge 0} \sum_{p=1}^{n_s}\sum_{q=1}^{n_t} \Phi^{(i):jk}_{(s,0)(L,t):pq} \cdot [gWgW]^{(s,0)(L,t)}_{pq} \\
    &+ \Phi_1^{(i):jk}\\
=& \sum_{L \ge s > t \ge 0} \sum_{p=1}^{n_s}\sum_{q=1}^{n_t}\Phi^{(i):jk}_{(s,t):pq} \cdot \frac{d^{(s)}_p}{d^{(t)}_q} \cdot [W]^{(s,t)}_{\pi_s^{-1}(p), \pi_t^{-1}(q)} \\
&+ \sum_{L \ge s > 0}\sum_{p=1}^{n_s}\Phi^{(i):jk}_{(s):p} \cdot d^{(s)}_p \cdot [b]^{(s)}_{\pi^{-1}_s(p)} \\
&+ \sum_{L \ge s > t > 0} \sum_{p=1}^{n_s} \Phi^{(i):jk}_{(s,t)(t):p} \cdot d^{(s)}_p \cdot [Wb]^{(s,t)(t)}_{\pi^{-1}_s(p)} \\
&+ \sum_{L \ge s > 0}\sum_{L > t \ge 0} \sum_{p=1}^{n_s}\sum_{q=1}^{n_t} \Phi^{(i):jk}_{(s)(L,t):pq} \cdot \frac{d^{(s)}_p}{d^{(t)}_q} \cdot [bW]^{(s)(L,t)}_{\pi_s^{-1}(p), \pi_t^{-1}(q)} \\
&+ \sum_{L \ge s > 0}\sum_{L > t \ge 0} \sum_{p=1}^{n_s}\sum_{q=1}^{n_t} \Phi^{(i):jk}_{(s,0)(L,t):pq} \cdot \frac{d^{(s)}_p}{d^{(t)}_q} \cdot [WW]^{(s,0)(L,t)}_{\pi_s^{-1}(p), \pi_t^{-1}(q)} \\
&+ \Phi_1^{(i):jk}\\
=& \sum_{L \ge s > t \ge 0} \sum_{p=1}^{n_s}\sum_{q=1}^{n_t}\Phi^{(i):jk}_{(s,t):\pi_s(p)\pi_t(q)} \cdot \frac{d^{(s)}_{\pi_s(p)}}{d^{(t)}_{\pi_t(q)}} \cdot [W]^{(s,t)}_{pq} \\
&+ \sum_{L \ge s > 0}\sum_{p=1}^{n_s}\Phi^{(i):jk}_{(s):\pi_s(p)} \cdot d^{(s)}_{\pi_s(p)} \cdot [b]^{(s)}_{p} \\
&+ \sum_{L \ge s > t > 0} \sum_{p=1}^{n_s} \Phi^{(i):jk}_{(s,t)(t):\pi_s(p)} \cdot d^{(s)}_{\pi_s(p)} \cdot [Wb]^{(s,t)(t)}_{p} \\
&+ \sum_{L \ge s > 0}\sum_{L > t \ge 0} \sum_{p=1}^{n_s}\sum_{q=1}^{n_t} \Phi^{(i):jk}_{(s)(L,t):\pi_s(p)\pi_t(q)} \cdot \frac{d^{(s)}_{\pi_s(p)}}{d^{(t)}_{\pi_t(q)}} \cdot [bW]^{(s)(L,t)}_{pq} \\
&+ \sum_{L \ge s > 0}\sum_{L > t \ge 0} \sum_{p=1}^{n_s}\sum_{q=1}^{n_t} \Phi^{(i):jk}_{(s,0)(L,t):\pi_s(p)\pi_t(q)} \cdot \frac{d^{(s)}_{\pi_s(p)}}{d^{(t)}_{\pi_t(q)}} \cdot [WW]^{(s,0)(L,t)}_{pq} \\
&+ \Phi_1^{(i):jk}\\
    [E(gb)]^{(i)}_j =& \sum_{L \ge s > t \ge 0} \sum_{p=1}^{n_s}\sum_{q=1}^{n_t}\Phi^{(i):j}_{(s,t):pq} \cdot [gW]^{(s,t)}_{pq} \\
    &+ \sum_{L \ge s > 0}\sum_{p=1}^{n_s}\Phi^{(i):j}_{(s):p} \cdot [gb]^{(s)}_{p} \\
    &+ \sum_{L \ge > t > 0} \sum_{p=1}^{n_s} \Phi^{(i):j}_{(s,t)(t):p} \cdot [gWgb]^{(s,t)(t)}_{p} \\
    &+ \sum_{L \ge s > 0}\sum_{L > t \ge 0} \sum_{p=1}^{n_s}\sum_{q=1}^{n_t} \Phi^{(i):j}_{(s)(L,t):pq} \cdot [gbgW]^{(s)(L,t)}_{pq} \\
    &+ \sum_{L \ge s > 0}\sum_{L > t \ge 0} \sum_{p=1}^{n_s}\sum_{q=1}^{n_t} \Phi^{(i):j}_{(s,0)(L,t):pq} \cdot [gWgW]^{(s,0)(L,t)}_{pq} \\
    &+ \Phi_1^{(i):j} \\
=& \sum_{L \ge s > t \ge 0} \sum_{p=1}^{n_s}\sum_{q=1}^{n_t}\Phi^{(i):j}_{(s,t):pq} \cdot \frac{d^{(s)}_p}{d^{(t)}_q} \cdot [W]^{(s,t)}_{\pi_s^{-1}(p), \pi_t^{-1}(q)} \\
&+ \sum_{L \ge s > 0}\sum_{p=1}^{n_s}\Phi^{(i):j}_{(s):p} \cdot d^{(s)}_p \cdot [b]^{(s)}_{\pi^{-1}_s(p)} \\
&+ \sum_{L \ge s > t > 0} \sum_{p=1}^{n_s} \Phi^{(i):j}_{(s,t)(t):p} \cdot d^{(s)}_p \cdot [Wb]^{(s,t)(t)}_{\pi^{-1}_s(p)} \\
&+ \sum_{L \ge s > 0}\sum_{L > t \ge 0} \sum_{p=1}^{n_s}\sum_{q=1}^{n_t} \Phi^{(i):j}_{(s)(L,t):pq} \cdot \frac{d^{(s)}_p}{d^{(t)}_q} \cdot [bW]^{(s)(L,t)}_{\pi_s^{-1}(p), \pi_t^{-1}(q)} \\
&+ \sum_{L \ge s > 0}\sum_{L > t \ge 0} \sum_{p=1}^{n_s}\sum_{q=1}^{n_t} \Phi^{(i):j}_{(s,0)(L,t):pq} \cdot \frac{d^{(s)}_p}{d^{(t)}_q} \cdot [WW]^{(s,0)(L,t)}_{\pi_s^{-1}(p), \pi_t^{-1}(q)} \\
&+ \Phi_1^{(i):j}\\
=& \sum_{L \ge s > t \ge 0} \sum_{p=1}^{n_s}\sum_{q=1}^{n_t}\Phi^{(i):j}_{(s,t):\pi_s(p)\pi_t(q)} \cdot \frac{d^{(s)}_{\pi_s(p)}}{d^{(t)}_{\pi_t(q)}} \cdot [W]^{(s,t)}_{pq} \\
&+ \sum_{L \ge s > 0}\sum_{p=1}^{n_s}\Phi^{(i):j}_{(s):\pi_s(p)} \cdot d^{(s)}_{\pi_s(p)} \cdot [b]^{(s)}_{p} \\
&+ \sum_{L \ge s > t > 0} \sum_{p=1}^{n_s} \Phi^{(i):j}_{(s,t)(t):\pi_s(p)} \cdot d^{(s)}_{\pi_s(p)} \cdot [Wb]^{(s,t)(t)}_{p} \\
&+ \sum_{L \ge s > 0}\sum_{L > t \ge 0} \sum_{p=1}^{n_s}\sum_{q=1}^{n_t} \Phi^{(i):j}_{(s)(L,t):\pi_s(p)\pi_t(q)} \cdot \frac{d^{(s)}_{\pi_s(p)}}{d^{(t)}_{\pi_t(q)}} \cdot [bW]^{(s)(L,t)}_{pq} \\
&+ \sum_{L \ge s > 0}\sum_{L > t \ge 0} \sum_{p=1}^{n_s}\sum_{q=1}^{n_t} \Phi^{(i):j}_{(s,0)(L,t):\pi_s(p)\pi_t(q)} \cdot \frac{d^{(s)}_{\pi_s(p)}}{d^{(t)}_{\pi_t(q)}} \cdot [WW]^{(s,0)(L,t)}_{pq} \\
&+ \Phi_1^{(i):j}. 
\end{align*}

Note that, we can move around the $\pi$'s in above equations since the group $G$ satisfy that: $G \cap \mathcal{P}_i$ is trivial (for $i= 0$ or $i=L$) or the whole $\mathcal{P}_i$ (for $0 < i < L$).

\subsection{Compute $gE(U)$}

We have:

\begin{align*}
    [g(E(W))]^{(i)}_{jk} =& ~ \frac{d^{(i)}_j}{d^{(i-1)}_k} \cdot [W']^{(i)}_{\pi_i^{-1}(j)\pi_{i-1}^{-1}(k)} \\
    =&\sum_{L \ge s > t \ge 0} \sum_{p=1}^{n_s}\sum_{q=1}^{n_t} \frac{d^{(i)}_j}{d^{(i-1)}_k} \cdot \Phi^{(i):\pi_i^{-1}(j)\pi_{i-1}^{-1}(k)}_{(s,t):pq} \cdot [W]^{(s,t)}_{pq} \\
    &+ \sum_{L \ge s > 0}\sum_{p=1}^{n_s}  \frac{d^{(i)}_j}{d^{(i-1)}_k} \cdot \Phi^{(i):\pi_i^{-1}(j)\pi_{i-1}^{-1}(k)}_{(s):p} \cdot [b]^{(s)}_{p} \\
    &+ \sum_{L \ge s > t > 0} \sum_{p=1}^{n_s}  \frac{d^{(i)}_j}{d^{(i-1)}_k} \cdot \Phi^{(i):\pi_i^{-1}(j)\pi_{i-1}^{-1}(k)}_{(s,t)(t):p} \cdot [Wb]^{(s,t)(t)}_{p} \\
    &+ \sum_{L \ge s > 0}\sum_{L > t \ge 0} \sum_{p=1}^{n_s}\sum_{q=1}^{n_t}  \frac{d^{(i)}_j}{d^{(i-1)}_k} \cdot \Phi^{(i):\pi_i^{-1}(j)\pi_{i-1}^{-1}(k)}_{(s)(L,t):pq} \cdot [bW]^{(s)(L,t)}_{pq} \\
    &+ \sum_{L \ge s > 0}\sum_{L > t \ge 0} \sum_{p=1}^{n_s}\sum_{q=1}^{n_t}  \frac{d^{(i)}_j}{d^{(i-1)}_k} \cdot \Phi^{(i):\pi_i^{-1}(j)\pi_{i-1}^{-1}(k)}_{(s,0)(L,t):pq} \cdot [WW]^{(s,0)(L,t)}_{pq}\\
    &+  \frac{d^{(i)}_j}{d^{(i-1)}_k} \cdot \Phi_1^{(i):\pi_i^{-1}(j)\pi_{i-1}^{-1}(k)}\\
    [g(E(b))]^{(i)}_j =&  ~ d^{(i)}_j \cdot [b']^{(i)}_{\pi_i^{-1}(j)} \\
    =& \sum_{L \ge s > t \ge 0} \sum_{p=1}^{n_s}\sum_{q=1}^{n_t} d^{(i)}_j \cdot \Phi^{(i):\pi_i^{-1}(j)}_{(s,t):pq} \cdot [W]^{(s,t)}_{pq} \\
    &+ \sum_{L \ge s > 0}\sum_{p=1}^{n_s} d^{(i)}_j \cdot \Phi^{(i):\pi_i^{-1}(j)}_{(s):p} \cdot [b]^{(s)}_{p} \\
    &+ \sum_{L \ge s > t > 0} \sum_{p=1}^{n_s} d^{(i)}_j \cdot \Phi^{(i):\pi_i^{-1}(j)}_{(s,t)(t):p} \cdot [Wb]^{(s,t)(t)}_{p} \\
    &+ \sum_{L \ge s > 0}\sum_{L > t \ge 0} \sum_{p=1}^{n_s}\sum_{q=1}^{n_t} d^{(i)}_j \cdot \Phi^{(i):\pi_i^{-1}(j)}_{(s)(L,t):pq} \cdot [bW]^{(s)(L,t)}_{pq} \\
    &+ \sum_{L \ge s > 0}\sum_{L > t \ge 0} \sum_{p=1}^{n_s}\sum_{q=1}^{n_t} d^{(i)}_j \cdot \Phi^{(i):\pi_i^{-1}(j)}_{(s,0)(L,t):pq} \cdot [WW]^{(s,0)(L,t)}_{pq} \\
    &+d^{(i)}_j \cdot \Phi_1^{(i):\pi_i^{-1}(j)}
\end{align*}

\subsection{Compare $E(gU)$ and $gE(U)$}

Since $E(gU) = gE(U)$, from Corollary \ref{appendix:polynomial-ring-main-problem-corollary}, the parameters $\Phi^{-}_{-}$'s have to satisfy these following conditions:
\begin{enumerate}
    \item For all $L \ge s > t \ge 0$ with $(s,t) \neq (L,0)$, and for all $p,q$, we have
    \begin{align*}
    \Phi^{(i):jk}_{(s,t):\pi_s(p)\pi_t(q)} \cdot \frac{d^{(s)}_{\pi_s(p)}}{d^{(t)}_{\pi_t(q)}} &= \frac{d^{(i)}_j}{d^{(i-1)}_k} \cdot \Phi^{(i):\pi_i^{-1}(j)\pi_{i-1}^{-1}(k)}_{(s,t):pq} \\
    \Phi^{(i):j}_{(s,t):\pi_s(p)\pi_t(q)} \cdot \frac{d^{(s)}_{\pi_s(p)}}{d^{(t)}_{\pi_t(q)}} &=  d^{(i)}_j \cdot \Phi^{(i):\pi_i^{-1}(j)}_{(s,t):pq}
    \end{align*}
    
    
    \item For all  $L \ge s > 0$, $L > t \ge 0$ with $s \neq t$, we have
    \begin{align*}
        \Phi^{(i):jk}_{(s,0)(L,t):\pi_s(p)\pi_t(q)} \cdot \frac{d^{(s)}_{\pi_s(p)}}{d^{(t)}_{\pi_t(q)}} &= \frac{d^{(i)}_j}{d^{(i-1)}_k} \cdot \Phi^{(i):\pi_i^{-1}(j)\pi_{i-1}^{-1}(k)}_{(s,0)(L,t):pq} \\
        \Phi^{(i):j}_{(s,0)(L,t):\pi_s(p)\pi_t(q)} \cdot \frac{d^{(s)}_{\pi_s(p)}}{d^{(t)}_{\pi_t(q)}} &= d^{(i)}_j \cdot \Phi^{(i):\pi_i^{-1}(j)}_{(s,0)(L,t):pq}
    \end{align*}


    \item We have
    \begin{align*}
&\sum_{p=1}^{n_L}\sum_{q=1}^{n_0}\Phi^{(i):jk}_{(L,0):\pi_L(p)\pi_0(q)} \cdot \frac{d^{(L)}_{\pi_L(p)}}{d^{(0)}_{\pi_0(q)}} \cdot [W]^{(L,0)}_{pq} \\
&\hspace{80pt}+  \sum_{L > s > 0} \sum_{p=1}^{n_s}\sum_{q=1}^{n_s} \Phi^{(i):jk}_{(s,0)(L,s):\pi_s(p)\pi_s(q)} \cdot \frac{d^{(s)}_{\pi_s(p)}}{d^{(s)}_{\pi_s(q)}} \cdot [WW]^{(s,0)(L,s)}_{pq} \\
=& \sum_{p=1}^{n_L}\sum_{q=1}^{n_0} \frac{d^{(i)}_j}{d^{(i-1)}_k} \cdot \Phi^{(i):\pi_i^{-1}(j)\pi_{i-1}^{-1}(k)}_{(L,0):pq} \cdot [W]^{(L,0)}_{pq} \\
&\hspace{80pt}+ \sum_{L > s > 0}\sum_{p=1}^{n_s}\sum_{q=1}^{n_s}  \frac{d^{(i)}_j}{d^{(i-1)}_k} \cdot \Phi^{(i):\pi_i^{-1}(j)\pi_{i-1}^{-1}(k)}_{(s,0)(L,s):pq} \cdot [WW]^{(s,0)(L,s)}_{pq} \\
&\sum_{p=1}^{n_L}\sum_{q=1}^{n_0}\Phi^{(i):j}_{(L,0):\pi_L(p)\pi_0(q)} \cdot \frac{d^{(L)}_{\pi_L(p)}}{d^{(0)}_{\pi_0(q)}} \cdot [W]^{(L,0)}_{pq} \\
&\hspace{80pt}+  \sum_{L > s > 0} \sum_{p=1}^{n_s}\sum_{q=1}^{n_s} \Phi^{(i):j}_{(s,0)(L,s):\pi_s(p)\pi_s(q)} \cdot \frac{d^{(s)}_{\pi_s(p)}}{d^{(s)}_{\pi_s(q)}} \cdot [WW]^{(s,0)(L,s)}_{pq} \\
=& \sum_{p=1}^{n_L}\sum_{q=1}^{n_0} d^{(i)}_j \cdot \Phi^{(i):\pi_i^{-1}(j)}_{(L,0):pq} \cdot [W]^{(L,0)}_{pq} \\
&\hspace{80pt}+ \sum_{L > s > 0}\sum_{p=1}^{n_s}\sum_{q=1}^{n_s}  d^{(i)}_j \cdot \Phi^{(i):\pi_i^{-1}(j)}_{(s,0)(L,s):pq} \cdot [WW]^{(s,0)(L,s)}_{pq}
\end{align*}


    \item For all $L > s > t > 0$, and for all $p$, we have
    \begin{align*}
        \Phi^{(i):jk}_{(s,t)(t):\pi_s(p)} \cdot d^{(s)}_{\pi_s(p)} &= \frac{d^{(i)}_j}{d^{(i-1)}_k} \cdot \Phi^{(i):\pi_i^{-1}(j)\pi_{i-1}^{-1}(k)}_{(s,t)(t):p} \\
        \Phi^{(i):j}_{(s,t)(t):\pi_s(p)} \cdot d^{(s)}_{\pi_s(p)} &= d^{(i)}_j \cdot \Phi^{(i):\pi_i^{-1}(j)}_{(s,t)(t):p}.
    \end{align*}

    
    \item For all  $L \ge s > 0$, $L > t \ge 0$ with $s \neq t$, we have
    \begin{align*}
        \Phi^{(i):jk}_{(s)(L,t):\pi_s(p)\pi_t(q)} \cdot \frac{d^{(s)}_{\pi_s(p)}}{d^{(t)}_{\pi_t(q)}} &= \frac{d^{(i)}_j}{d^{(i-1)}_k} \cdot \Phi^{(i):\pi_i^{-1}(j)\pi_{i-1}^{-1}(k)}_{(s)(L,t):pq} \\
        \Phi^{(i):j}_{(s)(L,t):\pi_s(p)\pi_t(q)} \cdot \frac{d^{(s)}_{\pi_s(p)}}{d^{(t)}_{\pi_t(q)}} &= d^{(i)}_j \cdot \Phi^{(i):\pi_i^{-1}(j)}_{(s)(L,t):pq}
    \end{align*}


    \item For all $L > t > 0$, we have
    \begin{align*}
    & \sum_{p=1}^{n_L} \Phi^{(i):jk}_{(L,t)(t):\pi_L(p)} \cdot d^{(L)}_{\pi_L(p)} \cdot [Wb]^{(L,t)(t)}_{p} \\
    & \hspace{80pt} + \sum_{p=1}^{n_t}\sum_{q=1}^{n_t} \Phi^{(i):jk}_{(t)(L,t):\pi_t(p)\pi_t(q)} \cdot \frac{d^{(t)}_{\pi_t(p)}}{d^{(t)}_{\pi_t(q)}} \cdot [bW]^{(t)(L,t)}_{pq}\\
    =& \sum_{p=1}^{n_L}  \frac{d^{(i)}_j}{d^{(i-1)}_k} \cdot \Phi^{(i):\pi_i^{-1}(j)\pi_{i-1}^{-1}(k)}_{(L,t)(t):p} \cdot [Wb]^{(L,t)(t)}_{p} \\
    &\hspace{80pt} + \sum_{p=1}^{n_t}\sum_{q=1}^{n_t}  \frac{d^{(i)}_j}{d^{(i-1)}_k} \cdot \Phi^{(i):\pi_i^{-1}(j)\pi_{i-1}^{-1}(k)}_{(t)(L,t):pq} \cdot [bW]^{(t)(L,t)}_{pq} \\
    & \sum_{p=1}^{n_L} \Phi^{(i):j}_{(L,t)(t):\pi_L(p)} \cdot d^{(L)}_{\pi_L(p)} \cdot [Wb]^{(L,t)(t)}_{p} \\
    &\hspace{80pt} + \sum_{p=1}^{n_t}\sum_{q=1}^{n_t} \Phi^{(i):j}_{(t)(L,t):\pi_t(p)\pi_t(q)} \cdot \frac{d^{(t)}_{\pi_t(p)}}{d^{(t)}_{\pi_t(q)}} \cdot [bW]^{(t)(L,t)}_{pq}  \\
    =& \sum_{p=1}^{n_L} d^{(i)}_j \cdot \Phi^{(i):\pi_i^{-1}(j)}_{(L,t)(t):p} \cdot [Wb]^{(L,t)(t)}_{p} \\
    & \hspace{80pt} + \sum_{p=1}^{n_t}\sum_{q=1}^{n_t} d^{(i)}_j \cdot \Phi^{(i):\pi_i^{-1}(j)}_{(t)(L,t):pq} \cdot [bW]^{(t)(L,t)}_{pq}
    \end{align*}


    \item For all $L \ge s >0$ and for all $p$, we have
    \begin{align*}
        \Phi^{(i):jk}_{(s):\pi_s(p)} \cdot d^{(s)}_{\pi_s(p)} &= \frac{d^{(i)}_j}{d^{(i-1)}_k} \cdot \Phi^{(i):\pi_i^{-1}(j)\pi_{i-1}^{-1}(k)}_{(s):p} \\
        \Phi^{(i):j}_{(s):\pi_s(p)} \cdot d^{(s)}_{\pi_s(p)} &= d^{(i)}_j \cdot \Phi^{(i):\pi_i^{-1}(j)}_{(s):p}.
    \end{align*}


    \item We have
    \begin{align*}
        \Phi_1^{(i):jk} &= \frac{d^{(i)}_j}{d^{(i-1)}_k} \cdot \Phi_1^{(i):\pi_i^{-1}(j)\pi_{i-1}^{-1}(k)} \\
        \Phi_1^{(i):j} &= d^{(i)}_j \cdot \Phi_1^{(i):\pi_i^{-1}(j)}
    \end{align*}
\end{enumerate}

Also, since the group $G$ satisfy that: $G \cap \mathcal{P}_i$ is trivial (for $i= 0$ or $i=L$) or the whole $\mathcal{P}_i$ (for $0 < i < L$), so we can simplify the above conditions by moving some of the permutation $\pi$'s to the left hand sides. We have

\begin{enumerate}
    \item For all $L \ge s > t \ge 0$ with $(s,t) \neq (L,0)$, and for all $p,q$, we have
    \begin{align*}
    \Phi^{(i):\pi_i(j)\pi_{i-1}(k)}_{(s,t):\pi_s(p)\pi_t(q)} \cdot \frac{d^{(s)}_{\pi_s(p)}}{d^{(t)}_{\pi_t(q)}} &= \frac{d^{(i)}_{\pi_i(j)}}{d^{(i-1)}_{\pi_{i-1}(k)}} \cdot \Phi^{(i):jk}_{(s,t):pq} \\
    \Phi^{(i):\pi_i(j)}_{(s,t):\pi_s(p)\pi_t(q)} \cdot \frac{d^{(s)}_{\pi_s(p)}}{d^{(t)}_{\pi_t(q)}} &=  d^{(i)}_{\pi_i(j)} \cdot \Phi^{(i):j}_{(s,t):pq}
    \end{align*}
    
    
    \item For all  $L \ge s > 0$, $L > t \ge 0$ with $s \neq t$, we have
    \begin{align*}
        \Phi^{(i):\pi_i(j)\pi_{i-1}(k)}_{(s,0)(L,t):\pi_s(p)\pi_t(q)} \cdot \frac{d^{(s)}_{\pi_s(p)}}{d^{(t)}_{\pi_t(q)}} &= \frac{d^{(i)}_{\pi_i(j)}}{d^{(i-1)}_{\pi_{i-1}(k)}} \cdot \Phi^{(i):jk}_{(s,0)(L,t):pq} \\
        \Phi^{(i):\pi_i(j)}_{(s,0)(L,t):\pi_s(p)\pi_t(q)} \cdot \frac{d^{(s)}_{\pi_s(p)}}{d^{(t)}_{\pi_t(q)}} &= d^{(i)}_{\pi_i(j)} \cdot \Phi^{(i):j}_{(s,0)(L,t):pq}
    \end{align*}


    \item We have
    \begin{align*}
&\sum_{p=1}^{n_L}\sum_{q=1}^{n_0}\Phi^{(i):\pi_i(j)\pi_{i-1}(k)}_{(L,0):\pi_L(p)\pi_0(q)} \cdot \frac{d^{(L)}_{\pi_L(p)}}{d^{(0)}_{\pi_0(q)}} \cdot [W]^{(L,0)}_{pq} \\
&\hspace{80pt}+  \sum_{L > s > 0} \sum_{p=1}^{n_s}\sum_{q=1}^{n_s} \Phi^{(i):\pi_i(j)\pi_{i-1}(k)}_{(s,0)(L,s):\pi_s(p)\pi_s(q)} \cdot \frac{d^{(s)}_{\pi_s(p)}}{d^{(s)}_{\pi_s(q)}} \cdot [WW]^{(s,0)(L,s)}_{pq} \\
=& \sum_{p=1}^{n_L}\sum_{q=1}^{n_0} \frac{d^{(i)}_{\pi_i(j)}}{d^{(i-1)}_{\pi_{i-1}(k)}} \cdot \Phi^{(i):jk}_{(L,0):pq} \cdot [W]^{(L,0)}_{pq} \\
&\hspace{80pt}+ \sum_{L > s > 0}\sum_{p=1}^{n_s}\sum_{q=1}^{n_s}  \frac{d^{(i)}_{\pi_i(j)}}{d^{(i-1)}_{\pi_{i-1}(k)}} \cdot \Phi^{(i):jk}_{(s,0)(L,s):pq} \cdot [WW]^{(s,0)(L,s)}_{pq} \\
&\sum_{p=1}^{n_L}\sum_{q=1}^{n_0}\Phi^{(i):\pi_i(j)}_{(L,0):\pi_L(p)\pi_0(q)} \cdot \frac{d^{(L)}_{\pi_L(p)}}{d^{(0)}_{\pi_0(q)}} \cdot [W]^{(L,0)}_{pq} \\
&\hspace{80pt}+  \sum_{L > s > 0} \sum_{p=1}^{n_s}\sum_{q=1}^{n_s} \Phi^{(i):\pi_i(j)}_{(s,0)(L,s):\pi_s(p)\pi_s(q)} \cdot \frac{d^{(s)}_{\pi_s(p)}}{d^{(s)}_{\pi_s(q)}} \cdot [WW]^{(s,0)(L,s)}_{pq} \\
=& \sum_{p=1}^{n_L}\sum_{q=1}^{n_0} d^{(i)}_{\pi_i(j)} \cdot \Phi^{(i):j}_{(L,0):pq} \cdot [W]^{(L,0)}_{pq} \\
&\hspace{80pt}+ \sum_{L > s > 0}\sum_{p=1}^{n_s}\sum_{q=1}^{n_s}  d^{(i)}_{\pi_i(j)} \cdot \Phi^{(i):j}_{(s,0)(L,s):pq} \cdot [WW]^{(s,0)(L,s)}_{pq}
\end{align*}


    \item For all $L > s > t > 0$, and for all $p$, we have
    \begin{align*}
        \Phi^{(i):\pi_i(j)\pi_{i-1}(k)}_{(s,t)(t):\pi_s(p)} \cdot d^{(s)}_{\pi_s(p)} &= \frac{d^{(i)}_{\pi_i(j)}}{d^{(i-1)}_{\pi_{i-1}(k)}} \cdot \Phi^{(i):jk}_{(s,t)(t):p} \\
        \Phi^{(i):\pi_i(j)}_{(s,t)(t):\pi_s(p)} \cdot d^{(s)}_{\pi_s(p)} &= d^{(i)}_{\pi_i(j)} \cdot \Phi^{(i):j}_{(s,t)(t):p}.
    \end{align*}

    
    \item For all  $L \ge s > 0$, $L > t \ge 0$ with $s \neq t$, we have
    \begin{align*}
        \Phi^{(i):\pi_i(j)\pi_{i-1}(k)}_{(s)(L,t):\pi_s(p)\pi_t(q)} \cdot \frac{d^{(s)}_{\pi_s(p)}}{d^{(t)}_{\pi_t(q)}} &= \frac{d^{(i)}_{\pi_i(j)}}{d^{(i-1)}_{\pi_{i-1}(k)}} \cdot \Phi^{(i):jk}_{(s)(L,t):pq} \\
        \Phi^{(i):\pi_i(j)}_{(s)(L,t):\pi_s(p)\pi_t(q)} \cdot \frac{d^{(s)}_{\pi_s(p)}}{d^{(t)}_{\pi_t(q)}} &= d^{(i)}_{\pi_i(j)} \cdot \Phi^{(i):j}_{(s)(L,t):pq}
    \end{align*}


    \item For all $L > t > 0$, we have
    \begin{align*}
    & \sum_{p=1}^{n_L} \Phi^{(i):\pi_i(j)\pi_{i-1}(k)}_{(L,t)(t):\pi_L(p)} \cdot d^{(L)}_{\pi_L(p)} \cdot [Wb]^{(L,t)(t)}_{p} \\
    & \hspace{80pt} + \sum_{p=1}^{n_t}\sum_{q=1}^{n_t} \Phi^{(i):\pi_i(j)\pi_{i-1}(k)}_{(t)(L,t):\pi_t(p)\pi_t(q)} \cdot \frac{d^{(t)}_{\pi_t(p)}}{d^{(t)}_{\pi_t(q)}} \cdot [bW]^{(t)(L,t)}_{pq}\\
    =& \sum_{p=1}^{n_L}  \frac{d^{(i)}_{\pi_i(j)}}{d^{(i-1)}_{\pi_{i-1}(k)}} \cdot \Phi^{(i):jk}_{(L,t)(t):p} \cdot [Wb]^{(L,t)(t)}_{p} \\
    &\hspace{80pt} + \sum_{p=1}^{n_t}\sum_{q=1}^{n_t}  \frac{d^{(i)}_{\pi_i(j)}}{d^{(i-1)}_{\pi_{i-1}(k)}} \cdot \Phi^{(i):jk}_{(t)(L,t):pq} \cdot [bW]^{(t)(L,t)}_{pq} \\
    & \sum_{p=1}^{n_L} \Phi^{(i):\pi_i(j)}_{(L,t)(t):\pi_L(p)} \cdot d^{(L)}_{\pi_L(p)} \cdot [Wb]^{(L,t)(t)}_{p} \\
    &\hspace{80pt} + \sum_{p=1}^{n_t}\sum_{q=1}^{n_t} \Phi^{(i):\pi_i(j)}_{(t)(L,t):\pi_t(p)\pi_t(q)} \cdot \frac{d^{(t)}_{\pi_t(p)}}{d^{(t)}_{\pi_t(q)}} \cdot [bW]^{(t)(L,t)}_{pq}  \\
    =& \sum_{p=1}^{n_L} d^{(i)}_{\pi_i(j)} \cdot \Phi^{(i):j}_{(L,t)(t):p} \cdot [Wb]^{(L,t)(t)}_{p} \\
    & \hspace{80pt} + \sum_{p=1}^{n_t}\sum_{q=1}^{n_t} d^{(i)}_{\pi_i(j)} \cdot \Phi^{(i):j}_{(t)(L,t):pq} \cdot [bW]^{(t)(L,t)}_{pq}
    \end{align*}


    \item For all $L \ge s >0$ and for all $p$, we have
    \begin{align*}
        \Phi^{(i):\pi_i(j)\pi_{i-1}(k)}_{(s):\pi_s(p)} \cdot d^{(s)}_{\pi_s(p)} &= \frac{d^{(i)}_{\pi_i(j)}}{d^{(i-1)}_{\pi_{i-1}(k)}} \cdot \Phi^{(i):jk}_{(s):p} \\
        \Phi^{(i):\pi_i(j)}_{(s):\pi_s(p)} \cdot d^{(s)}_{\pi_s(p)} &= d^{(i)}_{\pi_i(j)} \cdot \Phi^{(i):j}_{(s):p}.
    \end{align*}


    \item We have
    \begin{align*}
        \Phi_1^{(i):\pi_i(j)\pi_{i-1}(k)} &= \frac{d^{(i)}_{\pi_i(j)}}{d^{(i-1)}_{\pi_{i-1}(k)}} \cdot \Phi_1^{(i):jk} \\
        \Phi_1^{(i):\pi_i(j)} &= d^{(i)}_{\pi_i(j)} \cdot \Phi_1^{(i):j}
    \end{align*}
\end{enumerate}


From the above equalities, we can determine all constraints for the coefficients according to the entries of \( [E(W)] \) as follows:

\begin{enumerate}
    \item 
    \begin{itemize}
        \item If $(s,t) = (i,i-1) \notin \{(L,L-1),(1,0)\}$, $p=j$, $q=k$,
            \begin{align*}
                \Phi^{(i):\pi(j)\pi'(k)}_{(i,i-1):\pi(j)\pi'(k)}  &= \Phi^{(i):jk}_{(i,i-1):jk}
            \end{align*}
        \item If $(s,t) = (i,i-1) = (L,L-1)$, $q= k$,
            \begin{align*}
                \Phi^{(L):j\pi_{L-1}(k)}_{(L,L-1):p\pi_{L-1}(k)}  &= \Phi^{(L):jk}_{(L,L-1):pk}
            \end{align*}
        \item If $(s,t) = (i,i-1) = (1,0)$, $p = j$,
            \begin{align*}
                \Phi^{(1):\pi_1(j)k}_{(1,0):\pi_1(j)q}  &= \Phi^{(1):jk}_{(1,0):jq}
            \end{align*}
    \end{itemize}

    \item 
    \begin{itemize}
        \item If $(s,t) = (i,i-1) \notin \{(L,L-1),(1,0)\}$, $p=j$, $q=k$,
            \begin{align*}
                \Phi^{(i):\pi(j)\pi'(k)}_{(i,0)(L,i-1):\pi(j)\pi'(k)}  &= \Phi^{(i):jk}_{(i,0)(L,i-1):jk}
            \end{align*}
        \item If $(s,t) = (i,i-1) = (L,L-1)$, $q= k$,
            \begin{align*}
                \Phi^{(L):j\pi_{L-1}(k)}_{(L,0)(L,L-1):p\pi_{L-1}(k)}  &= \Phi^{(L):jk}_{(L,0)(L,L-1):pk}
            \end{align*}
        \item If $(s,t) = (i,i-1) = (1,0)$, $p = j$,
            \begin{align*}
                \Phi^{(1):\pi_1(j)k}_{(1,0)(L,0):\pi_1(j)q}  &= \Phi^{(1):jk}_{(1,0)(L,0):jq}
            \end{align*}
    \end{itemize}

    \item 
        \begin{align*}
&\sum_{p=1}^{n_L}\sum_{q=1}^{n_0}\Phi^{(i):\pi_i(j)\pi_{i-1}(k)}_{(L,0):pq} \cdot [W]^{(L,0)}_{pq} \\
&\hspace{80pt}+  \sum_{L > s > 0} \sum_{p=1}^{n_s}\sum_{q=1}^{n_s} \Phi^{(i):\pi_i(j)\pi_{i-1}(k)}_{(s,0)(L,s):\pi_s(p)\pi_s(q)} \cdot \frac{d^{(s)}_{\pi_s(p)}}{d^{(s)}_{\pi_s(q)}} \cdot [WW]^{(s,0)(L,s)}_{pq} \\
=& \sum_{p=1}^{n_L}\sum_{q=1}^{n_0} \frac{d^{(i)}_{\pi_i(j)}}{d^{(i-1)}_{\pi_{i-1}(k)}} \cdot \Phi^{(i):jk}_{(L,0):pq} \cdot [W]^{(L,0)}_{pq} \\
&\hspace{80pt}+ \sum_{L > s > 0}\sum_{p=1}^{n_s}\sum_{q=1}^{n_s}  \frac{d^{(i)}_{\pi_i(j)}}{d^{(i-1)}_{\pi_{i-1}(k)}} \cdot \Phi^{(i):jk}_{(s,0)(L,s):pq} \cdot [WW]^{(s,0)(L,s)}_{pq}
\end{align*}

For each $L  > l > 0$, by scaling all the $d^{(l)}_{-}$'s by the same scalar, we have

\begin{align*}
0=&\sum_{p=1}^{n_L}\sum_{q=1}^{n_0}\Phi^{(i):\pi_i(j)\pi_{i-1}(k)}_{(L,0):pq} \cdot [W]^{(L,0)}_{pq} \\
&\hspace{80pt}+  \sum_{L > s > 0} \sum_{p=1}^{n_s}\sum_{q=1}^{n_s} \Phi^{(i):\pi_i(j)\pi_{i-1}(k)}_{(s,0)(L,s):\pi_s(p)\pi_s(q)} \cdot \frac{d^{(s)}_{\pi_s(p)}}{d^{(s)}_{\pi_s(q)}} \cdot [WW]^{(s,0)(L,s)}_{pq} \\
=& \sum_{p=1}^{n_L}\sum_{q=1}^{n_0}  \Phi^{(i):jk}_{(L,0):pq} \cdot [W]^{(L,0)}_{pq} \\
&\hspace{80pt}+ \sum_{L > s > 0}\sum_{p=1}^{n_s}\sum_{q=1}^{n_s}   \Phi^{(i):jk}_{(s,0)(L,s):pq} \cdot [WW]^{(s,0)(L,s)}_{pq}
\end{align*}

    \item For all $L > s > t > 0$, and for all $p$, we have
    \begin{align*}
        \Phi^{(i):\pi_i(j)\pi_{i-1}(k)}_{(s,t)(t):\pi_s(p)} \cdot d^{(s)}_{\pi_s(p)} &= \frac{d^{(i)}_{\pi_i(j)}}{d^{(i-1)}_{\pi_{i-1}(k)}} \cdot \Phi^{(i):jk}_{(s,t)(t):p}.
    \end{align*}
    
    We have
    \begin{align*}
        \Phi^{(i):jk}_{(s,t)(t):p} = 0.
    \end{align*}

    \item For all  $L \ge s > 0$, $L > t \ge 0$ with $s \neq t$, we have
    \begin{align*}
        \Phi^{(i):\pi_i(j)\pi_{i-1}(k)}_{(s)(L,t):\pi_s(p)\pi_t(q)} \cdot \frac{d^{(s)}_{\pi_s(p)}}{d^{(t)}_{\pi_t(q)}} &= \frac{d^{(i)}_{\pi_i(j)}}{d^{(i-1)}_{\pi_{i-1}(k)}} \cdot \Phi^{(i):jk}_{(s)(L,t):pq}.
    \end{align*}
    \begin{itemize}
        \item If $(s,t)=(i,i-1) \notin \{(L,L-1),(1,0)\}$, $p = j$, $q = k$, 
        \begin{align*}
            \Phi^{(i):\pi(j)\pi'(k)}_{(i)(L,i-1):\pi(j)\pi'(k)}  &=  \Phi^{(i):jk}_{(i)(L,i-1):jk}.
        \end{align*}

        \item If $i = L$, $(s,t) = (L,L-1)$, $q=k$,
        \begin{align*}
            \Phi^{(L):j\pi(k)}_{(L)(L,L-1):p\pi(k)} = \Phi^{(L):jk}_{(L)(L,L-1):pk}
        \end{align*}

        \item If $i = 1$, $(s,t) = (1,0)$, $p = j$,
        \begin{align*}
            \Phi^{(1):\pi(j)k}_{(1)(L,0):\pi(j)q} = \Phi^{(1):jk}_{(1)(L,0):jq}.
        \end{align*}
    \end{itemize}

    \item For all $L > t > 0$, we have
    \begin{align*}
        & \sum_{p=1}^{n_L} \Phi^{(i):\pi_i(j)\pi_{i-1}(k)}_{(L,t)(t):p} \cdot [Wb]^{(L,t)(t)}_{p} \\
    & \hspace{80pt} + \sum_{p=1}^{n_t}\sum_{q=1}^{n_t} \Phi^{(i):\pi_i(j)\pi_{i-1}(k)}_{(t)(L,t):\pi_t(p)\pi_t(q)} \cdot \frac{d^{(t)}_{\pi_t(p)}}{d^{(t)}_{\pi_t(q)}} \cdot [bW]^{(t)(L,t)}_{pq}\\
    =& \sum_{p=1}^{n_L}  \frac{d^{(i)}_{\pi_i(j)}}{d^{(i-1)}_{\pi_{i-1}(k)}} \cdot \Phi^{(i):jk}_{(L,t)(t):p} \cdot [Wb]^{(L,t)(t)}_{p} \\
    &\hspace{80pt} + \sum_{p=1}^{n_t}\sum_{q=1}^{n_t}  \frac{d^{(i)}_{\pi_i(j)}}{d^{(i-1)}_{\pi_{i-1}(k)}} \cdot \Phi^{(i):jk}_{(t)(L,t):pq} \cdot [bW]^{(t)(L,t)}_{pq}
    \end{align*}
For each $L  > l > 0$, by scaling all the $d^{(l)}_{-}$'s by the same scalar, we have

\begin{align*}
    0 =& \sum_{p=1}^{n_L} \Phi^{(i):\pi_i(j)\pi_{i-1}(k)}_{(L,t)(t):p} \cdot [Wb]^{(L,t)(t)}_{p} \\
    & \hspace{80pt} + \sum_{p=1}^{n_t}\sum_{q=1}^{n_t} \Phi^{(i):\pi_i(j)\pi_{i-1}(k)}_{(t)(L,t):\pi_t(p)\pi_t(q)} \cdot \frac{d^{(t)}_{\pi_t(p)}}{d^{(t)}_{\pi_t(q)}} \cdot [bW]^{(t)(L,t)}_{pq}\\
    =& \sum_{p=1}^{n_L}  \frac{d^{(i)}_{\pi_i(j)}}{d^{(i-1)}_{\pi_{i-1}(k)}} \cdot \Phi^{(i):jk}_{(L,t)(t):p} \cdot [Wb]^{(L,t)(t)}_{p} \\
    &\hspace{80pt} + \sum_{p=1}^{n_t}\sum_{q=1}^{n_t}  \frac{d^{(i)}_{\pi_i(j)}}{d^{(i-1)}_{\pi_{i-1}(k)}} \cdot \Phi^{(i):jk}_{(t)(L,t):pq} \cdot [bW]^{(t)(L,t)}_{pq}
\end{align*}

    \item If $i = s = 1$, $p=j$,
    \begin{align*}
    \Phi^{(1):\pi(j)k}_{(1):\pi(j)} = \Phi^{(1):jk}_{(1):j}.
    \end{align*}


    \item We have
    \begin{align*}
        \Phi_1^{(i):jk} = 0.
    \end{align*}
\end{enumerate}


Similarly, we can also determine all constraints for the coefficients according to the entries of \( [E(b)] \) as follows:

\begin{enumerate}
    \item For all $L \ge s > t \ge 0$ with $(s,t) \neq (L,0)$, and for all $p,q$, we have
    \begin{align*}
    \Phi^{(i):\pi_i(j)}_{(s,t):\pi_s(p)\pi_t(q)} \cdot \frac{d^{(s)}_{\pi_s(p)}}{d^{(t)}_{\pi_t(q)}} =  d^{(i)}_{\pi_i(j)} \cdot \Phi^{(i):j}_{(s,t):pq}
    \end{align*}

    $(s,t) = (i,0)$, $p = j$,

    \begin{align*}
    \Phi^{(i):\pi(j)}_{(i,0):\pi(j)q} = \Phi^{(i):j}_{(i,0):jq}
    \end{align*}
    
    
    \item For all  $L \ge s > 0$, $L > t \ge 0$ with $s \neq t$, we have
    \begin{align*}
        \Phi^{(i):\pi_i(j)}_{(s,0)(L,t):\pi_s(p)\pi_t(q)} \cdot \frac{d^{(s)}_{\pi_s(p)}}{d^{(t)}_{\pi_t(q)}} = d^{(i)}_{\pi_i(j)} \cdot \Phi^{(i):j}_{(s,0)(L,t):pq}
    \end{align*}


    \begin{itemize}
        \item $t = 0$, $s = i < L$, $p=j$,
        \begin{align*}
            \Phi^{(i):\pi(j)}_{(i,0)(L,0):\pi(j)q} \cdot = \Phi^{(i):j}_{(i,0)(L,0):jq}
        \end{align*}
        \item $t = 0$, $s=i=L$,
        \begin{align*}
            \Phi^{(L):j}_{(L,0)(L,0):pq} =\Phi^{(L):j}_{(L,0)(L,0):pq}
        \end{align*}
    \end{itemize}

    \item We have
    \begin{align*}
&\sum_{p=1}^{n_L}\sum_{q=1}^{n_0}\Phi^{(i):\pi_i(j)}_{(L,0):pq} \cdot [W]^{(L,0)}_{pq} \\
&\hspace{80pt}+  \sum_{L > s > 0} \sum_{p=1}^{n_s}\sum_{q=1}^{n_s} \Phi^{(i):\pi_i(j)}_{(s,0)(L,s):\pi_s(p)\pi_s(q)} \cdot \frac{d^{(s)}_{\pi_s(p)}}{d^{(s)}_{\pi_s(q)}} \cdot [WW]^{(s,0)(L,s)}_{pq} \\
=& \sum_{p=1}^{n_L}\sum_{q=1}^{n_0} d^{(i)}_{\pi_i(j)} \cdot \Phi^{(i):j}_{(L,0):pq} \cdot [W]^{(L,0)}_{pq} \\
&\hspace{80pt}+ \sum_{L > s > 0}\sum_{p=1}^{n_s}\sum_{q=1}^{n_s}  d^{(i)}_{\pi_i(j)} \cdot \Phi^{(i):j}_{(s,0)(L,s):pq} \cdot [WW]^{(s,0)(L,s)}_{pq}
\end{align*}

\begin{itemize}
    \item If $L>i>0$. For each $L  > l > 0$, by scaling all the $d^{(l)}_{-}$'s by the same scalar, we have
\begin{align*}
0 =&\sum_{p=1}^{n_L}\sum_{q=1}^{n_0}\Phi^{(i):\pi_i(j)}_{(L,0):pq} \cdot [W]^{(L,0)}_{pq} \\
&\hspace{50pt}+  \sum_{L > s > 0} \sum_{p=1}^{n_s}\sum_{q=1}^{n_s} \Phi^{(i):\pi_i(j)}_{(s,0)(L,s):\pi_s(p)\pi_s(q)} \cdot \frac{d^{(s)}_{\pi_s(p)}}{d^{(s)}_{\pi_s(q)}} \cdot [WW]^{(s,0)(L,s)}_{pq} \\
=& \sum_{p=1}^{n_L}\sum_{q=1}^{n_0} d^{(i)}_{\pi_i(j)} \cdot \Phi^{(i):j}_{(L,0):pq} \cdot [W]^{(L,0)}_{pq} \\
&\hspace{50pt}+ \sum_{L > s > 0}\sum_{p=1}^{n_s}\sum_{q=1}^{n_s}  d^{(i)}_{\pi_i(j)} \cdot \Phi^{(i):j}_{(s,0)(L,s):pq} \cdot [WW]^{(s,0)(L,s)}_{pq}
\end{align*}


\item If $i=L$. We have
\begin{align*}
&\sum_{p=1}^{n_L}\sum_{q=1}^{n_0}\Phi^{(L):j}_{(L,0):pq} \cdot [W]^{(L,0)}_{pq} \\
&\hspace{50pt}+  \sum_{L > s > 0} \sum_{p=1}^{n_s}\sum_{q=1}^{n_s} \Phi^{(L):j}_{(s,0)(L,s):\pi_s(p)\pi_s(q)} \cdot \frac{d^{(s)}_{\pi_s(p)}}{d^{(s)}_{\pi_s(q)}} \cdot [WW]^{(s,0)(L,s)}_{pq} \\
=& \sum_{p=1}^{n_L}\sum_{q=1}^{n_0} \Phi^{(L):j}_{(L,0):pq} \cdot [W]^{(L,0)}_{pq} \\
&\hspace{50pt}+ \sum_{L > s > 0}\sum_{p=1}^{n_s}\sum_{q=1}^{n_s}  \Phi^{(L):j}_{(s,0)(L,s):pq} \cdot [WW]^{(s,0)(L,s)}_{pq}
\end{align*}
which means $\Phi^{(L):j}_{(L,0):pq}$ can be arbitrary. The rest is
\begin{align*}
&\sum_{L > s > 0} \sum_{p=1}^{n_s}\sum_{q=1}^{n_s} \Phi^{(L):j}_{(s,0)(L,s):\pi_s(p)\pi_s(q)} \cdot \frac{d^{(s)}_{\pi_s(p)}}{d^{(s)}_{\pi_s(q)}} \cdot [WW]^{(s,0)(L,s)}_{pq} \\
=& \sum_{L > s > 0}\sum_{p=1}^{n_s}\sum_{q=1}^{n_s}  \Phi^{(L):j}_{(s,0)(L,s):pq} \cdot [WW]^{(s,0)(L,s)}_{pq}
\end{align*}
For an $L > r > 0$, by letting $\pi_r$  be the identity, and $d^{(r)}_p$ be $1$ for all $p$, we have
\begin{align*}
&\sum_{L > s > 0, s \neq r} \sum_{p=1}^{n_s}\sum_{q=1}^{n_s} \Phi^{(L):j}_{(s,0)(L,s):\pi_s(p)\pi_s(q)} \cdot \frac{d^{(s)}_{\pi_s(p)}}{d^{(s)}_{\pi_s(q)}} \cdot [WW]^{(s,0)(L,s)}_{pq} \\
=& \sum_{L > s > 0, s \neq r}\sum_{p=1}^{n_s}\sum_{q=1}^{n_s}  \Phi^{(L):j}_{(s,0)(L,s):pq} \cdot [WW]^{(s,0)(L,s)}_{pq},
\end{align*}
so
\begin{align*}
& \sum_{p=1}^{n_r}\sum_{q=1}^{n_r} \Phi^{(L):j}_{(r,0)(L,r):\pi_r(p)\pi_r(q)} \cdot \frac{d^{(r)}_{\pi_r(p)}}{d^{(r)}_{\pi_r(q)}} \cdot [WW]^{(r,0)(L,r)}_{pq} \\
=&\sum_{p=1}^{n_r}\sum_{q=1}^{n_r}  \Phi^{(L):j}_{(r,0)(L,r):pq} \cdot [WW]^{(r,0)(L,r)}_{pq}
\end{align*}
    By Lemma~\ref{appendix:polynomial-ring-problem-lemma-3} and Corollary~\ref{appendix:prelim-7}, we have
\begin{align*}
    \Phi^{(L):j}_{(r,0)(L,r):\pi_r(p)\pi_r(q)} \cdot \frac{d^{(r)}_{\pi_r(p)}}{d^{(r)}_{\pi_r(q)}} = \Phi^{(L):j}_{(r,0)(L,r):pq}.
\end{align*}
So 
\begin{align*}
    \Phi^{(L):j}_{(r,0)(L,r):pq} = 0
\end{align*}
for $p \neq q$, and
\begin{align*}
    \Phi^{(L):j}_{(r,0)(L,r):\pi_r(p)\pi_r(p)} = \Phi^{(L):j}_{(r,0)(L,r):pp}.
\end{align*}
In conclusion, we have $\Phi^{(L):j}_{(L,0):pq}$ is arbitrary, and for $L > s> 0$,
\begin{align*}
    \Phi^{(L):j}_{(s,0)(L,s):\pi(p)\pi(p)} = \Phi^{(L):j}_{(s,0)(L,s):pp}.
\end{align*}
\end{itemize}


    \item For all $L > s > t > 0$, and for all $p$, we have
    \begin{align*}
        \Phi^{(i):\pi_i(j)}_{(s,t)(t):\pi_s(p)} \cdot d^{(s)}_{\pi_s(p)} = d^{(i)}_{\pi_i(j)} \cdot \Phi^{(i):j}_{(s,t)(t):p}.
    \end{align*}
    If $i = s$, $p = j$,
    \begin{align*}
        \Phi^{(i):\pi(j)}_{(i,t)(t):\pi(j)} = \Phi^{(i):j}_{(i,t)(t):j}.
    \end{align*}

    
    \item For all  $L \ge s > 0$, $L > t \ge 0$ with $s \neq t$, we have
    \begin{align*}
        \Phi^{(i):\pi_i(j)}_{(s)(L,t):\pi_s(p)\pi_t(q)} \cdot \frac{d^{(s)}_{\pi_s(p)}}{d^{(t)}_{\pi_t(q)}} = d^{(i)}_{\pi_i(j)} \cdot \Phi^{(i):j}_{(s)(L,t):pq}
    \end{align*}
    \begin{itemize}
        \item $s = i <L$, $t = 0$, $p = j$,
    \begin{align*}
        \Phi^{(i):\pi(j)}_{(i)(L,0):\pi(j)q} = \Phi^{(i):j}_{(i)(L,0):jq}
    \end{align*}
        \item $s=i=L$, $t=0$
        \begin{align*}
        \Phi^{(L):j}_{(L)(L,0):pq} = \Phi^{(L):j}_{(L)(L,0):pq}
    \end{align*}
    \end{itemize}
    

    \item For all $L > t > 0$, we have
    \begin{align*}
    & \sum_{p=1}^{n_L} \Phi^{(i):\pi_i(j)}_{(L,t)(t):p)} \cdot [Wb]^{(L,t)(t)}_{p} \\
    &\hspace{80pt} + \sum_{p=1}^{n_t}\sum_{q=1}^{n_t} \Phi^{(i):\pi_i(j)}_{(t)(L,t):\pi_t(p)\pi_t(q)} \cdot \frac{d^{(t)}_{\pi_t(p)}}{d^{(t)}_{\pi_t(q)}} \cdot [bW]^{(t)(L,t)}_{pq}  \\
    =& \sum_{p=1}^{n_L} d^{(i)}_{\pi_i(j)} \cdot \Phi^{(i):j}_{(L,t)(t):p} \cdot [Wb]^{(L,t)(t)}_{p} \\
    & \hspace{80pt} + \sum_{p=1}^{n_t}\sum_{q=1}^{n_t} d^{(i)}_{\pi_i(j)} \cdot \Phi^{(i):j}_{(t)(L,t):pq} \cdot [bW]^{(t)(L,t)}_{pq}
    \end{align*}

    \begin{itemize}
        \item If $L > i > 0 $. For each $L  > l > 0$, by scaling all the $d^{(l)}_{-}$'s by the same scalar, we have

    \begin{align*}
    0 =& \sum_{p=1}^{n_L} \Phi^{(i):\pi_i(j)}_{(L,t)(t):\pi_L(p)} \cdot d^{(L)}_{\pi_L(p)} \cdot [Wb]^{(L,t)(t)}_{p} \\
    &\hspace{80pt} + \sum_{p=1}^{n_t}\sum_{q=1}^{n_t} \Phi^{(i):\pi_i(j)}_{(t)(L,t):\pi_t(p)\pi_t(q)} \cdot \frac{d^{(t)}_{\pi_t(p)}}{d^{(t)}_{\pi_t(q)}} \cdot [bW]^{(t)(L,t)}_{pq}  \\
    =& \sum_{p=1}^{n_L} d^{(i)}_{\pi_i(j)} \cdot \Phi^{(i):j}_{(L,t)(t):p} \cdot [Wb]^{(L,t)(t)}_{p} \\
    & \hspace{80pt} + \sum_{p=1}^{n_t}\sum_{q=1}^{n_t} d^{(i)}_{\pi_i(j)} \cdot \Phi^{(i):j}_{(t)(L,t):pq} \cdot [bW]^{(t)(L,t)}_{pq}
    \end{align*}

        \item If $i = L$. We have
        \begin{align*}
    & \sum_{p=1}^{n_L} \Phi^{(L):j}_{(L,t)(t):p} \cdot [Wb]^{(L,t)(t)}_{p} \\
    &\hspace{80pt} + \sum_{p=1}^{n_t}\sum_{q=1}^{n_t} \Phi^{(L):j}_{(t)(L,t):\pi_t(p)\pi_t(q)} \cdot \frac{d^{(t)}_{\pi_t(p)}}{d^{(t)}_{\pi_t(q)}} \cdot [bW]^{(t)(L,t)}_{pq}  \\
    =& \sum_{p=1}^{n_L} \Phi^{(L):j}_{(L,t)(t):p} \cdot [Wb]^{(L,t)(t)}_{p} \\
    & \hspace{80pt} + \sum_{p=1}^{n_t}\sum_{q=1}^{n_t} \cdot \Phi^{(L):j}_{(t)(L,t):pq} \cdot [bW]^{(t)(L,t)}_{pq}
    \end{align*}
    which means $\Phi^{(L):j}_{(L,t)(t):p)}$ can be arbitrary. The rest is
        \begin{align*}
    & \sum_{p=1}^{n_t}\sum_{q=1}^{n_t} \Phi^{(L):j}_{(t)(L,t):\pi_t(p)\pi_t(q)} \cdot \frac{d^{(t)}_{\pi_t(p)}}{d^{(t)}_{\pi_t(q)}} \cdot [bW]^{(t)(L,t)}_{pq}  \\
    =& \sum_{p=1}^{n_t}\sum_{q=1}^{n_t} \Phi^{(L):j}_{(t)(L,t):pq} \cdot [bW]^{(t)(L,t)}_{pq}
    \end{align*}
    By Lemma~\ref{appendix:polynomial-ring-problem-lemma-4} and Corollary~\ref{appendix:prelim-7}, we have
    \begin{align*}
         \Phi^{(L):j}_{(t)(L,t):\pi_t(p)\pi_t(q)} \cdot \frac{d^{(t)}_{\pi_t(p)}}{d^{(t)}_{\pi_t(q)}} = \Phi^{(L):j}_{(t)(L,t):pq}
    \end{align*}
    So 
    \begin{align*}
        \Phi^{(L):j}_{(t)(L,t):pq} = 0
    \end{align*}
    for $p \neq q$, and
    \begin{align*}
         \Phi^{(L):j}_{(t)(L,t):\pi_t(p)\pi_t(p)} = \Phi^{(L):j}_{(t)(L,t):pp}
    \end{align*}
    In conclusion, for all $L > t > 0$, we have $\Phi^{(L):j}_{(L,t)(t):p)}$ is arbitrary, and 
    \begin{align*}
         \Phi^{(L):j}_{(t)(L,t):\pi(p)\pi(p)} = \Phi^{(L):j}_{(t)(L,t):pp}.
    \end{align*}
    
    \end{itemize}


    \item For all $L \ge s >0$ and for all $p$, we have
    \begin{align*}
        \Phi^{(i):\pi_i(j)}_{(s):\pi_s(p)} \cdot d^{(s)}_{\pi_s(p)} &= d^{(i)}_{\pi_i(j)} \cdot \Phi^{(i):j}_{(s):p}.
    \end{align*}
    \begin{itemize}
        \item If $i = s < L$, $p=j$,
        \begin{align*}
            \Phi^{(i):\pi(j)}_{(i):\pi(j)} = \Phi^{(i):j}_{(i):j}.
        \end{align*}
        \item If $i = s = L$,
    \begin{align*}
        \Phi^{(L):j}_{(L):p} = \Phi^{(L):j}_{(L):p}.
    \end{align*}
    \end{itemize}

    \item We have
    \begin{align*}
        \Phi_1^{(i):\pi_i(j)} &= d^{(i)}_{\pi_i(j)} \cdot \Phi_1^{(i):j}
    \end{align*}
    \begin{itemize}
        \item If $i = L$, we have
        \begin{align*}
            \Phi_1^{(L):j} = \Phi_1^{(L):j}.
        \end{align*}
    \end{itemize}
\end{enumerate}

\subsection{$G$-Equivariant polynomial layers} \label{appendix:equivariant_concrete_formula_final}

Based on the above discussions, we conclude that every $G$-equivariant polynomial layer , which is defined as a linear combination of stable polynomial terms, is given as $E(U)=([E(W)],[E(b)])$, where the entries of $[E(W)]$ and $[E(b)]$ are given case by case as folows:

\begin{itemize}
    \item For $i = L$, we have
    \begin{align*}
        [E(W)]^{(L)}_{jk} =& \sum_{p=1}^{n_L}\Phi^{(L):j\bullet}_{(L,L-1):p\bullet} \cdot [W]^{(L,L-1)}_{pk} + \sum_{p=1}^{n_L} \Phi^{(L):j\bullet}_{(L,0)(L,L-1):p\bullet} \cdot [WW]^{(L,0)(L,L-1)}_{pk} \\
        & + \sum_{p=1}^{n_L} \Phi^{(L):j\bullet}_{(L)(L,L-1):p\bullet} \cdot [bW]^{(L)(L,L-1)}_{pk} \\
        [E(b)]^{(L)}_{j} =&  \sum_{p=1}^{n_L}\sum_{q=1}^{n_0} \Phi^{(L):j}_{(L,0)(L,0):pq} \cdot [WW]^{(L,0)(L,0)}_{pq} + \sum_{p=1}^{n_L}\sum_{q=1}^{n_0}\Phi^{(L):j}_{(L,0):pq} \cdot [W]^{(L,0)}_{pq} \\
        & + \sum_{L > s > 0}\sum_{p=1}^{n_s}  \Phi^{(L):j}_{(s,0)(L,s):\bullet \bullet} \cdot [WW]^{(s,0)(L,s)}_{pp} + \sum_{p=1}^{n_L}\sum_{q=1}^{n_0} \Phi^{(L):j}_{(L)(L,0):pq} \cdot [bW]^{(L)(L,0)}_{pq} \\
        &+ \sum_{L >t > 0}\sum_{p=1}^{n_L} \Phi^{(L):j}_{(L,t)(t):p} \cdot [Wb]^{(L,t)(t)}_{p} + \sum_{L>t>0} \sum_{p=1}^{n_t} \Phi^{(L):j}_{(t)(L,t):\bullet \bullet} \cdot [bW]^{(t)(L,t)}_{pp}   \\
        &+ \sum_{p=1}^{n_L}\Phi^{(L):j}_{(L):p} \cdot [b]^{(L)}_{p} + \Phi_1^{(L):j} \\
        [E(b)]^{(L)}_{j} =&  \sum_{p=1}^{n_L}\sum_{q=1}^{n_0} \Phi^{(L):j}_{(L,0)(L,0):pq} \cdot [WW]^{(L,0)(L,0)}_{pq} + \sum_{p=1}^{n_L}\sum_{q=1}^{n_0}\Phi^{(L):j}_{(L,0):pq} \cdot [W]^{(L,0)}_{pq} \\
        & + \sum_{L > s > 0} \Phi^{(L):j}_{(s,0)(L,s):\bullet \bullet} \cdot \sum_{p=1}^{n_s} [WW]^{(s,0)(L,s)}_{pp} + \sum_{p=1}^{n_L}\sum_{q=1}^{n_0} \Phi^{(L):j}_{(L)(L,0):pq} \cdot [bW]^{(L)(L,0)}_{pq} \\
        &+ \sum_{L >t > 0}\sum_{p=1}^{n_L} \Phi^{(L):j}_{(L,t)(t):p} \cdot [Wb]^{(L,t)(t)}_{p} + \sum_{L>t>0} \Phi^{(L):j}_{(t)(L,t):\bullet \bullet} \cdot \sum_{p=1}^{n_t} [bW]^{(t)(L,t)}_{pp}   \\
        &+ \sum_{p=1}^{n_L}\Phi^{(L):j}_{(L):p} \cdot [b]^{(L)}_{p} + \Phi_1^{(L):j} 
    \end{align*}
    
    \item For $i = 1$, we have
    \begin{align*}
        [E(W)]^{(1)}_{jk} =& \sum_{q=1}^{n_0}\Phi^{(1):\bullet k}_{(1,0): \bullet q} \cdot [W]^{(1,0)}_{jq} + \sum_{q=1}^{n_0} \Phi^{(1):\bullet k}_{(1,0)(L,0):\bullet q} \cdot [WW]^{(1,0)(L,0)}_{jq} \\
        & + \sum_{q=1}^{n_0} \Phi^{(1):\bullet k}_{(1)(L,0):\bullet q} \cdot [bW]^{(1)(L,0)}_{jq} + \Phi^{(1):\bullet k}_{(1):\bullet} \cdot [b]^{(1)}_{j} \\
        [E(b)]^{(1)}_{j} =&  \sum_{q=1}^{n_0}\Phi^{(1):\bullet}_{(1,0):\bullet q} \cdot [W]^{(1,0)}_{jq} + \sum_{q=1}^{n_0} \Phi^{(1):\bullet }_{(1,0)(L,0):\bullet q} \cdot [WW]^{(1,0)(L,0)}_{jq} \\
        & + \sum_{q=1}^{n_0} \Phi^{(1):\bullet}_{(1)(L,0):\bullet q} \cdot [bW]^{(1)(L,0)}_{jq} + \Phi^{(1):\bullet}_{(1):\bullet} \cdot [b]^{(1)}_{j}
    \end{align*}
    
    \item For $L > i > 1$, we have
    \begin{align*}
        [E(W)]^{(i)}_{jk} =& \Phi^{(i):\bullet \bullet}_{(i,i-1):\bullet \bullet} \cdot [W]^{(i,i-1)}_{jk} + \Phi^{(i):\bullet \bullet}_{(i,0)(L,i-1):\bullet \bullet} \cdot [WW]^{(i,0)(L,i-1)}_{jk} \\
        & +   \Phi^{(i):\bullet \bullet}_{(i)(L,i-1):\bullet \bullet} \cdot [bW]^{(i)(L,i-1)}_{jk} \\
         [E(b)]^{(i)}_{j} =&  \sum_{q=1}^{n_0}\Phi^{(i):\bullet}_{(i,0):\bullet q} \cdot [W]^{(i,0)}_{jq} + \sum_{q=1}^{n_0} \Phi^{(i):\bullet}_{(i,0)(L,0):\bullet q} \cdot [WW]^{(t,0)(L,0)}_{jq} \\
         & + \sum_{i > t > 0} \sum_{p=1}^{n_i} \Phi^{(i):\bullet}_{(i,t)(t): \bullet} \cdot [Wb]^{(i,t)(t)}_{j} +  \sum_{q=1}^{n_t} \Phi^{(i):\bullet}_{(i)(L,0):\bullet q} \cdot [bW]^{(i)(L,0)}_{jq} \\
         & + \Phi^{(i):\bullet}_{(i):\bullet} \cdot [b]^{(i)}_{j}
    \end{align*}
\end{itemize}
In the above formulas, the bullet $\bullet$ indicates that the corresponding coefficient is independent of the indices at the bullet.

\section{Invariant Polynomial Layers}

In this section, we construct a polynomial map \( I \colon \mathcal{U} \rightarrow \mathbb{R}^{d'} \) that is \( G \)-invariant, i.e., \( I(gU) = I(U) \) for all \( g \in \mathcal{G}_{\mathcal{U}} \) and \( U \in \mathcal{U} \).

\subsection{Invariant layer as a linear combination of stable polynomial terms}

Similar to the equivariant maps, we seek the invariant map among polynomial maps that is a linear combination of stable polynomial terms, specifically the entries of \( [W]^{(s,t)} \), \( [b]^{(s)} \), \( [Wb]^{(s,t)(t)} \), \( [bW]^{(s)(L,t)} \), and \( [WW]^{(s,0)(L,t)} \), along with a bias.
In concrete:
\begin{align*}
    I(U) \coloneqq& \sum_{L \ge s > t \ge 0} \sum_{p=1}^{n_s}\sum_{q=1}^{n_t}\Phi_{(s,t):pq} \cdot [W]^{(s,t)}_{pq} + \sum_{L \ge s > 0}\sum_{p=1}^{n_s}\Phi_{(s):p} \cdot [b]^{(s)}_{p} \\
    &+ \sum_{L \ge s > t > 0} \sum_{p=1}^{n_s} \Phi_{(s,t)(t):p} \cdot [Wb]^{(s,t)(t)}_{p}\\ 
    &+ \sum_{L \ge s > 0}\sum_{L > t \ge 0} \sum_{p=1}^{n_s}\sum_{q=1}^{n_t} \Phi_{(s)(L,t):pq} \cdot [bW]^{(s)(L,t)}_{pq} \\
    &+ \sum_{L \ge s > 0}\sum_{L > t \ge 0} \sum_{p=1}^{n_s}\sum_{q=1}^{n_t} \Phi_{(s,0)(L,t):pq} \cdot [WW]^{(s,0)(L,t)}_{pq} + \Phi_1.
\end{align*}

All $\Phi_{-}$'s are in $\mathbb{R}^{d' \times d}$, except the bias $\Phi_1$ is in $\mathbb{R}^{d' \times 1}$. In summary, $I$ is parameterized by $\Phi_{-}$'s and $\Psi_{-}$'s. We need $I$ to be $G$-invariant, which means $I(gU) = I(U)$.

\subsection{Compute $I(gU)$}

From the definition of $I(U)$, we have:

\allowdisplaybreaks
\begin{align*}
    I(gU) =& \sum_{L \ge s > t \ge 0} \sum_{p=1}^{n_s}\sum_{q=1}^{n_t}\Phi_{(s,t):pq} \cdot [gW]^{(s,t)}_{pq} \\
    &+ \sum_{L \ge s > 0}\sum_{p=1}^{n_s}\Phi_{(s):p} \cdot [gb]^{(s)}_{p} \\
    &+ \sum_{L \ge s > t > 0} \sum_{p=1}^{n_s
    } \Phi_{(s,t)(t):p} \cdot [gWgb]^{(s,t)(t)}_{p} \\
    &+ \sum_{L \ge s > 0}\sum_{L > t \ge 0} \sum_{p=1}^{n_s}\sum_{q=1}^{n_t} \Phi_{(s)(L,t):pq} \cdot [gbgW]^{(s)(L,t)}_{pq} \\
    &+ \sum_{L \ge s > 0}\sum_{L > t \ge 0} \sum_{p=1}^{n_s}\sum_{q=1}^{n_t} \Phi_{(s,0)(L,t):pq} \cdot [gWgW]^{(s,0)(L,t)}_{pq} \\
    &+ \Phi_1\\
=& \sum_{L \ge s > t \ge 0} \sum_{p=1}^{n_s}\sum_{q=1}^{n_t}\Phi_{(s,t):pq} \cdot \frac{d^{(s)}_p}{d^{(t)}_q} \cdot [W]^{(s,t)}_{\pi_s^{-1}(p), \pi_t^{-1}(q)} \\
&+ \sum_{L \ge s > 0}\sum_{p=1}^{n_s}\Phi_{(s):p} \cdot d^{(s)}_p \cdot [b]^{(s)}_{\pi^{-1}_s(p)} \\
&+ \sum_{L \ge s > t > 0} \sum_{p=1}^{n_s} \Phi_{(s,t)(t):p} \cdot d^{(s)}_p \cdot [Wb]^{(s,t)(t)}_{\pi^{-1}_s(p)} \\
&+ \sum_{L \ge s > 0}\sum_{L > t \ge 0} \sum_{p=1}^{n_s}\sum_{q=1}^{n_t} \Phi_{(s)(L,t):pq} \cdot \frac{d^{(s)}_p}{d^{(t)}_q} \cdot [bW]^{(s)(L,t)}_{\pi_s^{-1}(p), \pi_t^{-1}(q)} \\
&+ \sum_{L \ge s > 0}\sum_{L > t \ge 0} \sum_{p=1}^{n_s}\sum_{q=1}^{n_t} \Phi_{(s,0)(L,t):pq} \cdot \frac{d^{(s)}_p}{d^{(t)}_q} \cdot [WW]^{(s,0)(L,t)}_{\pi_s^{-1}(p), \pi_t^{-1}(q)} \\
&+ \Phi_1\\
=& \sum_{L \ge s > t \ge 0} \sum_{p=1}^{n_s}\sum_{q=1}^{n_t}\Phi_{(s,t):\pi_s(p)\pi_t(q)} \cdot \frac{d^{(s)}_{\pi_s(p)}}{d^{(t)}_{\pi_t(q)}} \cdot [W]^{(s,t)}_{pq} \\
&+ \sum_{L \ge s > 0}\sum_{p=1}^{n_s}\Phi_{(s):\pi_s(p)} \cdot d^{(s)}_{\pi_s(p)} \cdot [b]^{(s)}_{p} \\
&+ \sum_{L \ge s > t > 0} \sum_{p=1}^{n_s} \Phi_{(s,t)(t):\pi_s(p)} \cdot d^{(s)}_{\pi_s(p)} \cdot [Wb]^{(s,t)(t)}_{p} \\
&+ \sum_{L \ge s > 0}\sum_{L > t \ge 0} \sum_{p=1}^{n_s}\sum_{q=1}^{n_t} \Phi_{(s)(L,t):\pi_s(p)\pi_t(q)} \cdot \frac{d^{(s)}_{\pi_s(p)}}{d^{(t)}_{\pi_t(q)}} \cdot [bW]^{(s)(L,t)}_{pq} \\
&+ \sum_{L \ge s > 0}\sum_{L > t \ge 0} \sum_{p=1}^{n_s}\sum_{q=1}^{n_t} \Phi_{(s,0)(L,t):\pi_s(p)\pi_t(q)} \cdot \frac{d^{(s)}_{\pi_s(p)}}{d^{(t)}_{\pi_t(q)}} \cdot [WW]^{(s,0)(L,t)}_{pq} \\
&+ \Phi_1.
\end{align*}

\subsection{Compare $I(gU)$ and $I(U)$}

Since $I(gU) = I(U)$, from Corollary \ref{appendix:polynomial-ring-main-problem-corollary}, the parameters $\Phi^{-}_{-}$'s have to satisfy these following conditions:

\begin{enumerate}
    \item For all $L \ge s > t \ge 0$ with $(s,t) \neq (L,0)$, and for all $p,q$, we have
    \begin{align*}
    \Phi_{(s,t):\pi_s(p)\pi_t(q)} \cdot \frac{d^{(s)}_{\pi_s(p)}}{d^{(t)}_{\pi_t(q)}} =   \Phi_{(s,t):pq}
    \end{align*}
    
    
    \item For all  $L \ge s > 0$, $L > t \ge 0$ with $s \neq t$, we have
    \begin{align*}
        \Phi_{(s,0)(L,t):\pi_s(p)\pi_t(q)} \cdot \frac{d^{(s)}_{\pi_s(p)}}{d^{(t)}_{\pi_t(q)}} &= \Phi_{(s,0)(L,t):pq}
    \end{align*}


    \item We have
    \begin{align*}
&\sum_{p=1}^{n_L}\sum_{q=1}^{n_0}\Phi_{(L,0):\pi_L(p)\pi_0(q)} \cdot \frac{d^{(L)}_{\pi_L(p)}}{d^{(0)}_{\pi_0(q)}} \cdot [W]^{(L,0)}_{pq} \\
&\hspace{80pt}+  \sum_{L > s > 0} \sum_{p=1}^{n_s}\sum_{q=1}^{n_s} \Phi_{(s,0)(L,s):\pi_s(p)\pi_s(q)} \cdot \frac{d^{(s)}_{\pi_s(p)}}{d^{(s)}_{\pi_s(q)}} \cdot [WW]^{(s,0)(L,s)}_{pq} \\
=& \sum_{p=1}^{n_L}\sum_{q=1}^{n_0}  \Phi_{(L,0):pq} \cdot [W]^{(L,0)}_{pq} \\
&\hspace{80pt}+ \sum_{L > s > 0}\sum_{p=1}^{n_s}\sum_{q=1}^{n_s}   \Phi_{(s,0)(L,s):pq} \cdot [WW]^{(s,0)(L,s)}_{pq}
\end{align*}


    \item For all $L > s > t > 0$, and for all $p$, we have
    \begin{align*}
        \Phi_{(s,t)(t):\pi_s(p)} \cdot d^{(s)}_{\pi_s(p)} &= \Phi_{(s,t)(t):p}.
    \end{align*}

    
    \item For all  $L \ge s > 0$, $L > t \ge 0$ with $s \neq t$, we have
    \begin{align*}
        \Phi_{(s)(L,t):\pi_s(p)\pi_t(q)} \cdot \frac{d^{(s)}_{\pi_s(p)}}{d^{(t)}_{\pi_t(q)}} &=  \Phi_{(s)(L,t):pq}
    \end{align*}


    \item For all $L > t > 0$, we have
    \begin{align*}
    & \sum_{p=1}^{n_L} \Phi_{(L,t)(t):\pi_L(p)} \cdot d^{(L)}_{\pi_L(p)} \cdot [Wb]^{(L,t)(t)}_{p} \\
    &\hspace{80pt} + \sum_{p=1}^{n_t}\sum_{q=1}^{n_t} \Phi_{(t)(L,t):\pi_t(p)\pi_t(q)} \cdot \frac{d^{(t)}_{\pi_t(p)}}{d^{(t)}_{\pi_t(q)}} \cdot [bW]^{(t)(L,t)}_{pq}  \\
    =& \sum_{p=1}^{n_L} \Phi_{(L,t)(t):p} \cdot [Wb]^{(L,t)(t)}_{p} \\
    & \hspace{80pt} + \sum_{p=1}^{n_t}\sum_{q=1}^{n_t}  \Phi_{(t)(L,t):pq} \cdot [bW]^{(t)(L,t)}_{pq}
    \end{align*}


    \item For all $L \ge s >0$ and for all $p$, we have
    \begin{align*}
        \Phi_{(s):\pi_s(p)} \cdot d^{(s)}_{\pi_s(p)} &=  \Phi_{(s):p}.
    \end{align*}


    \item We have
    \begin{align*}
        \Phi_1 &=  \Phi_1.
    \end{align*}
\end{enumerate}

Solve these equations, we have

\begin{enumerate}
    \item For all $L \ge s > t \ge 0$ with $(s,t) \neq (L,0)$, and for all $p,q$, we have
    \begin{align*}
      \Phi_{(s,t):pq} = 0
    \end{align*}
    
    
    \item For all  $L \ge s > 0$, $L > t \ge 0$ with $s \neq t$, we have
    \begin{align*}
        \Phi_{(s,0)(L,t):\pi_s(p)\pi_t(q)} \cdot \frac{d^{(s)}_{\pi_s(p)}}{d^{(t)}_{\pi_t(q)}} &= \Phi_{(s,0)(L,t):pq}
    \end{align*}
    If $(s,t)= (L,0$, we have 
    \begin{align*}
        \Phi_{(L,0)(L,0):pq} = \Phi_{(L,0)(L,0):pq}.
    \end{align*}


    \item We have
    \begin{align*}
&\sum_{p=1}^{n_L}\sum_{q=1}^{n_0}\Phi_{(L,0):pq}  \cdot [W]^{(L,0)}_{pq} \\
&\hspace{80pt}+  \sum_{L > s > 0} \sum_{p=1}^{n_s}\sum_{q=1}^{n_s} \Phi_{(s,0)(L,s):\pi_s(p)\pi_s(q)} \cdot \frac{d^{(s)}_{\pi_s(p)}}{d^{(s)}_{\pi_s(q)}} \cdot [WW]^{(s,0)(L,s)}_{pq} \\
=& \sum_{p=1}^{n_L}\sum_{q=1}^{n_0}  \Phi_{(L,0):pq} \cdot [W]^{(L,0)}_{pq} \\
&\hspace{80pt}+ \sum_{L > s > 0}\sum_{p=1}^{n_s}\sum_{q=1}^{n_s}   \Phi_{(s,0)(L,s):pq} \cdot [WW]^{(s,0)(L,s)}_{pq}
\end{align*}

which means $\Phi_{(L,0):pq}$ can be arbitrary. The rest is
\begin{align*}
& \sum_{L > s > 0} \sum_{p=1}^{n_s}\sum_{q=1}^{n_s} \Phi_{(s,0)(L,s):\pi_s(p)\pi_s(q)} \cdot \frac{d^{(s)}_{\pi_s(p)}}{d^{(s)}_{\pi_s(q)}} \cdot [WW]^{(s,0)(L,s)}_{pq} \\
=& \sum_{L > s > 0}\sum_{p=1}^{n_s}\sum_{q=1}^{n_s}   \Phi_{(s,0)(L,s):pq} \cdot [WW]^{(s,0)(L,s)}_{pq}
\end{align*}

For an $L > r >0$, by letting $\pi_{r}$ be the identity, and $d^{(r)}_p$ be $1$ for all $p$, we have
\begin{align*}
& \sum_{L > s > 0, s \neq r} \sum_{p=1}^{n_s}\sum_{q=1}^{n_s} \Phi_{(s,0)(L,s):\pi_s(p)\pi_s(q)} \cdot \frac{d^{(s)}_{\pi_s(p)}}{d^{(s)}_{\pi_s(q)}} \cdot [WW]^{(s,0)(L,s)}_{pq} \\
=& \sum_{L > s > 0, s \neq r}\sum_{p=1}^{n_s}\sum_{q=1}^{n_s}   \Phi_{(s,0)(L,s):pq} \cdot [WW]^{(s,0)(L,s)}_{pq},
\end{align*}
so
\begin{align*}
    &\sum_{p=1}^{n_r}\sum_{q=1}^{n_r} \Phi_{(r,0)(L,r):\pi_r(p)\pi_r(q)} \cdot \frac{d^{(r)}_{\pi_r(p)}}{d^{(r)}_{\pi_r(q)}} \cdot [WW]^{(r,0)(L,r)}_{pq} \\ 
    & \hspace{150pt} = \sum_{p=1}^{n_r}\sum_{q=1}^{n_r}   \Phi_{(r,0)(L,r):pq} \cdot [WW]^{(r,0)(L,r)}_{pq}
\end{align*}

    By Lemma~\ref{appendix:polynomial-ring-problem-lemma-3} and Corollary~\ref{appendix:prelim-7}, we have
\begin{align*}
    \Phi_{(r,0)(L,r):\pi_r(p)\pi_r(q)} \cdot \frac{d^{(r)}_{\pi_r(p)}}{d^{(r)}_{\pi_r(q)}} = \Phi_{(r,0)(L,r):pq}.
\end{align*}
So
\begin{align*}
    \Phi_{(r,0)(L,r):pq} = 0,
\end{align*}
for $p \neq q$, and
\begin{align*}
    \Phi_{(r,0)(L,r):\pi_r(p)\pi_r(p)} = \Phi_{(r,0)(L,r):pp}.
\end{align*}

In conclusion, we have $\Phi_{(L,0):pq}$ is arbitrary, and for $L > s > 0$, 
    \begin{align*}
    \Phi_{(s,0)(L,s):\pi(p)\pi(p)} = \Phi_{(s,0)(L,s):pp}.
    \end{align*}


    \item For all $L > s > t > 0$, and for all $p$, we have
    \begin{align*}
    \Phi_{(s,t)(t):p} = 0.
    \end{align*}

    
    \item For all  $L \ge s > 0$, $L > t \ge 0$ with $s \neq t$, we have
    \begin{align*}
        \Phi_{(s)(L,t):\pi_s(p)\pi_t(q)} \cdot \frac{d^{(s)}_{\pi_s(p)}}{d^{(t)}_{\pi_t(q)}} &=  \Phi_{(s)(L,t):pq}
    \end{align*}
    If $(s,t) = (L,0)$, we have
    \begin{align*}
         \Phi_{(L)(L,0):pq} =  \Phi_{(L)(L,0):pq}.
    \end{align*}

    \item For all $L > t > 0$, we have
    \begin{align*}
    & \sum_{p=1}^{n_L} \Phi_{(L,t)(t):p} \cdot [Wb]^{(L,t)(t)}_{p}  + \sum_{p=1}^{n_t}\sum_{q=1}^{n_t} \Phi_{(t)(L,t):\pi_t(p)\pi_t(q)} \cdot \frac{d^{(t)}_{\pi_t(p)}}{d^{(t)}_{\pi_t(q)}} \cdot [bW]^{(t)(L,t)}_{pq}  \\
    =& \sum_{p=1}^{n_L} \Phi_{(L,t)(t):p} \cdot [Wb]^{(L,t)(t)}_{p} + \sum_{p=1}^{n_t}\sum_{q=1}^{n_t}  \Phi_{(t)(L,t):pq} \cdot [bW]^{(t)(L,t)}_{pq}
    \end{align*}

which means $\Phi_{(L,t)(t):p}$ can be arbitrary. The rest is
    \begin{align*}
    \sum_{p=1}^{n_t}\sum_{q=1}^{n_t} \Phi_{(t)(L,t):\pi_t(p)\pi_t(q)} \cdot \frac{d^{(t)}_{\pi_t(p)}}{d^{(t)}_{\pi_t(q)}} \cdot [bW]^{(t)(L,t)}_{pq} 
    = \sum_{p=1}^{n_t}\sum_{q=1}^{n_t}  \Phi_{(t)(L,t):pq} \cdot [bW]^{(t)(L,t)}_{pq}
    \end{align*}
    By Lemma~\ref{appendix:polynomial-ring-problem-lemma-4} and Lemma~\ref{appendix:prelim-7}, we have
    \begin{align*}
        \Phi_{(t)(L,t):\pi_t(p)\pi_t(q)} \cdot \frac{d^{(t)}_{\pi_t(p)}}{d^{(t)}_{\pi_t(q)}} = \Phi_{(t)(L,t):pq}.
    \end{align*}
    So
    \begin{align*}
        \Phi_{(t)(L,t):pq} = 0
    \end{align*}
    for $p \neq q$, and
    \begin{align*}
        \Phi_{(t)(L,t):\pi_t(p)\pi_t(p)} = \Phi_{(t)(L,t):pp}.
    \end{align*}
    In conclusion, for all $L > t > 0$, we have $\Phi_{(L,t)(t):p}$ is arbitrary, and 
        \begin{align*}
        \Phi_{(t)(L,t):\pi(p)\pi(p)} = \Phi_{(t)(L,t):pp}
    \end{align*}
    

    \item For all $L \ge s >0$ and for all $p$, we have
    \begin{align*}
        \Phi_{(s):\pi_s(p)} \cdot d^{(s)}_{\pi_s(p)} &=  \Phi_{(s):p}.
    \end{align*}
    If $s = L$, we have
    \begin{align*}
        \Phi_{(L):p} = \Phi_{(L):p}.
    \end{align*}


    \item We have
    \begin{align*}
        \Phi_1 &=  \Phi_1.
    \end{align*}
\end{enumerate}

\subsection{$G$-Invariant polynomial layers}\label{appendix:concrete_formula_final}

Based on the above discussions, we conclude that every $G$-invariant polynomial layer, which is defined as a linear combination of stable polynomial terms, is given as:



\begin{align*}
    I(U) =&  \sum_{p=1}^{n_L}\sum_{q=1}^{n_0} \Phi_{(L,0)(L,0):pq} \cdot [WW]^{(L,0)(L,0)}_{pq} + \sum_{p=1}^{n_L}\sum_{q=1}^{n_0}\Phi_{(L,0):pq} \cdot [W]^{(L,0)}_{pq} \\
    & + \sum_{L > s > 0} \sum_{p=1}^{n_s} \Phi_{(s,0)(L,s):\bullet \bullet} \cdot [WW]^{(s,0)(L,s)}_{pp} +  \sum_{p=1}^{n_L}\sum_{q=1}^{n_0} \Phi_{(L)(L,0):pq} \cdot [bW]^{(L)(L,0)}_{pq} \\
    & +  \sum_{L > t > 0} \sum_{p=1}^{n_L} \Phi_{(L,t)(t):p} \cdot [Wb]^{(L,t)(t)}_{p} + \sum_{L > t > 0} \sum_{p=1}^{n_t} \Phi_{(t)(L,t):\bullet \bullet} \cdot [bW]^{(t)(L,t)}_{pp} \\
    & + \sum_{p=1}^{n_L}\Phi_{(L):p} \cdot [b]^{(L)}_{p} + \Phi_1
\end{align*}

In the above formula, the bullet $\bullet$ indicates that the corresponding coefficient is independent of the index at the bullet.

\section{Additional Experimental Details}
\label{appendix:experiment}
\subsection{Predicting generalization from weights}
\label{appendix:exp-cnn}
\paragraph{Dataset.} The $\Tanh$ subset from the CNN Zoo dataset has 5,949 training instances and 1,488 testing instances, while the original $\ReLU$ subset consists of 6,050 training instances and 1,513 testing instances. We do the augmentation for $\ReLU$ subset, with an augmentation factor of 2, effectively doubling the size of the dataset by adding one augmented version of each original instance. The overall dataset sizes, including both the original and augmented data, are summarized in Table~\ref{tab:data-cnn}.

\paragraph{Baselines} 
In this experiment, we compare our model with five other baselines: 
\begin{itemize}
    \item \textbf{STATNN} \citep{unterthiner2020predicting}: utilizes statistical features of the weights and biases

    \item \textbf{Graph-NN} \citep{kofinas2024graph}: represents input network parameters as graphs and processes using Graph Neural Networks.
    
    \item \textbf{NP and HNP} \citep{zhou2024permutation}: incoporates the permutation symmetries of neurons into neural functional networks.
    
    \item \textbf{Monomial-NFN} \citep{tran2024monomial}: extends the group action on weights from group of permutation matrices to the group of monomial matrices by incoporates scaling/sign-flipping symmetries.
\end{itemize}

\begin{table}[t]
    \caption{Datasets information for predicting generalization task.}
    \label{tab:data-cnn}
    \centering
    \begin{tabular}{lcc}
        \toprule
        \centering
        {Dataset} & {Train size} & {Val size} \ \\
        \midrule
        Original $\ReLU$ & 6050 & 1513\\
        Augment $\ReLU$ & 12100 & 3026 \\
        $\Tanh$ & 5949 & 1488 \\
        \bottomrule
    \end{tabular}
    \vspace{5mm}
    \caption{Number of parameters of all models for prediciting generalization task.}
    \label{tab:params-cnn}
    \centering
    \begin{tabular}{lcc}
        \toprule
        \centering
        {Model} & {$\ReLU$ dataset} &  {$\Tanh$ dataset} \\
        \midrule
        STATNN & 1.06M & 1.06M\\
        NP & 2.03M & 2.03M \\
        HNP & 2.81M & 2.81M\\
        Monomial-NFN & 0.25M & 1.41M \\
        MAGEP-NFN (ours) & 0.99M & 0.99M \\
        \bottomrule
    \end{tabular}
    \vspace{5mm}

    \caption{Hyperparameters for MAGEP-NFN on prediciting generalization task.}
    \label{tab:hyperparams-predict-cnn}
    \centering
        \begin{tabular}{cccccc}
      \toprule
      \centering
       MLP hidden & Loss & Optimizer & Learning rate & Batch size & Epoch\\
      \midrule
      500 & Binary cross-entropy & Adam & 0.001 & 8 & 50\\
      \bottomrule
    \end{tabular}
    \vspace{5mm}
    \caption{Dataset size for Classifying INRs task.}
    \label{tab:dataset-siren}
    \centering
        \begin{tabular}{lccc}
      \toprule
        & Train & Validation & Test \\
      \midrule
        CIFAR-10 & 45000 & 5000 & 10000 \\
        MNIST size & 45000 & 5000 & 10000 \\
        Fashion-MNIST & 45000 & 5000 & 20000\\
      \bottomrule
    \end{tabular}
\end{table}
\begin{table}[ht!]
    \medskip
    \caption{Hyperparameters of MAGEP-NFN for each dataset in Classify INRs task.}
    \label{tab:hyperparams-classify-siren}
    \centering
    \begin{adjustbox}{width=1\textwidth}

        \begin{tabular}{cccc}
      \toprule
      \centering
       & {MNIST} & {Fashion-MNIST} & {CIFAR-10} \\
      \midrule
        MAGEP-NFN hidden dimension & 128x3 & 64 & 64 x 3 \\
        Base model & MAGEP-Inv & NP & MAGEP-Inv \\
        Base model hidden dimension & 128 & 128x3 & 64 \\
        MLP hidden neurons & 1000 & 500 & 1000 \\
        Dropout & 0.1 & 0.1 & 0.1 \\
        Learning rate & 0.0001 & 0.0001 & 0.0001 \\
        Batch size & 32  & 32  & 32\\
        Step  & 200000 & 200000 & 200000\\
        Loss & Binary cross-entropy & Binary cross-entropy & Binary cross-entropy\\
      \bottomrule
    \end{tabular}
    \end{adjustbox}
    \vspace{5mm}
    \caption{Number of parameters of all models for classifying INRs task.}
    \label{tab:params-classify-siren}
    \centering
    \begin{tabular}{cccc}
        \toprule
        \centering
         &  CIFAR-10 & MNIST & Fashion-MNIST\\
        \midrule
        MLP & 2M & 2M & 2M \\
        NP & 16M & 15M & 15M  \\
        HNP & 42M & 22M & 22M \\
        Monomial-NFN & 16M & 22M & 20M \\
        MAGEP-NFN (ours) & 3.4M & 4.1M & 4.9M\\
        \bottomrule
    \end{tabular}
    \vspace{5mm}
    \caption{Number of parameters of all models for Weight space style editing task.}
    \label{tab:params-stylize-siren}
    \centering
    \begin{tabular}{cc}
        \toprule
        \centering
        Model &  Number of parameters \\
        \midrule
        MLP & 4.5M  \\
        NP & 4.1M  \\
        HNP & 12.8M \\
        Monomial-NFN & 4.1M \\
        MAGEP-NFN (ours) & 4.1M \\
        \bottomrule
    \end{tabular}
    \vspace{5mm}
    \caption{Hyperparameters for MAGEP-NFN on weight space style editing task.}
    
    \label{tab:hyperparams-stylize}
    \centering
        \begin{tabular}{cc}
      \toprule
       {Name} & {Value} \\
      \midrule
        MAGEP-NFN hidden dimension & 16 \\
        NP dimension & 128 \\
        Optimizer & Adam \\
        Learning rate & 0.001 \\
        Batch size &32\\
        Steps & 50000\\
      \bottomrule
    \end{tabular}
\end{table}

\paragraph{Implementation Details.} Our architecture begins with a Monomial-NFN layer featuring 20 channels, aligning the dimensions of the weights and biases. This is followed by two equivariant MAGEP-NFN layers, each with 20 channels, using either a $\ReLU$ activation for the $\ReLU$ dataset or a $\Tanh$ activation for the $\Tanh$ dataset. Next, the output is processed by a MAGEP-NFN Invariant layer. The final output from this layer is flattened and mapped to \( \mathbb{R}^{500} \). This vector is further processed by a fully connected MLP with two hidden layers, both activated by $\ReLU$. The final output undergoes a linear projection to a scalar, followed by a sigmoid function. The model is trained using Binary Cross Entropy (BCE) loss over 50 epochs, with early stopping determined by a threshold \( \tau \) on the validation set. The entire training process on an A100 GPU takes 30 minutes. A summary of hyperparameters can be found in Table~\ref{tab:hyperparams-predict-cnn}.

For the baseline models, we adhere to the original implementations as outlined in \cite{zhou2024permutation}, utilizing the official code (available at: \href{https://github.com/AllanYangZhou/nfn}{https://github.com/AllanYangZhou/nfn}), and \cite{tran2024monomial}. In the HNP, NP, and Monomial-NFN models, we employ three equivariant layers with channel configurations of 16, 16, and 5, respectively. The extracted features are then passed through an average pooling layer, followed by three MLP layers with hidden dimensions of 200 (Monomial-NFN $\ReLU$ case) and 1000 neurons (Monomial-NFN $\Tanh$ case and other models). The hyperparameters for our model, along with the parameter counts for all models involved in this task, are detailed in Table~\ref{tab:params-cnn}.\subsection{Classifying implicit neural representations of images}
\label{appendix:exp-classifysiren}

\paragraph{Dataset.} We use the original INRs dataset, which contained three dataset: CIFAR-10, MNIST and Fashion-MNIST, obtained by applying a single SIREN model to every image. The detailed information about dataset is described in \cite{zhou2024permutation}. We use the datasett without any data augmentation as in the settings of \cite{tran2024monomial}. The breakdown of training, validation, and test sample sizes for each dataset is detailed in Table~\ref{tab:dataset-siren}.

\paragraph{Implementation Details.} In these experiments, we use two different architectures. For both the MNIST and CIFAR datasets, the architecture begins with a Monomial-NFN layer to adjust the weight dimensions, followed by three MAGEP-NFN layers, each utilizing sine activation. The resulting weight features are then passed through a MAGEP Invariant layer. Finally, the output is flattened and processed by an MLP with two hidden layers, each containing 1,000 units and using $\ReLU$ activations.

For the Fashion-MNIST dataset, we begin with a Monomial-NFN layer with sine activation, followed by a MAGEP-NFN layer also utilizing sine activation, and then a Monomial-NFN layer with absolute activation. The architecture then aligns with the design of the NP model from \cite{zhou2024permutation}. Specifically, a Gaussian Fourier Transformation is applied to encode the input into sine and cosine components, mapping from 1 dimension to 256 dimensions. The encoded features are processed through IOSinusoidalEncoding, a positional encoding tailored for the NP layer, which uses a maximum frequency of 10 and 6 frequency bands. Following this, the features pass through three NP layers with $\ReLU$ activations. An average pooling is applied, after which the output is flattened, and the resulting vector is processed by an MLP with two hidden layers, each containing 1,000 units and using $\ReLU$ activations. Finally, the output is linearly projected to a scalar. We employ the Binary Cross Entropy (BCE) loss function and train the model for 200,000 steps, taking approximately 2 hours on an A100 GPU. The parameter counts for all models are presented in Table~\ref{tab:params-classify-siren}

\subsection{Weight space style editing}
\label{appendix:exp-stylize-siren}
\paragraph{Dataset.} We utilize the same INRs dataset as employed in the classification task, with the sizes of the training, validation, and test sets provided in Table~\ref{tab:dataset-siren}.

\paragraph{Implementation Details.} In these experiments, our architecture begins with two MAGEP-NFN layers, each with 16 hidden dimensions. The rest of the design follows the NP model outlined in \cite{zhou2024permutation}. Specifically, we apply a Gaussian Fourier Transformation with a mapping size of 256, followed by IOSinusoidalEncoding. The features are then processed through three NP layers, each with 128 hidden dimensions and $\ReLU$ activation. The final output is passed through an NP layer for scalar projection and a LearnedScale layer as described in the Appendix of \cite{zhou2024permutation}. We use the Binary Cross Entropy (BCE) loss function and train the model for 50,000 steps, which takes approximately 35 minutes on an A100 GPU.

For the baseline models, we maintain the same settings as the official implementation. Specifically, the HNP or NP model consists of three layers, each with 128 hidden dimensions, followed by $\ReLU$ activations. An NFN of the same type is applied to map the output to one dimension, after which it is processed by a LearnedScale layer.  The number of parameters for all models is detailed in Table~\ref{tab:params-stylize-siren}, and the hyperparameters for our model are presented in Table~\ref{tab:hyperparams-stylize}.

\section{Performance comparison with graph-based NFNs}
\label{appendix:experiment_compare_gnn}
\textbf{Experiment Setup}: Following the same experiment setup in Appendix~\ref{appendix:exp-cnn}, we compare the predictive performance of our model and two graph-based baselines: GNN~\citep{kofinas2024graph} and ScaleGMN~\citep{kalogeropoulos2024scale}, using HNP~\citep{zhou2024permutation} as a reference.

\textbf{Results}: The results are presented in Table~\ref{tab:gnn_comparison}. The GNN model exhibits a noticeable performance decline when tested on separate activation subsets. Although ScaleGMN significantly improves performance on the Tanh subset, its enhancements on the ReLU subset are comparatively modest. In contrast, our model demonstrates substantial overall improvements across both datasets, highlighting its effectiveness.

\begin{table}[ht!]
\centering
\caption{Performance comparison with Graph-based NFNs on Small CNN Zoo task.}

\label{tab:gnn_comparison}
\begin{adjustbox}{width=0.7\textwidth}
\begin{tabular}{ccc}
    \toprule
     & ReLU subset & Tanh subset  \\
    \midrule
    HNP~\citep{zhou2024permutation} & $0.897$ & $0.934$\\
    GNN~\citep{kofinas2024graph} & $0.897$ & $0.893$\\
    ScaleGMN~\citep{kalogeropoulos2024scale} & $\underline{0.928}$ & $\textbf{0.942}$\\
    MAGEP-NFNs (ours) & $\textbf{0.933}$ & $\underline{0.940}$\\
    \bottomrule
\end{tabular}
\end{adjustbox}
\vskip-0.2in
\end{table}

\section{Memory and Runtime.}

\begin{table}[ht!]
\centering
\caption{Runtime of models on Small CNN Zoo task.}
\label{tab:runtime}
\begin{adjustbox}{width=0.7\textwidth}
    \begin{tabular}{ccccccc}
    \toprule
     & ReLU subset & Tanh subset\\
    \midrule
     NP~\citep{zhou2024permutation} & 36m40s & 35m34s \\
     HNP~\citep{zhou2024permutation} & 30m06s & 29m37s\\
     GNN~\citep{kofinas2024graph} & 4h27m29s & 4h25m17s \\
     ScaledGMN~\citep{kalogeropoulos2024scale} & 1h20m & 1h20m \\
     Monomial-NFN~\citep{tran2024monomial} & \textbf{23m47s} & \textbf{18m23s} \\
     MAGEP-NFNs (ours) & \underline{28m43s} & \underline{28m12s} \\
    \bottomrule
    \end{tabular}
\end{adjustbox}
\end{table}

\begin{table}[ht!]
\centering
\caption{Memory consumption of models on Small CNN Zoo task.}
\label{tab:memory}
\begin{adjustbox}{width=0.7\textwidth}
    \begin{tabular}{ccccccc}
    \toprule
     & ReLU subset & Tanh subset\\
    \midrule
     NP~\citep{zhou2024permutation} & 838MB & 838MB \\
     HNP~\citep{zhou2024permutation} & 856MB & 856MB\\
     GNN~\citep{kofinas2024graph} & 6390MB & 6390MB \\
     ScaledGMN~\citep{kalogeropoulos2024scale} & 2918MB & 2918MB \\
     Monomial-NFN~\citep{tran2024monomial} & \textbf{560MB} & \textbf{582MB} \\
     MAGEP-NFNs (ours) & \underline{584MB} & \underline{584MB} \\
    \bottomrule
    \end{tabular}
\end{adjustbox}
\end{table}
Table~\ref{tab:runtime} and~\ref{tab:memory} provide runtime and memory consumption data for our model and other baselines in predicting CNN generalization task. For graph-based architectures, we compare with two recent works: GNN~\citep{kofinas2024graph} and ScaleGMN~\citep{kalogeropoulos2024scale}. Our model runs significantly faster and uses much less memory than these graph-based networks and NP/HNP~\citep{zhou2024permutation}. Introducing additional polynomial terms slightly increases our model's runtime and memory usage compared to Monomial-NFN~\citep{tran2024monomial}. However, this trade-off results in considerably enhanced expressivity, which is evident across many tasks like Predict CNN Generalization or INRs Classification.

\section{Implementation of Equivariant and Invariant Layers}
We provide the multi-channel implementations of the $\mathcal{G}_{\mathcal{U}}$-equivariant map $E \colon \mathcal{U}^d \to \mathcal{U}^e$ and the $\mathcal{G}_{\mathcal{U}}$-invariant map $I \colon \mathcal{U}^d \to \mathbb{R}^{e \times d'}$. For uniformity in implementing Equivariant and Invariant layers from Appendix~\ref{appendix:equivariant_concrete_formula_final} and Appendix~\ref{appendix:concrete_formula_final}, we employ einops-style pseudocode as a consistent framework.

We summarize the key dimensions in Table~\ref{appendix:dimensions} and outline the shapes of the input terms in Table~\ref{appendix:shapes}.

\begin{table}
\centering
\caption{Summary of key dimensions involved in the implementation}
\label{appendix:dimensions}
\begin{tabular}{ll}
    \toprule
    \textbf{Symbol} & \textbf{Description} \\
    \midrule
    $d$   & Number of input channels for the equivariant and invariant layer \\
    $e$   & Number of output channels for the equivariant and invariant layer \\
    $b$   & Batch size \\
    $n_i$  & Number of channels at the $i_th$ layer \\
    $d'$ & Embedding dimension of the invariant layer's output \\
    \bottomrule
\end{tabular}
\end{table}

\begin{table}
\centering
\caption{Shapes of input terms used in the implementation}
\label{appendix:shapes}
\begin{tabular}{lc}
    \toprule
    \textbf{Term} & \textbf{Shape} \\
    \midrule
    $[W]^{(s,t)}$   & $[b, d, n_s, n_t]$ \\
    $\left[Wb\right]^{(s,t)(t)}$   & $[b, d, n_s]$ \\
    $\left[bW\right]^{(s)(L,t)}$  & $[b, d, n_s, n_t]$ \\
    $\left[WW\right]^{(s,0)(L,t)}$  & $[b, d, n_s, n_t]$ \\
    \bottomrule
\end{tabular}
\end{table}

\subsection{Equivariant Layers Pseudocode}

\subsubsection{Pseudocode for case $i=L$}

From the formula for $[E(W)]^{(L)}_{jk}$:
\begin{align*}
[E(W)]^{(L)}_{jk} &= \sum_{p=1}^{n_L} \Phi^{(L):j}_{(L,L-1):p} \cdot [W]^{(L,L-1)}_{pk} + \sum_{p=1}^{n_L} \Phi^{(L):j}_{(L,0)(L,L-1):p} \cdot [WW]^{(L,0)(L,L-1)}_{pk} \\
&\quad + \sum_{p=1}^{n_L} \Phi^{(L):j}_{(L)(L,L-1):p} \cdot [bW]^{(L)(L,L-1)}_{pk}
\end{align*}

We define the pseudocode for each term:
\begin{align*}
    &\text{For } \Phi^{(L):j}_{(L,L-1):p} \cdot [W]^{(L,L-1)}_{pk}, \\
    &\quad \text{ with } [W]^{(L,L-1)}_{pk} \text{ of shape } [b,d,n_L,n_{L-1}] \ \text{and} \ \Phi^{(L):j}_{(L,L-1):p} \text{ of shape } [e,d,n_L,n_L], \\
    &\quad \text{Corresponding pseudocode:} \ \texttt{einsum}(edpj, bdpk \rightarrow bejk) \\
    &\text{For } \Phi^{(L):j}_{(L,0)(L,L-1):p} \cdot [WW]^{(L,0)(L,L-1)}_{pk}, \\
    &\quad \text{ with } [WW]^{(L,0)(L,L-1)}_{pk} \text{ of shape } [b,d,n_L,n_{L-1}] \ \text{and} \ \Phi^{(L):j}_{(L,0)(L,L-1):p} \text{ of shape } [e,d,n_L,n_L], \\
    &\quad \text{Corresponding pseudocode:} \ \texttt{einsum}(edpj, bdpk \rightarrow bejk) \\
    &\text{For } \Phi^{(L):j}_{(L)(L,L-1):p} \cdot [bW]^{(L)(L,L-1)}_{pk}, \\
    &\quad \text{ with } [bW]^{(L)(L,L-1)}_{pk} \text{ of shape } [b,d,n_L,n_{L-1}] \ \text{and} \ \Phi^{(L):j}_{(L)(L,L-1):p} \text{ of shape } [e,d,n_L,n_L], \\ 
    &\quad \text{Corresponding pseudocode:} \ \texttt{einsum}(edpj, bdpk \rightarrow bejk)
\end{align*}

From the formula for $[E(b)]^{(L)}_{j}$:
\begin{align*}
[E(b)]^{(L)}_{j} &= \sum_{p=1}^{n_L}\sum_{q=1}^{n_0} \Phi^{(L):j}_{(L,0)(L,0):pq} \cdot [WW]^{(L,0)(L,0)}_{pq} + \sum_{p=1}^{n_L}\sum_{q=1}^{n_0} \Phi^{(L):j}_{(L,0):pq} \cdot [W]^{(L,0)}_{pq} \\
&\quad + \sum_{L > s > 0}\sum_{p=1}^{n_s} \Phi^{(L):j}_{(s,0)(L,s):} \cdot [WW]^{(s,0)(L,s)}_{pp} + \sum_{p=1}^{n_L}\sum_{q=1}^{n_0} \Phi^{(L):j}_{(L)(L,0):pq} \cdot [bW]^{(L)(L,0)}_{pq} \\
&\quad + \sum_{L > t > 0}\sum_{p=1}^{n_L} \Phi^{(L):j}_{(L,t)(t):p} \cdot [Wb]^{(L,t)(t)}_{p} + \sum_{L > t > 0} \sum_{p=1}^{n_t} \Phi^{(L):j}_{(L)(L,t):} \cdot [bW]^{(t)(L,t)}_{pp} \\
&\quad + \sum_{p=1}^{n_L}\Phi^{(L):j}_{(L):p} \cdot [b]^{(L)}_{p} + \Phi_1^{(L):j}
\end{align*}

We define the pseudocode for each term:
\begin{align*}
    &\text{For } \Phi^{(L):j}_{(L,0)(L,0):pq}\ \cdot [WW]^{(L,0)(L,0)}_{pq}, \\
    &\quad \text{ with } [WW]^{(L,0)(L,0)}_{pq} \text{ of shape } [b,d,n_L,n_0] \ \text{and} \ \Phi^{(L):j}_{(L,0)(L,0):pq} \text{ of shape } [e,d,n_L,n_0,n_L], \\
    &\quad \text{Corresponding pseudocode:} \ \texttt{einsum}(edpqj, bdpq \rightarrow bej) \\
    &\text{For } \Phi^{(L):j}_{(L,0):pq} \cdot [W]^{(L,0)}_{pq}, \text{ with } [W]^{(L,0)}_{pq} \text{ of shape } [b,d,n_L,n_0] \ \text{and} \ \Phi^{(L):j}_{(L,0):pq} \text{ of shape } [e,d,n_L,n_0,n_L], \\
    &\quad \text{Corresponding pseudocode:} \ \texttt{einsum}(edpqj, bdpq \rightarrow bej) \\
    &\text{For } \Phi^{(L):j}_{(s,0)(L,s):} \cdot [WW]^{(s,0)(L,s)}_{pp}, \\&\quad\text{ with } [WW]^{(s,0)(L,s)}_{pp} \text{ of shape } [b,d,n_s,n_s] \ \text{and} \ \Phi^{(L):j}_{(s,0)(L,s):} \text{ of shape } [e,d,n_s,n_s,n_L], \\
    &\quad \text{Corresponding pseudocode:} \ \texttt{einsum}(edppj, bdpp \rightarrow bej) \\
    &\text{For } \Phi^{(L):j}_{(L)(L,0):pq} \cdot [bW]^{(L)(L,0)}_{pq}, \\&\quad\text{ with } [bW]^{(L)(L,0)}_{pq} \text{ of shape } [b,d,n_L,n_0] \ \text{and} \ \Phi^{(L):j}_{(L)(L,0):pq} \text{ of shape } [e,d,n_L,n_0,n_L], \\
    &\quad \text{Corresponding pseudocode:} \ \texttt{einsum}(edpqj, bdpq \rightarrow bej) \\
    &\text{For } \Phi^{(L):j}_{(L,t)(t):p} \cdot [Wb]^{(L,t)(t)}_{p}, \text{ with } [Wb]^{(L,t)(t)}_{p} \text{ of shape } [b,d,n_L] \ \text{and} \ \Phi^{(L):j}_{(L,t)(t):p} \text{ of shape } [e,d,n_L,n_L], \\
    &\quad \text{Corresponding pseudocode:} \ \texttt{einsum}(edpj, bdp \rightarrow bej) \\
    &\text{For } \Phi^{(L):j}_{(t)(L,t):} \cdot [bW]^{(t)(L,t)}_{pp}, \text{ with } [bW]^{(t)(L,t)}_{pp} \text{ of shape } [b,d,n_t,n_t] \ \text{and} \ \Phi^{(L):j}_{(t)(L,t):} \text{ of shape } [e,d,n_t,n_t,n_L], \\
    &\quad \text{Corresponding pseudocode:} \ \texttt{einsum}(edppj, bdpp \rightarrow bej) \\
    &\text{For } \Phi^{(L):j}_{(L):p} \cdot [b]^{(L)}_{p}, \text{ with } [b]^{(L)}_{p} \text{ of shape } [b,d,n_L] \ \text{and} \ \Phi^{(L):j}_{(L):p} \text{ of shape } [e,d,n_L,n_L], \\
    &\quad \text{Corresponding pseudocode:} \ \texttt{einsum}(edpj, bdp \rightarrow bej) \\
    &\text{For } \Phi_1^{(L):j} \text{ of shape } [e,n_L], \\
    &\quad \text{Corresponding pseudocode:} \ \texttt{einsum}(ej, \rightarrow ej).unsqueeze(0)
\end{align*}

\subsubsection{Pseudocode for case $i=1$}

From the formula for $[E(W)]^{(1)}_{jk}$:
\begin{align*}
[E(W)]^{(1)}_{jk} &= \sum_{q=1}^{n_0} \Phi^{(1):\bullet k}_{(1,0): \bullet q} \cdot [W]^{(1,0)}_{jq} + \sum_{q=1}^{n_0} \Phi^{(1):\bullet k}_{(1,0)(L,0):\bullet q} \cdot [WW]^{(1,0)(L,0)}_{jq} \\
&\quad + \sum_{q=1}^{n_0} \Phi^{(1):\bullet k}_{(1)(L,0):\bullet q} \cdot [bW]^{(1)(L,0)}_{jq} + \Phi^{(1):\bullet k}_{(1):\bullet} \cdot [b]^{(1)}_{j}
\end{align*}

We define the pseudocode for each term:
\begin{align*}
    &\text{For } \Phi^{(1):\bullet k}_{(1,0): \bullet q} \cdot [W]^{(1,0)}_{jq}, \text{ with } [W]^{(1,0)}_{jq} \text{ of shape } [b, d, n_1, n_0] \ \text{and} \ \Phi^{(1):\bullet k}_{(1,0): \bullet q} \text{ of shape } [d, e, n_0, n_0], \\
    &\quad \text{Corresponding pseudocode:} \ \texttt{einsum}(bdjq, deqk \rightarrow bejk) \\
    &\text{For } \Phi^{(1):\bullet k}_{(1,0)(L,0):\bullet q} \cdot [WW]^{(1,0)(L,0)}_{jq}, \\&\quad\text{ with } [WW]^{(1,0)(L,0)}_{jq} \text{ of shape } [b, d, n_1, n_0] \ \text{and} \ \Phi^{(1):\bullet k}_{(1,0)(L,0):\bullet q} \text{ of shape } [d, e, n_0, n_0], \\
    &\quad \text{Corresponding pseudocode:} \ \texttt{einsum}(bdjq, deqk \rightarrow bejk) \\
    &\text{For } \Phi^{(1):\bullet k}_{(1)(L,0):\bullet q} \cdot [bW]^{(1)(L,0)}_{jq}, \\&\quad\text{ with } [bW]^{(1)(L,0)}_{jq} \text{ of shape } [b, d, n_1, n_0] \ \text{and} \ \Phi^{(1):\bullet k}_{(1)(L,0):\bullet q} \text{ of shape } [d, e, n_0, n_0], \\
    &\quad \text{Corresponding pseudocode:} \ \texttt{einsum}(bdjq, deqk \rightarrow bejk) \\
    &\text{For } \Phi^{(1):\bullet k}_{(1):\bullet} \cdot [b]^{(1)}_{j}, \text{ with } [b]^{(1)}_{j} \text{ of shape } [b, d, n_1] \ \text{and} \ \Phi^{(1):\bullet k}_{(1):\bullet} \text{ of shape } [d, e, n_0], \\
    &\quad \text{Corresponding pseudocode:} \ \texttt{einsum}(bdj, dek \rightarrow bejk)
\end{align*}

From the formula for $[E(b)]^{(1)}_{j}$:
\begin{align*}
[E(b)]^{(1)}_{j} &= \sum_{q=1}^{n_0} \Phi^{(1):\bullet}_{(1,0):\bullet q} \cdot [W]^{(1,0)}_{jq} + \sum_{q=1}^{n_0} \Phi^{(1):\bullet }_{(1,0)(L,0):\bullet q} \cdot [WW]^{(1,0)(L,0)}_{jq} \\
&\quad + \sum_{q=1}^{n_0} \Phi^{(1):\bullet}_{(1)(L,0):\bullet q} \cdot [bW]^{(1)(L,0)}_{jq} + \Phi^{(1):\bullet}_{(1):\bullet} \cdot [b]^{(1)}_{j}
\end{align*}

We define the pseudocode for each term:
\begin{align*}
    &\text{For } \Phi^{(1):\bullet}_{(1,0):\bullet q} \cdot [W]^{(1,0)}_{jq}, \text{ with } [W]^{(1,0)}_{jq} \text{ of shape } [b, d, n_1, n_0] \ \text{and} \ \Phi^{(1):\bullet}_{(1,0):\bullet q} \text{ of shape } [d, e, n_0], \\
    &\quad \text{Corresponding pseudocode:} \ \texttt{einsum}(bdjq, deq \rightarrow bej) \\
    &\text{For } \Phi^{(1):\bullet}_{(1,0)(L,0):\bullet q} \cdot [WW]^{(1,0)(L,0)}_{jq}, \\&\quad\text{ with } [WW]^{(1,0)(L,0)}_{jq} \text{ of shape } [b, d, n_1, n_0] \ \text{and} \ \Phi^{(1):\bullet}_{(1,0)(L,0):\bullet q} \text{ of shape } [d, e, n_0], \\
    &\quad \text{Corresponding pseudocode:} \ \texttt{einsum}(bdjq, deq \rightarrow bej) \\
    &\text{For } \Phi^{(1):\bullet}_{(1)(L,0):\bullet q} \cdot [bW]^{(1)(L,0)}_{jq}, \text{ with } [bW]^{(1)(L,0)}_{jq} \text{ of shape } [b, d, n_1, n_0] \ \text{and} \ \Phi^{(1):\bullet}_{(1)(L,0):\bullet q} \text{ of shape } [d, e, n_0], \\
    &\quad \text{Corresponding pseudocode:} \ \texttt{einsum}(bdjq, deq \rightarrow bej) \\
    &\text{For } \Phi^{(1):\bullet}_{(1):\bullet} \cdot [b]^{(1)}_{j}, \text{ with } [b]^{(1)}_{j} \text{ of shape } [b, d, n_1] \ \text{and} \ \Phi^{(1):\bullet}_{(1):\bullet} \text{ of shape } [d, e], \\
    &\quad \text{Corresponding pseudocode:} \ \texttt{einsum}(bdj, de \rightarrow bej)
\end{align*}

\subsubsection{Pseudocode for case $1<i<L$}

From the formula for $[E(W)]^{(i)}_{jk}$:
\begin{align*}
[E(W)]^{(i)}_{jk} &= \left(\Phi^{(i):\bullet \bullet}_{(i,i-1):\bullet \bullet}\right) \cdot [W]^{(i,i-1)}_{jk} + \left(\Phi^{(i):\bullet \bullet}_{(i,0)(L,i-1):\bullet \bullet}\right) \cdot [WW]^{(i,0)(L,i-1)}_{jk} \\
&\quad + \left(\Phi^{(i):\bullet \bullet}_{(i)(L,i-1):\bullet \bullet}\right) \cdot [bW]^{(i)(L,i-1)}_{jk}
\end{align*}

We define the pseudocode for each term:
\begin{align*}
    &\text{For } \Phi^{(i):\bullet \bullet}_{(i,i-1):\bullet \bullet} \cdot [W]^{(i,i-1)}_{jk}, \text{ with } [W]^{(i,i-1)}_{jk} \text{ of shape } [b, d, n_i, n_{i-1}] \ \text{and} \ \Phi^{(i):\bullet \bullet}_{(i,i-1):\bullet \bullet} \text{ of shape } [d, e], \\
    &\quad \text{Corresponding pseudocode:} \ \texttt{einsum}(bdjk, de \rightarrow bejk) \\
    &\text{For } \Phi^{(i):\bullet \bullet}_{(i,0)(L,i-1):\bullet \bullet} \cdot [WW]^{(i,0)(L,i-1)}_{jk}, \\&\quad\text{ with } [WW]^{(i,0)(L,i-1)}_{jk} \text{ of shape } [b, d, n_i, n_{i-1}] \ \text{and} \ \Phi^{(i):\bullet \bullet}_{(i,0)(L,i-1):\bullet \bullet} \text{ of shape } [d, e], \\
    &\quad \text{Corresponding pseudocode:} \ \texttt{einsum}(bdjk, de \rightarrow bejk) \\
    &\text{For } \Phi^{(i):\bullet \bullet}_{(i)(L,i-1):\bullet \bullet} \cdot [bW]^{(i)(L,i-1)}_{jk}, \\&\quad\text{ with } [bW]^{(i)(L,i-1)}_{jk} \text{ of shape } [b, d, n_i, n_{i-1}] \ \text{and} \ \Phi^{(i):\bullet \bullet}_{(i)(L,i-1):\bullet \bullet} \text{ of shape } [d, e], \\
    &\quad \text{Corresponding pseudocode:} \ \texttt{einsum}(bdjk, de \rightarrow bejk)
\end{align*}

From the formula for $[E(b)]^{(i)}_j$:
\begin{align*}
[E(b)]^{(i)}_j &= \sum_{q=1}^{n_0} \left(\Phi^{(i):\bullet}_{(i,0):\bullet q}\right) \cdot [W]^{(i,0)}_{jq} + \sum_{q=1}^{n_0} \left(\Phi^{(i):\bullet}_{(i,0)(L,0):\bullet q}\right) \cdot [WW]^{(i,0)(L,0)}_{jq} \\
&\quad + \sum_{i > t > 0} \left(\Phi^{(i):\bullet}_{(i,t)(t): \bullet}\right) \cdot [Wb]^{(i,t)(t)}_j + \sum_{q=1}^{n_t} \left(\Phi^{(i):\bullet}_{(i)(L,0):\bullet q}\right) \cdot [bW]^{(i)(L,0)}_{jq} \\
&\quad + \left(\Phi^{(i):\bullet}_{(i):\bullet}\right) \cdot [b]^{(i)}_j
\end{align*}

We define the pseudocode for each term:
\begin{align*}
    &\text{For } \Phi^{(i):\bullet}_{(i,0):\bullet q} \cdot [W]^{(i,0)}_{jq}, \text{ with } [W]^{(i,0)}_{jq} \text{ of shape } [b, d, n_i, n_0] \ \text{and} \ \Phi^{(i):\bullet}_{(i,0):\bullet q} \text{ of shape } [d, e, n_0], \\
    &\quad \text{Corresponding pseudocode:} \ \texttt{einsum}(bdjq, deq \rightarrow bej) \\
    &\text{For } \Phi^{(i):\bullet}_{(i,0)(L,0):\bullet q} \cdot [WW]^{(i,0)(L,0)}_{jq}, \\&\quad\text{ with } [WW]^{(i,0)(L,0)}_{jq} \text{ of shape } [b, d, n_i, n_0] \ \text{and} \ \Phi^{(i):\bullet}_{(i,0)(L,0):\bullet q} \text{ of shape } [d, e, n_0], \\
    &\quad \text{Corresponding pseudocode:} \ \texttt{einsum}(bdjq, deq \rightarrow bej) \\
    &\text{For } \Phi^{(i):\bullet}_{(i,t)(t): \bullet} \cdot [Wb]^{(i,t)(t)}_j, \text{ with } [Wb]^{(i,t)(t)}_j \text{ of shape } [b, d, n_i] \ \text{and} \ \Phi^{(i):\bullet}_{(i,t)(t): \bullet} \text{ of shape } [d, e], \\
    &\quad \text{Corresponding pseudocode:} \ \texttt{einsum}(bdj, de \rightarrow bej) \\
    &\text{For } \Phi^{(i):\bullet}_{(i)(L,0):\bullet q} \cdot [bW]^{(i)(L,0)}_{jq}, \text{ with } [bW]^{(i)(L,0)}_{jq} \text{ of shape } [b, d, n_i, n_0] \ \text{and} \ \Phi^{(i):\bullet}_{(i)(L,0):\bullet q} \text{ of shape } [d, e, n_0], \\
    &\quad \text{Corresponding pseudocode:} \ \texttt{einsum}(bdjq, deq \rightarrow bej) \\
    &\text{For } \Phi^{(i):\bullet}_{(i):\bullet} \cdot [b]^{(i)}_j, \text{ with } [b]^{(i)}_j \text{ of shape } [b, d, n_i] \ \text{and} \ \Phi^{(i):\bullet}_{(i):\bullet} \text{ of shape } [d, e], \\
    &\quad \text{Corresponding pseudocode:} \ \texttt{einsum}(bdj, de \rightarrow bej)
\end{align*}

\subsection{Invariant Layers Pseudocode}

From the formula for the Invariant layer $I(U)$:
\begin{align*}
I(U) &= \sum_{p=1}^{n_L} \sum_{q=1}^{n_0} \Phi_{(L,0)(L,0):pq} \cdot [WW]^{(L,0)(L,0)}_{pq} + \sum_{p=1}^{n_L} \sum_{q=1}^{n_0} \Phi_{(L,0):pq} \cdot [W]^{(L,0)}_{pq} \\
&\quad + \sum_{L > s > 0} \sum_{p=1}^{n_s} \Phi_{(s,0)(L,s):\bullet \bullet} \cdot [WW]^{(s,0)(L,s)}_{pp} + \sum_{p=1}^{n_L} \sum_{q=1}^{n_0} \Phi_{(L)(L,0):pq} \cdot [bW]^{(L)(L,0)}_{pq} \\
&\quad + \sum_{L > t > 0} \sum_{p=1}^{n_L} \Phi_{(L,t)(t):p} \cdot [Wb]^{(L,t)(t)}_{p} + \sum_{L > t > 0} \sum_{p=1}^{n_t} \Phi_{(t)(L,t):\bullet \bullet} \cdot [bW]^{(t)(L,t)}_{pp} \\
&\quad + \sum_{p=1}^{n_L} \Phi_{(L):p} \cdot [b]^{(L)}_{p} + \Phi_1
\end{align*}

We define the pseudocode for each term:
\begin{align*}
    &\text{For } \Phi_{(L,0)(L,0):pq} \cdot [WW]^{(L,0)(L,0)}_{pq}, \\
    &\quad \text{with } [WW]^{(L,0)(L,0)}_{pq} \text{ of shape } [b, d, n_L, n_0] \ \text{and} \ \Phi_{(L,0)(L,0):pq} \text{ of shape } [d, e, n_L, n_0, d'], \\
    &\quad \text{Corresponding pseudocode:} \ \texttt{einsum}(bdpq, depqk \rightarrow bek) \\
    &\text{For } \Phi_{(L,0):pq} \cdot [W]^{(L,0)}_{pq}, \text{ with } [W]^{(L,0)}_{pq} \text{ of shape } [b, d, n_L, n_0] \ \text{and} \ \Phi_{(L,0):pq} \text{ of shape } [d, e, n_L, n_0, d'], \\
    &\quad \text{Corresponding pseudocode:} \ \texttt{einsum}(bdpqk, depqk \rightarrow bek) \\
    &\text{For } \Phi_{(s,0)(L,s):\bullet \bullet} \cdot [WW]^{(s,0)(L,s)}_{pp}, \\&\quad\text{ with } [WW]^{(s,0)(L,s)}_{pp} \text{ of shape } [b, d, n_s] \ \text{and} \ \Phi_{(s,0)(L,s):\bullet \bullet} \text{ of shape } [d, e, d'], \\
    &\quad \text{Corresponding pseudocode:} \ \texttt{einsum}(bdpk, dek \rightarrow bek) \\
    &\text{For } \Phi_{(L)(L,0):pq} \cdot [bW]^{(L)(L,0)}_{pq}, \\&\quad\text{ with } [bW]^{(L)(L,0)}_{pq} \text{ of shape } [b, d, n_L, n_0] \ \text{and} \ \Phi_{(L)(L,0):pq} \text{ of shape } [d, e, n_L, n_0, d'], \\
    &\quad \text{Corresponding pseudocode:} \ \texttt{einsum}(bdpqk, depqk \rightarrow bek) \\
    &\text{For } \Phi_{(L,t)(t):p} \cdot [Wb]^{(L,t)(t)}_{p}, \text{ with } [Wb]^{(L,t)(t)}_{p} \text{ of shape } [b, d, n_L] \ \text{and} \ \Phi_{(L,t)(t):p} \text{ of shape } [d, e, n_L, d'], \\
    &\quad \text{Corresponding pseudocode:} \ \texttt{einsum}(bdpk, deijk \rightarrow bek) \\
    &\text{For } \Phi_{(t)(L,t):\bullet \bullet} \cdot [bW]^{(t)(L,t)}_{pp}, \text{ with } [bW]^{(t)(L,t)}_{pp} \text{ of shape } [b, d, n_t] \ \text{and} \ \Phi_{(t)(L,t):\bullet \bullet} \text{ of shape } [d, e, d'], \\
    &\quad \text{Corresponding pseudocode:} \ \texttt{einsum}(bdpk, dek \rightarrow bek) \\
    &\text{For } \Phi_{(L):p} \cdot [b]^{(L)}_{p}, \text{ with } [b]^{(L)}_{p} \text{ of shape } [b, d, n_L] \ \text{and} \ \Phi_{(L):p} \text{ of shape } [d, e, n_L, d'], \\
    &\quad \text{Corresponding pseudocode:} \ \texttt{einsum}(bdpk, depk \rightarrow bek) \\
    &\text{For } \Phi_1 \text{of shape } [e, d'],  \\
    &\quad \text{Corresponding pseudocode:} \ \texttt{einsum}(ek \rightarrow ek).unsqueeze(0)
\end{align*}


\end{document}